\documentclass[11pt]{article}

\usepackage[english]{babel}
\usepackage[normalem]{ulem}
\usepackage{xcolor}

\usepackage{amsmath, amssymb, amsthm}

\usepackage{graphicx}
\usepackage{lineno}
\usepackage{algorithm2e}
\usepackage{textcomp} 
\usepackage[export]{adjustbox}
\usepackage{subcaption}
\usepackage{booktabs}
\usepackage{caption}
\usepackage{url}
\usepackage{nomencl}
\usepackage{multirow}
\usepackage{pifont}
\usepackage{xr-hyper}
\usepackage{csquotes}
\usepackage{setspace}
\usepackage{authblk}
\usepackage{pgf}
\usepackage{adjustbox}
\usepackage[percent]{overpic}
\usepackage[most]{tcolorbox}
\usepackage{booktabs}

\usepackage{hyperref}
\hypersetup{colorlinks=true, linkcolor=blue, citecolor=blue, urlcolor=blue, linkcolor=blue}

\newcommand{\cA}{\mathcal{A}}

\newcommand{\cC}{\mathcal{C}}
\newcommand{\cD}{\mathcal{D}}
\newcommand{\cE}{\mathcal{E}}
\newcommand{\cF}{\mathcal{F}}

\newcommand{\cI}{\mathcal{I}}
\newcommand{\cJ}{\mathcal{J}}

\newcommand{\cN}{\mathcal{N}}

\newcommand{\cP}{\mathcal{P}}

\newcommand{\cR}{\mathcal{R}}
\newcommand{\cS}{\mathcal{S}}

\newcommand{\cV}{\mathcal{V}}
\newcommand{\cW}{\mathcal{W}}
\newcommand{\cX}{\mathcal{X}}

\newcommand{\Iplus}{\mathcal{I}_{>\tau}}

\newcommand{\E}{\mathbb{E}}

\newcommand{\ess}{\mathrm{ESS}}

\DeclareMathOperator*{\argmin}{arg\ min}
\DeclareMathOperator*{\Bern}{Bernoulli}

\DeclareMathOperator*{\Leb}{Leb}

\newtheorem{lemma}{Lemma}[section]
\newtheorem{proposition}[lemma]{Proposition}

\newtheorem{remark}[lemma]{Remark}

\newtcolorbox{siframe}[1][]{
  breakable,
  enhanced,
  colback=white,
  colframe=black,
  boxrule=0.5pt,
  arc=0pt,
  outer arc=0pt,
  left=10pt, right=10pt, top=10pt, bottom=10pt,
  #1
}

\usepackage[
    backend=biber,
    style=numeric-comp,
    maxcitenames=2,
    maxbibnames=12,
    sorting=none
]{biblatex}

\makenomenclature

\usepackage[font=small]{caption}
\usepackage[margin=1in]{geometry}

\title{Nearest Neighbor Functional Curve Completion for Intraday Probabilistic Revisions of Renewable Generation Forecasts}

\date{\today}

\author[1,2,*]{Guillermo Terr\'en-Serrano}
\author[3]{Michael Ludkovski}

\affil[1]{\small{Environmental Markets Lab (emLab), University of California Santa Barbara, Santa Barbara, CA, USA.}}
\affil[2]{\small{Environmental Studies Program, University of California Santa Barbara, Santa Barbara, CA, USA.}}
\affil[3]{\small{Statistics and Applied Probability, University of California Santa Barbara, Santa Barbara, CA, USA.}}
\affil[*]{\small{Correspondence: \url{guillermoterren@ucsb.edu}}}

\begin{document}
\addtocontents{toc}{\protect\setcounter{tocdepth}{-10}}
\maketitle






\begin{abstract}
We propose a nonparametric approach that fuses deterministic initial forecasts and a partial realization into a  probabilistic  forecast using the framework of functional \emph{curve} completion. Our motivation is to construct  intra-day forecasts for renewable energy generation which is a key operational task in modern power grids. The aim is to combine the day-ahead (DA) forecast and the morning realized generation profile to create trajectories for the rest of the day, quantifying operational uncertainty. Starting with a large database of complete forecast-actuals pairs across many assets, we adopt a functional point of view and construct a weighted neighborhood based on similarity metrics for the observed partial curve, the DA forecast, the day and the geographical  location. The selected complete nearest-neighbor curves provide a full predictive distribution, including functional focal envelopes and predictive bands based on functional depth. The hyperparameters governing neighborhood size, distances, and fusion rates are tuned across forecast update times and directly target probabilistic calibration in terms of predictive probability integral transform. Three case studies using synthetic wind and solar assets, as well as real-life regional wind generation showcase that our algorithm is able to achieve excellent calibration.
\end{abstract}

\textbf{Keywords}---\textit{Functional Completion; Functional Nearest Neighbors;  Intra-day  Renewable Forecasting; Probabilistic Forecast Revision.}

\section{Introduction}





We consider the task of  dynamically revising Variable Renewable Energy (VRE) forecasts in power grids \cite{messner2019online}. Our motivation is to convert prior Day-Ahead (DA) forecasts into \emph{intra-day probabilistic} versions as realized data arrives \cite{uniejewski2019understanding}. The forecasting task we tackle is distinguished by the fusion of the original forecast \emph{curve} and the partial realization of the daily generation profile, embedding the problem into the framework of functional completion \cite{liebl2019partially}. 

A DA forecast is typically issued in the evening (e.g.,~6pm) of the day preceding the operating day and covers the full 24-hour horizon of that day, from midnight to midnight \cite{weron2014electricity}. The initial DA forecast forms the backbone of power grid operations, driving the security-constrained unit commitment that provides the baseline dispatch schedules for the committed units on the following day. As the operational day unfolds, the system operator then undertakes Security-Constrained Economic Dispatch (SCED), managing any discrepancies between forecasts and actuals. While traditional SCED is done on a limited look-ahead basis (typically 2-4 hours horizon), the increased uncertainty due to high level of renewable generation (both grid-scale and behind-the-meter) and the multi-hour ramp constraints of peaking units is raising the importance of ongoing intra-day planning, which in turn requires intra-day probabilistic forecasts \emph{for the remainder of the day}.

This setting is different from the time-series setup where predictions are generally sought just over the next few time-steps (or a narrow predictive window) and history is viewed through the Markovian lens. In contrast, we operate in the functional domain, mapping partial \emph{curves} to future complete \emph{curves}. Moreover, given the numerous nuances of VRE generation profiles, we eschew generative auto-regressive approaches and instead advocate for a non-parametric framework where a data-driven fully probabilistic functional forecast is constructed by leveraging a historical database of full curves. 



We construct well-calibrated probabilistic  intra-day asset-level revisions solely from the original DA forecasts and partial observations. The resulting functional envelopes achieve excellent coverage as well as PIT metrics, indicating improved sharpness. We demonstrate that assimilating partially observed curves significantly improves accuracy relative to either bootstrapped DA forecasts or ARIMA models. Our method
generalizes effectively across different VRE resources and aggregation levels and offers a new tool for system operator needing to do reserves allocation and real-time energy dispatch.


Zooming out, our method is applicable in generic spatio-temporal problems where a partial curve requires a probabilistic completion. With that broader lens, we use a contextual database of curves to find a functional neighborhood of a target partial curve, where nearest neighbors are based on \emph{labels} associated with each curve. In our case the ``curve'' is a bivariate (forecast, actual) profile. Other domains where our method could be applicable are financial settings (such as predicting trading volume for the trading day given the volume in the morning \cite{shang2022dynamic, wang2026functional}, based on longer-term covariates such as recent market trends), epidemiology (predicting future case counts based on partially complete case counts of a repeated outbreak \cite{bracher2022endemic}, such as seasonal flu, given public health variables), environmental data (counting species across their annual migration window \cite{vandoren2018birdcast}), etc.

The rest of the paper is organized as follows. Section \ref{sec:preview} provides a high level overview of our method and the resulting intra-day probabilistic forecasts, illustrating the outputs of our algorithms and their practical use cases. Section \ref{sec:method} defines our methodology and  
the scoring metrics we use assess the performance of our forecasts and fine-tune them. Section \ref{sec:results} presents the results of our experiments on (i) synthetic wind assets; (ii) synthetic solar assets and (iii) real-life ERCOT zonal wind data. Section \ref{sec:conclude} concludes.

\section{Intra-day Probabilistic Forecasts}\label{sec:preview}

As an example of a concrete setting from present-day markets, the Electric Reliability Council of Texas (ERCOT) operates the Day-Ahead Market (DAM) and the Real-Time Market (RTM) for energy and Ancillary Services. 
The DAM closes at 10:00am on the day before the operating day, and ERCOT relies on DA forecasts of VRE generation and demand based on numerical weather prediction to clear the market \cite{ERCOT_Protocols_Section4}. During the operating day, ERCOT settles the RTM in 15-minute intervals \cite{ERCOT_Protocols_Section6} and executes a SCED every 5 minutes \cite{ERCOT_Protocols_Section11}, using telemetry and short-term load forecasts. In addition, ERCOT performs the Hourly Reliability Unit Commitment (HRUC) to ensure sufficient online capacity, which uses updated load and VRE forecasts to commit additional resources when needed. Resources allocated through HRUC receive compensation through out-of-market uplift payments. For the real-time market operation, a new 24-hour forecast is issued with a 2~hour lead time. 

\begin{remark}
    \label{rem:generalization}
    One may generalize our setup to consider an $\ell$-lead hours forecast for a forecast window $\{\ell, \ldots, \ell+T\}$ of $T$ hours. The most common DA forecast has $\ell=6$ and $T=24$. Presently, ERCOT publishes hourly-updated forecasts for the next $\ell+T=168$ hours, so for example one could consider $\ell=12$ and $T=48$ i.e., noon-stamped forecasts for the following 2 days that are utilized for multi-day reserve and capacity planning. 
\end{remark}

\begin{figure*}[htb!]
    \centering
    \begin{overpic}[width=.75\textwidth, trim = {0cm, 0cm, 0cm, 0cm}]{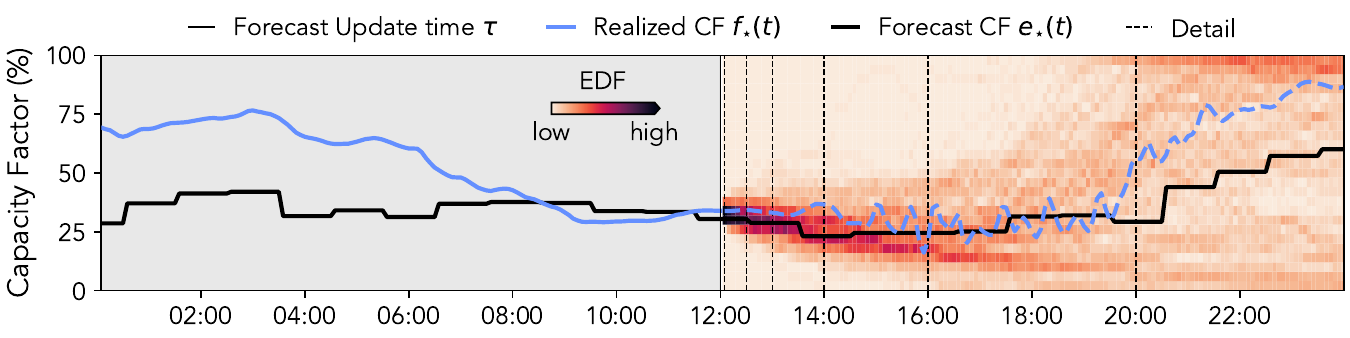}
    \end{overpic}
    \begin{overpic}[width=.7725\textwidth, trim = {-.25cm, 0cm, 0cm, 0cm}]{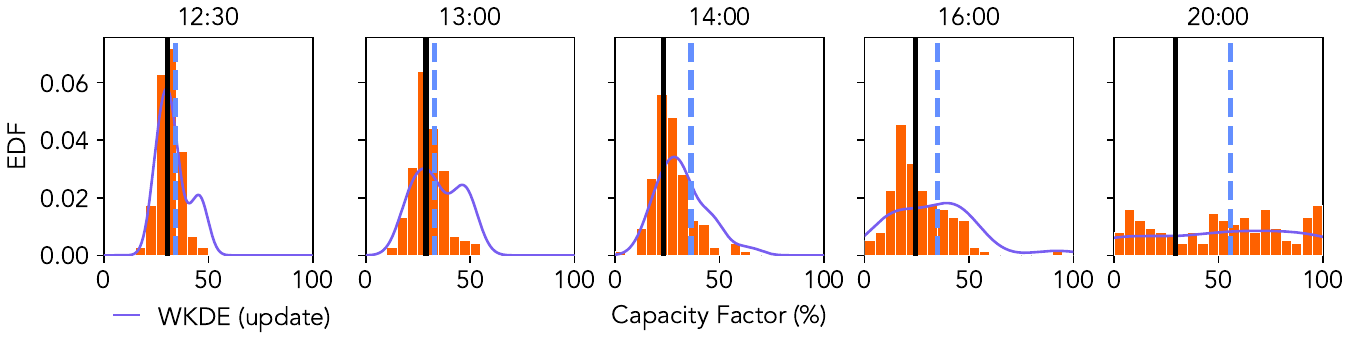}
    \end{overpic}
    \caption{\textbf{Intraday functional probabilistic forecasting.} Wind generation at the Blanco Canyon Wind asset on June 26. The plots show probabilistic forecasts for the afternoon given the realized capacity factors (CF) from 12am until the forecast update time $\tau=12$pm. (a) Heatmap for the predictive empirical density function, darker colors correspond to higher probabilities. The DA forecast is in black, the morning observations are in blue, and the afternoon (unseen from forecast update time $\tau$) generation is in dashed blue. (b) histograms and KDE estimates of the marginal predictive distributions at 12:30pm, 1pm, 2pm, 4pm, and 8pm (these times are indicated by the dashed lines in the top panel). The vertical black and blue lines indicate the DA forecast and the realized generation at that time.}
    \label{fig:fcst_illustration}
\end{figure*}

To preview our approach for integrating the actual observations from the partially completed day with the DA forecast,
Figure~\ref{fig:fcst_illustration} illustrates the proposed intra-day probabilistic forecasting for the remaining hours. We consider an ERCOT wind asset with a realized generation trajectory over the first 12 hours of the day shown by the blue curve. Here and for the remainder of the paper, all generation is expressed in terms of capacity factors (CF), so all quantities are in $[0\%,100\%]$. The respective hour-scale DA forecast (released at 6pm the day before) is shown as the black stepped line. For this particular day, the DA forecast projected medium CFs throughout the day. As it happened, realized generation $f_\star(t)$ was higher during the night and morning and then largely matched the forecast during 8am-12pm, getting to $f_\star (\tau) \approx 30\%$ at 12pm. The proposed algorithm fuses this partially realized curve, the DA forecast, as well as fully realized curves from similar days/assets to output the likely generation profiles over the remaining 12 hours of the day. The orange heatmap visualizes the forecasted densities across time, which come from a weighted ensemble of underlying potential trajectories. These trajectories are our building blocks and present a non-parametric, fully probabilistic, functional, dynamically constructed forecast for the tail portion of the day. 

The resulting mean forecast is shown as the dashed blue line while the histograms in the second row of Figure~\ref{fig:fcst_illustration} present the forecast densities at select future timestamps. Additional curve-based analysis such as functional prediction bands is presented in sections below.

The above forecasting method blends the information from the morning (the blue curve) and from the DA forecast in a non-parametric way. 
As expected, there is higher forecasting precision for the early afternoon (the respective histograms are peaked), and much more uncertainty about the evening, with the marginal predictive distribution at 8pm almost uniform. 


The ensembled curves $f_\star(t)$ are obtained through a Nearest Neighbor filtering procedure that we developed. They all start at the \emph{forecast update time} $\tau$ and cover the remaining 12 hours $s \in [\tau, T]$. The neighborhood is based on defining a similarity metric across forecasts (and similarly but distinctly across realized partial curves), and then ranking curves in the database.

To achieve a forecast revision that is plausible to complete the partial $f_\star$, the similarity metric strongly concentrates on keeping $|f_\star - f|$ very small around $\tau$, witness the ``garden hose squeeze'' in Figure~\ref{fig:fcst_illustration} which creates the desired uncertainty cone. The similarity metric also takes into account the weighted $L^2$ distance of $e_\star$ to $e$ across the entire day. In order to use ``apples'' to forecast ``apples'', we  furthermore takes into account the season and the geographical location of potentially ``neighboring'' asset trajectories. Lastly, to achieve the mean-reversion effect vis-a-vis $e_\star$, we gradually blend each afternoon $f$ with its respective DA forecast $e$.

A major benefit of our method is that it intrinsically respects the $[0,1]$ constraints of CFs and the tendency of wind generation to be persistent at extreme values, such as frequent occurrences of zero generation across many hours, as seen in Figure~\ref{fig:fcst_illustration}~and~\ref{fig:functional_neighbors}. Statistical capture of such multivariate dependence (which is across the joint $(f,e)$ pairs) is known to be extremely difficult. Indeed, while some of the marginal histograms in the second row of Figure~\ref{fig:fcst_illustration} look Gaussian (e.g., at $s=1$pm), many of them are not, and directly obtaining the skewed, zero-inflated distribution at $s=4$pm is intractable in a machine learning setup.  Because we directly employ actual (forecast, realized generation) curves, our predictions are fully data-driven and embed all such nuances.

\paragraph{Applications.}

An intra-day probabilistic forecast updating to quantify asset-level generation uncertainty has multiple applications in power systems, benefiting grid operators, asset owners, energy traders, and transmission operators. Grid operators balance supply and demand by adjusting the energy dispatch and reserve requirements in real-time with intra-day asset-level forecasts \cite{spyrou2022flexibility, spyrou2020value}. Asset owners can minimize imbalance penalties by reducing deviations from their committed DA schedules and optimizing their revenue strategy in both balancing energy and ancillary services markets \cite{munoz2023online,aung2024optimal}. For hybrid asset owners that combine solar or wind generation with co-located storage systems, an intra-day trajectory forecast can support them in planning an optimal energy storage strategy (charge/discharge) to smooth out output variability and exploit price arbitrage opportunities in volatile markets \cite{hodge2018combined,yuan2024hybrid,aung2026battery}, whose intra-day price dynamics are themselves an active modeling target \cite{hirsch2024simulation}. Energy traders can use these forecasts to hedge exposure to uncertainty \cite{hu2021effects} and rebid dynamically to adjust their positions in the RTM \cite{xiao2023integrated}, where probabilistic  intra-day forecasts have been shown to improve trading strategies \cite{hirsch2026probabilistic}. Transmission operators utilize intra-day forecasts to avoid congestion \cite{frysztacki2020modeling} and reduce curtailment by implementing grid flexibility measures \cite{li2020stochastic}. These applications can improve operational efficiency to lower overall production costs by enabling market participants and system operators to make better decisions,  reducing balancing costs, and ultimately improving reliability. In this context, Functional Data Analysis (FDA) provides a theoretical framework to accomplish this purpose \cite{gonzalez2017forecasting}. Eyeing a general methodology, we consider the statistical task of forecast fusion, which is agnostic to the underlying uncertainty source and hence can be equally well applied to solar generation, wind generation, load, etc.

\paragraph{Improving Reliability Unit Commitment.}

Operators are increasingly pivoting to out-of-market interventions to maintain system reliability. In ERCOT, the costs associated with HRUC uplift payments have risen significantly, reaching a peak in 2022 \cite{PotomacEconomics2024}. The growing dependence on uplift payments underscores the need for market mechanisms to adopt a dynamic and adaptive probabilistic framework to manage uncertainty, particularly under extreme tail risks \cite{NERC2024TailRisk}. Much of this uncertainty arises from forecasting errors in load and variable renewable energy (VRE) generation. Therefore, moving away from deterministic intra-day forecasts and using probability distributions for asset-level outcomes would enable stochastic HRUC commitment decisions to better reflect reliability risks at the extremes.


\paragraph{Related Literature.}

Probabilistic forecasting is a well-established topic in power markets \cite{terren2026probabilistic}, with a particularly extensive literature on short-term and intra-day electricity \emph{price} forecasts (EPF) \cite{oconnor2026evolution} and load forecasts \cite{hong2016probabilistic}. Probabilistic forecasting for renewable generation is less studied, and primarily focuses on spatio-temporal methods \cite{li2025probabilistic, arrieta2022spatio}.  While some of the techniques, such as ensemble methods \cite{narajewski2020ensemble}, DA forecast fusion \cite{serafin2027data} and trajectory simulations \cite{puc2026scenario} are similar to our setup, prices form a low-dimensional object and lack the heterogeneous labeling by underlying asset that is a fundamental feature of our model (in tandem, our setting lacks the external dynamic regressors available for EPF, all our labels like asset location are static). The concept of similarity-based selection, introduced for power markets as Pattern Sequence Similarity \cite{alvarez2010energy} is therefore typically restricted to similar day methods \cite{borunda2026advances} or nearest-neighbor marginal forecasting \cite{zhang2016k}. We moreover contrast our model-free paradigm with time-series techniques or generative neural network models \cite{chen2025probabilistic}. 


Closest to us are functional frameworks \cite{shang2011nonparametric, shang2019intraday, elias2022} that approach sequence-to-sequence conditional curve completion using nonparametric techniques \cite{burba2009k, sun2011functional}. In particular, El\'ias et al.~\cite{elias2022} have shown the benefit of using functional depth concepts for electricity load forecasting, while  \cite{terren2025extreme} employed functional depth to identify the day-ahead scenarios that drive operational risk on a realistic grid model. In contrast, the functional Hilbertian ARMAX approach of \cite{gonzalez2017forecasting} cannot complete partial curves, and the parametric functional time series method of \cite{shang2013functional} updates principal component scores as the day progresses but requires re-estimation for each asset. 


\section{Methods}\label{sec:method}

\paragraph{Functional Data.}

We consider a set of historical functional curves of $\cV$-generation, such as $\cV=\cS$ (solar) or $\cV=\cW$ wind generation. In our dataset, these curves represent daily observations made at a 5-minute temporal resolution (i.e.,~of length $T = 24 \times 12 = 288$). We treat these curves as functional elements over the domain $\cI \in \{1,\ldots,T\}$ corresponding to an operational day (0:05 to 0:00) and which are indexed by respective assets $\ell$ and days $d$. The overall collection is denoted $\cF^{(\cV)} = \left\{ f^{(\cV)}_{\ell,d} \in \cC (\cI)~\middle|~ d = 1, \dots, D, \ell=1,\ldots, L^{(\cV)} \right\}$ and consists of $D$ days and $L^{(\cV)}$ assets. 


In parallel, we also have the collection  $\cE^{(\cV)} = \left\{ e^{(\cV)}_{d, \ell} \in \cC (\cI)\middle|~ d = 1, \dots, D, \ell=1,\ldots, L^{(\cV)} \right\}$ of matched historical CF DA forecasts for the same $D$ operational days and assets $\ell$, recorded at 1-hour frequency in an operational day (1am to 12am). We upsample (as piecewise-constant interpolation) these DA forecasts to the same intervals as the realized observations (0:05 to 0:00).

We flatten $\cF = \left\{ f_n \in \cC (\cI) : n = 1, \ldots, N \right\}$ and $\cE = \left\{ e_n \in \cC (\cI) : n = 1, \ldots, N \right\}$ into collections of $N = D \times L$ curves, and omit the superscript $\cV$. Additionally, we track the day of the year $d$ via the set $\cD =\left\{ d_n \in \{1,2,\ldots, 365\}~\middle|~ n = 1, \ldots, N \right\}$ and the real-valued geographic coordinates $\cX = \left\{ \left(x_1, x_2\right)_n~\middle|~ n = 1, \ldots, N \right\}$ with $ x_1 \in [-180, 180]$ (longitude) and $x_2 \in [-90, 90]$ (latitude). Together, $(d_n, x_{1,n}, x_{2,n})$ form the \emph{label} of the respective (forecast, actuals) pair. 

\paragraph{Problem.} 

Our main interest is in completing a CF curve $\{f_\star (t) : t \le \tau\}$ that is observed up to the forecast update time $\tau$, to probabilistically extend it to the remainder of the day $\cI_{>\tau}= [\tau,T]$. The main idea is to construct a nonparametric neighborhood based on similar $f_\ell \in \cC \left( \cI \right)$, blended with the respective DA forecasts $e_{d,\ell}$.

\paragraph{Data.} 

We utilize the 2030 Texas-7k grid model which is an open-source synthetic 6,717-bus grid model that covers the geographic footprint of the ERCOT \cite{birchfield2016grid}. Texas-7k contains 187 wind and 211 solar asset locations, corresponding to a mix of existing physical and proposed plant sites from the 2021 National Laboratory of the Rockies (NLR) PERFORM dataset~\cite{bracher2021}. 

The generation time series for solar and wind assets cover calendar years 2017 and 2018. Actuals have 5-min resolution and were simulated from the Weather Research and Forecasting (WRF) model for wind, and the National Solar Radiation Database for solar, using re-analysis of historical weather. DA forecasts are issued with an 11-hour lead time and have hourly resolution. They are derived from the European Center for Medium-Range Weather Forecasts (ECMWF) 51-member ensemble. We normalize each asset's forecasts and actuals by its nameplate capacity, so that all generation capacity factors (CF) take values in $[0,1]$.


\subsection{Spatiotemporal Asset Similarity}

The forecasting problem takes in the partial CF curve $f_\star (s), s \le \tau$ observed up to forecast update time $\tau$, and the DA forecast $e_\star \in \cC \left( \cI \right)$ for the full operational day $d_\star$ at a given asset with coordinates $\mathbf{x}_\star$. We aim to probabilistically complete  $f_\star \in \cC \left( \cI_{<\tau} \right)$ from $\tau$ until $T$.

\begin{figure*}[htb!]
    \centering
    \begin{minipage}{.48\textwidth} 
        \centering
        \begin{overpic}[width=1.1\textwidth, trim = {0cm, 0cm, 0cm, 0cm}, clip]{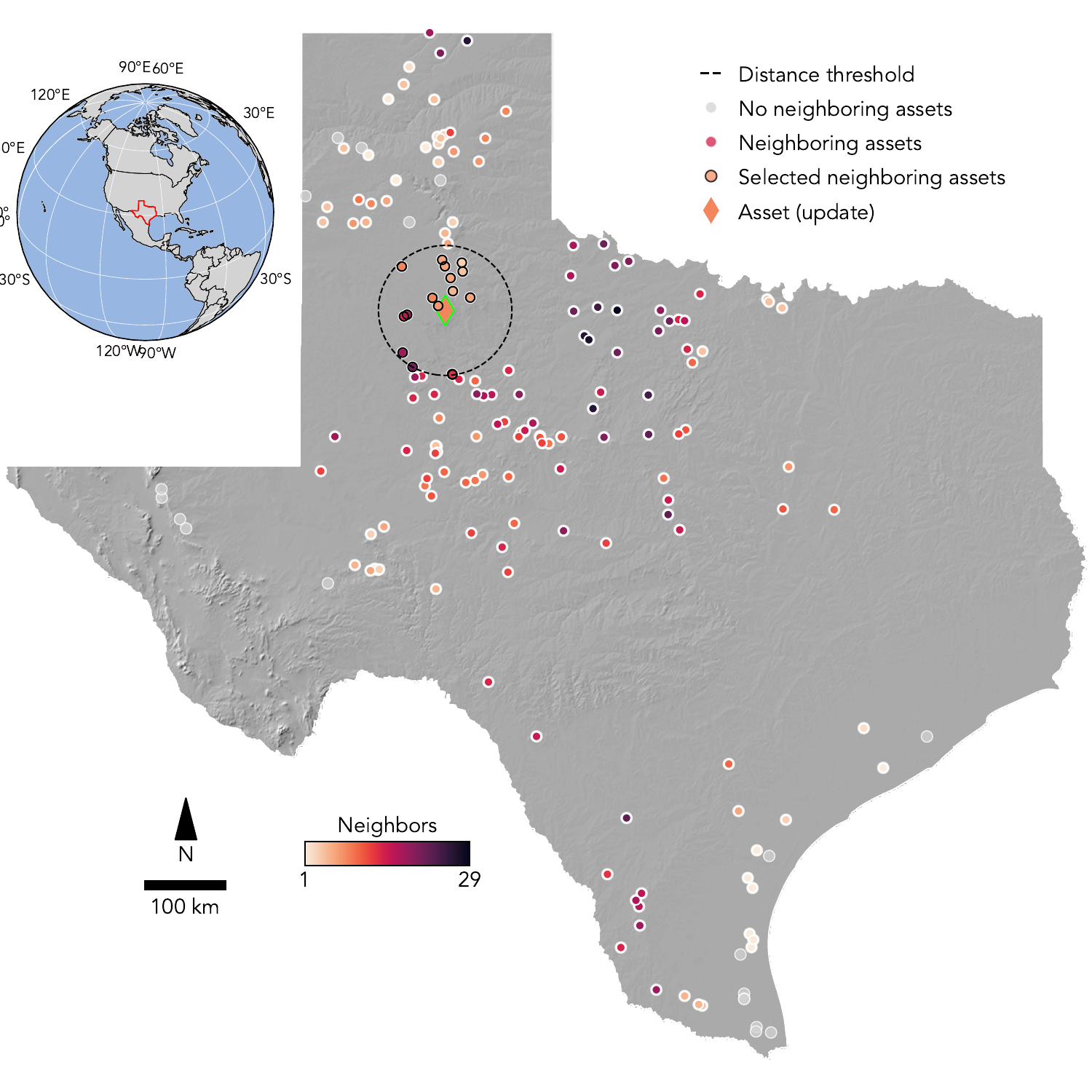}
        \end{overpic}
    \end{minipage}
    \begin{minipage}{.5\textwidth} 
        \centering
        \begin{overpic}[width=.75\textwidth, trim = {0cm, 0cm, 0cm, 0cm}, clip]{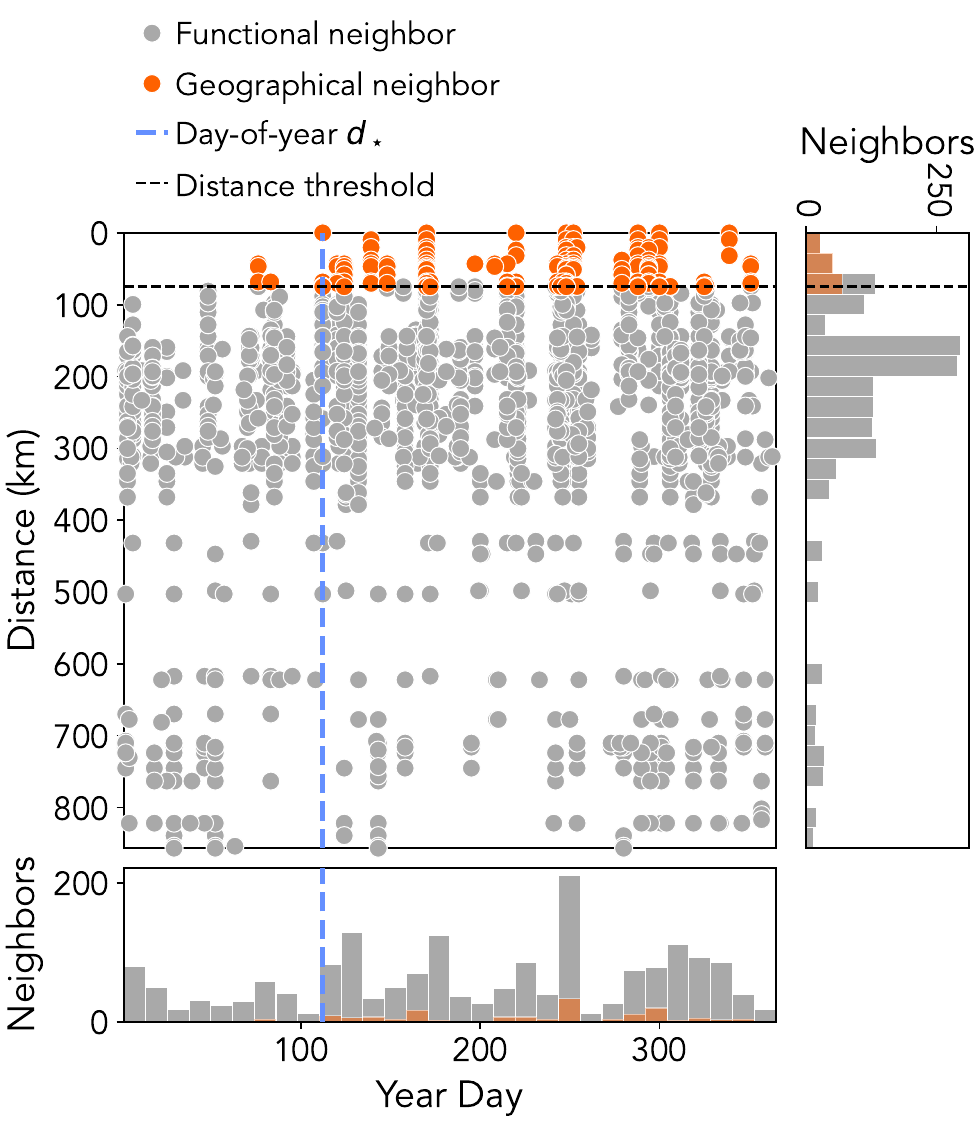}
        \end{overpic}
    \end{minipage}
    \caption{\textbf{Mapping functional neighbors.} Neighboring curves selected for Blanco Canyon Wind (marked with a bright green marker) on Apr 23 at $\tau=2$pm (left panel). Gray markers represent assets without any similar curves to $(f_\star, e_\star)$. Shaded markers with a white/black edge represent assets with $(f_\star, e_\star)$-neighbors before/after applying the distance filter. The Haversine distance threshold, defined by the most distant functional neighbor, is the dashed black circle. \emph{Right panel:} Distribution of $(f_\star, e_\star)$-neighbors in the larger $\kappa_0$-neighborhood by month (x-axis) and by distance to Blanco Canyon Wind (y-axis, black dashed line shows the distance cutoff). In both panels, light/dark colors represent low/high frequency.}
    \label{fig:spatio-temporal-wind-ercot}
\end{figure*}



\paragraph{Similarity Metric.} 


Consider a candidate $(f_n, e_n) \in \cF \times \cE$. To construct a similarity measure between $f_\star$ and $f_n$, and between $e_\star$ and $e_n$ we first use distances, namely the $L_2$ distance between the respective curves. We implement exponential weighting whereby time-points closer to $\tau$ are given higher weight to ensure that similar curves are close to each other at the present. The $f$-weights are exponential 
with decay parameter $\varepsilon_f \geq 0$ dubbed the \emph{forgetting rate}, 
\begin{equation}
    \label{eq:partial_curve_dist}
    \left\| f_\star - f_n \right\|_{\varepsilon_f} := \left[ \frac{1}{\sum_{t \in \mathcal{I}_{\le\tau}} \exp \left(-\varepsilon_f (\tau - t)\right)} \sum_{t \in \mathcal{I}_{\le\tau}} \exp \left( -\varepsilon_f (\tau - t) \right) \left( f_\star (t) - f_n(t) \right)^2 \right]^{1/2}.
\end{equation}
When $\varepsilon_f = 0$, $\|\cdot\|_0$ reduces to the Euclidean distance, while as $\varepsilon_f \to \infty$ the weight concentrates on the most recent observation and $\| f_\star - f_n \|_{\varepsilon_f} \to \left| f_\star(\tau) - f_n(\tau) \right|$.

Similarly, we use an exponentially weighted $L^2$-distance between the DA forecast $e_\star$ and $e_n$ (which both live on the full time domain $\cI$),
\begin{equation}
    \label{eq:DA_forecast_dist}
    \left \| e_\star - e_n \right\|_{\varepsilon_{e^-}, \varepsilon_{e^+}} :=   \left[ \frac{\sum_{t \in \cI} \phi_{\varepsilon_{e^-},\varepsilon_{e^+}}(t, \tau)  \left( e_\star(t) - e_n(t) \right)^2}{\sum_{t \in \cI} \phi_{\varepsilon_{e^-},\varepsilon_{e^+}}(t, \tau)}
\right]^{1/2}.
\end{equation}
where the weight function $\phi_{\varepsilon_{e^-}, \varepsilon_{e^+}} \left(t, \tau \right)$ is piecewise defined as a double exponential centered at $\tau$:
\begin{equation}
    \label{eq:DA_forecast_decay}
    \phi_{\varepsilon_{e^-}, \varepsilon_{e^+}} \left(t, \tau \right) = \begin{cases}
        \exp \left( \varepsilon_{e^-}(t-\tau) \right) 
        & t < \tau \\
        \exp \left( \varepsilon_{e^+} (\tau - t) \right) & t \geq \tau, 
    \end{cases}
\end{equation}
with \emph{past decay rate} $\varepsilon_{e^-}$ and \emph{future decay rate} $\varepsilon_{e^+} \geq 0$. This reduces to $\|\cdot\|_0$ when $\varepsilon_{e^-} = \varepsilon_{e^+} = 0$, and concentrates on the DA forecast at $\tau$ as $\varepsilon_{e^-}, \varepsilon_{e^+} \rightarrow \infty$. The two functional distances are mapped into similarity measures through an exponential link function,
\begin{equation}
    \label{eq:exp_link}
    \varphi_\lambda \left( r \right) = \exp \left( -\lambda r \right),
\end{equation}
with a channel-specific \emph{similarity weight} $\lambda > 0$. Applying Eq.~\eqref{eq:exp_link} to the distances of Eqs.~\eqref{eq:partial_curve_dist} and \eqref{eq:DA_forecast_dist} yields the functional (i.e., channel) similarities
\begin{equation}
    \label{eq:f_e_similarity}
    \omega^f_n \left( f_\star; \;\mathcal{I}_{\le \tau} \right) := \varphi_{\lambda_f} \left( \left\| f_\star - f_{n} \right\|_{\varepsilon_f} \right), \qquad
    \omega^e_n \left( e_\star; \mathcal{I} \right) := \varphi_{\lambda_e} \left( \left\| e_\star - e_{n} \right\|_{\varepsilon_{e^-}, \varepsilon_{e^+}} \right),
\end{equation}
which take values in $(0, 1]$ and attain $1$ when the target and the historical curve coincide over the respective domain. Thus $\lambda_f$ and $\lambda_e$ control the trade-off between giving more importance to similarity between observed curves $f_n$ and $f_\star$ versus similarity between unrevised DA forecasts $e_n$ and $e_\star$ when selecting neighbors. The more accurate DA forecasts are, the large $\lambda_e$ should be; if DA forecasts are non-informative then $\lambda_e \ll \lambda_f$ is preferable, relying instead on similarity to the observed partial curves.



Seasonal variability implies that we want to take into account $d_\star, d_n$: it is unlikely that a realized profile from January would be informative for a June intra-day forecast \cite{mayes2015}. This temporal effect is very strong for solar generation that is affected by length-of-sunlight but is also substantial  for wind generation. 
%
The seasonal weather variability is fundamentally driven by the annual cycle of solar radiation associated with the Earth's orbital motions and axial tilt \cite{hartmann2015global}. Around equinoxes, day length and solar incidence angles are similar between spring and fall, while summer and winter solstices are maximally far from each other. To preserve this spring-fall symmetry and winter-summer asymmetry, we use the solar declination angle, 
\begin{equation}\label{eq:declination}
    S(d) = \sin\left(\frac{2\pi}{365}(d - 81)\right),
\end{equation}
a periodic function of $d$ that vanishes at $d_{spring} = 81, d_{fall} = 265$ equinoxes, and attains its maximum/minimum at the summer/winter solstice $S(d_{summer} = 172) \approx 1, S(d_{winter} = 355) \approx -1$, respectively. The day-of-year distance between the day $d_\star$ corresponding to $f_\star$ and day $d_n$ is
\begin{equation}
    \label{eq:elevation_dist}
    \left \| d_\star - d_n \right\|_{S} := \left[ \left( S(d_\star) - S(d_n) \right)^2 \right]^{1/2},
\end{equation}
which is small whenever the two days share a similar solar elevation profile (Figure~\ref{fig:SED}). We define the similarity metric
\begin{equation}
    \label{eq:d_similarity}
    \omega_n^d \left(d_\star \right) := \varphi_{\lambda_d} \left( \left\| d_\star - d_n \right\|_{S} \right),
\end{equation}
utilizing seasonal \emph{similarity weight} $\lambda_d > 0$.



\paragraph{Geographical filtering.} Renewable generation is driven by meteorological patterns in the vicinity of the target asset. Consequently, it generally does not make sense to count completed curves from faraway assets as neighbors. For instance, the wind generators in the Texas-7k grid model are primarily concentrated in the northwest corner of Texas (corresponding to Panhandle, West, North and Central zones of ERCOT), which has very different wind patterns compared to the coastal plains along the Gulf, see the left panel of Figure~\ref{fig:spatio-temporal-wind-ercot}. 

To measure geographic distance,  we employ the Haversine metric on a sphere  (in usual lon-lat coordinates), indicated by the circle in the left panel of Figure~\ref{fig:spatio-temporal-wind-ercot}. The Haversine distance between the location of target asset $\mathbf{x}_\star$ and an asset at location $\mathbf{x}_n$ is
\begin{equation}
     \left \|  \mathbf{x}_\star - \mathbf{x}_n\right\|_{H} := 2 R \arcsin{\left( \frac{1 - \cos \left( x_{2, n} - x_{2,\star} \right) + \cos x_{2, n} \cos x_{2,\star} \left( 1 - \cos \left(x_{1, n} - x_{1, \star} \right) \right)}{2} \right)^{1/2}},
\end{equation}
where $R = 6,371$~km is the Earth radius. 



\paragraph{Aggregate Similarity and Nearest Neighborhood Construction.}
We combine these three distances into a single similarity measure based on the weighted Chebyshev ($\text{L}_\infty$) combination
\begin{equation}
    \label{eq:omega-weights}
    \omega^{fed}_n \left(f_\star, e_\star, d_\star \right) := \min \left[ \omega^f_n \left( f_\star; \mathcal{I}_{\le \tau} \right), \omega^e_n \left( e_\star; \mathcal{I} \right), \omega_n^d \left(d_\star \right) \right].
\end{equation}
The ranking based on $\omega^{fed}$ similarities is used to define nearest neighbors: sort 
    \begin{equation}
       \omega^{fed}_{(1)} \left( f_\star, e_\star, d_\star \right) \ge \omega^{fed}_{(2)} \left( f_\star, e_\star, d_\star\right) \ge \ldots \ge \omega^{fed}_{(\kappa_0)} \left( f_\star, e_\star, d_\star  \right)
    \end{equation}
and identify the set $\mathcal{N}_{\kappa_0}$ of $\kappa_0$ nearest neighbors of $(f_\star, e_\star, d_\star)$. We next show that this is equivalent to a nearest-neighbor rule in a single composite dissimilarity. Define
\begin{equation}
    \label{eq:composite}
    r_n = \max\bigl\{\lambda_f \left\| f_\star - f_n \right\|_{\varepsilon_f}, \lambda_e \left\| e_\star - e_n \right\|_{\varepsilon_{e^-}, \varepsilon_{e^+}}, \lambda_d \left\| d_\star - d_n \right\|_{S}\bigr\}.
\end{equation}

Then $\omega^{fed}_n = \exp(-r_n)$, so ranking by decreasing $\omega^{fed}_n$ is equivalent (due to strict monotonicity of the exponential function) to ranking by increasing $r_n$. Hence, under the min-combination in Eq.~\eqref{eq:omega-weights}, $\mathcal{N}_{\kappa_0}$ is the set of $\kappa_0$ nearest neighbors of $(f_\star, e_\star, d_\star)$ under the weighted Chebyshev combination in Eq.~\eqref{eq:composite}: a curve is admitted only if it is close to the target in \emph{every} channel simultaneously, with the $\lambda$'s setting the exchange rates between channels. 
We evaluate the rankings
and normalized weights in the log-domain directly via $r_n$, which is immune to the floating-point underflow of $\exp(-\lambda r)$ at large similarity weights.

As a final step, we sub-select $\kappa$ nearest neighbors out of the $\kappa_0$ above, re-ranking $\mathcal{N}_{\kappa_0}$ according to  the Haversine distance $\|\cdot\|_H$:
    \begin{equation*}
        \| \mathbf{x}_\star - \mathbf{x}_{(1)}\|_H \le \| \mathbf{x}_\star - \mathbf{x}_{(2)} \|_H \le \ldots \le \| \mathbf{x}_\star - \mathbf{x}_{(\kappa)} \|_H.
    \end{equation*}
Ties are broken based on highest similarity $\omega^{fed}_{(1)} \left( f_\star, e_\star, d_\star \right)$. The motivation is that hard geographic partitioning of the dataset creates many edge cases (e.g., the few assets that are in Central Texas, complex topography along the edge of the Llano Estacado, etc.) and therefore we opt for a data-driven nearest-neighbors similarity that operates with the full dataset. 

To sum up, we start with four distances: $\|\cdot\|_{\varepsilon_f}$ in actuals, $\|\cdot\|_{\varepsilon_{e^-}, \varepsilon_{e^+}}$ in forecasts, $\|\cdot\|_{S}$ in day-of-year, and $\|\cdot\|_{H}$ in geographical distance. The first three are fused together using $\omega^{fed}$ similarity and are used to select the nearest $\kappa_0$ neighbors, in the $(f,e,d)$-space. We then sub-select according to the spatial Haversine distance. The similarity weights $\lambda_f, \lambda_e, \lambda_d$, the past and future decay rates, and the neighborhood sizes $\kappa_0, \kappa$ are all algorithm hyperparameters to be user-selected and/or fine tuned during instantiation. 






Supplementary Figure~\ref{fig:neighbors} visualizes the similarity rankings: similarities for the DA forecasts (which are always 24-hour long) are in panel (a) and similarities for the realized generation trajectories over the morning $t \in [1,\tau]$ are in panel (b). These rankings are merged to obtain the final ensemble (weighted by similarity scores) shown in panel (c).  Note how this panel is a smooth \emph{temporal} interpolation of the top two panels (beyond also merging the two similarity metrics), illustrating the fusion of $f_n$ and $e_n$'s.  Panels (d, e) show the spatiotemporal distribution of the $\kappa$ nearest neighbor curves.

\begin{remark}
The above strategy of defining similarity across different labels associated with the curve to be completed (DA forecast, day-of-year, location) can be straightforwardly extended to incorporate further covariates or regressors, for instance more meteorological information like the temperature on day $d$. We do not have access to any other context labels besides the above, hence the reason for our specific construction \eqref{eq:omega-weights}.
\end{remark}

\subsection{Fusion with Day-Ahead Forecast} 

Having identified the nearest neighbor pairs $(f_i, e_i), i=1,\ldots, \kappa$, we wish to complete our partial curve $f_\star$. An important feature of our setup is that the intra-day functional revision is driven both by the complete neighbor realizations $f_i$ and their DA forecasts $e_i$, 
still relevant for the tail interval $\Iplus$. To do this, we create a weighted average, gradually moving from an ensemble based on the $\kappa$ selected $f_{k(i)}$'s for the near-term, to the original DA prediction $e_{k(i)}$ for the end of the day: 
   $\left( 1  - \delta_{\nu,\vartheta,\eta} (s) \right) \cdot f_{k(i)} (s) + \delta_{\nu,\vartheta,\eta} (s) \cdot e_{k(i)} \left( s \right), \ \forall i = 1, \dots, \kappa$.

For the weights $\delta_{\nu,\vartheta,\eta} (s)$, we use a Linear Inverse-Exponential function (LIE) parameterized by a \emph{trust level} $\nu \in [0,1]$ and a \emph{trust horizon} $\vartheta > 0$. The LIE function is built from a rescaled variable $u_{\vartheta,\eta}(s) = \eta \left(\frac{\tau - s}{\vartheta} + 1\right)$, which is passed through a negative Exponential Linear Unit (ELU) \cite{clevert2015fast},
\begin{equation}
    \text{ELU} \left( u_{\vartheta,\eta}(s) \right) = \begin{cases}
        u_{\vartheta,\eta} (s) & s \le \tau + \vartheta \\
        \exp \left( u_{\vartheta,\eta}(s) \right) - 1 & s > \tau + \vartheta
    \end{cases},
\end{equation}
and finally rescaled into the range $\delta_{\nu,\vartheta}(s) \in [0, \nu]$ by the {trust level} $\nu$,
\begin{equation}\label{eq:trust-level}
    \delta_{\nu,\vartheta,\eta}(s) := \nu \cdot \frac{\eta - \text{ELU} \left(u_{\vartheta,\eta}(s)\right)}{\eta + 1}.
\end{equation}
The fixed parameter $\eta = 2.5$ controls the sharpness of the transition between the two regimes. The function $\delta_{\nu,\vartheta,\eta}(s)$ linearly increases for $s \le \tau + \vartheta$ and transitions smoothly to a decaying exponential for $s > \tau + \vartheta$, with a horizontal asymptote of $\nu$, see Figure~\ref{fig:LIE}. The parameter $\vartheta$, measured in hours, represents how steep $\delta$ is originally. Hence, a quick transition to a high-confidence in DA forecasts is obtained by small $\vartheta$, high $\nu$ (e.g., $\vartheta=2,\nu=0.8)$, while a gradual transition that never fully trusts the DA forecast corresponds to a large $\vartheta$ and moderate $\nu$ (e.g., $\vartheta=6,\nu=0.4)$. The following Lemma, proved in the Appendix, summarizes properties of the LIE function $\delta_{\nu,\vartheta,\eta}(s)$.

\begin{lemma}[Monotonicity and asymptotics of the LIE trust weight]
    \label{lem:LIE}
    For fixed $\nu \in [0,1]$, $\vartheta > 0$, $\eta > 0$, the function $\delta_{\nu,\vartheta,\eta}(s)$ is nondecreasing in $s$ for $s \ge \tau$, with $\delta_{\nu,\vartheta,\eta}(\tau) = 0$ and $\delta_{\nu,\vartheta,\eta}(s) \to \nu$ as $s \to \infty$; in particular $\delta_{\nu,\vartheta,\eta}(s) \in [0, \nu]$ for all $s \ge \tau$, and $\delta_{\nu,\vartheta,\eta}(\tau + \vartheta) = \nu\,\eta/(\eta+1)$ is the value at the linear/exponential transition point.
\end{lemma}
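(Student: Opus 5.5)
The plan is to rewrite $\delta_{\nu,\vartheta,\eta}$ as a composition of monotone maps and read every claim off that structure. First, the rescaled variable $u_{\vartheta,\eta}(s)=\eta\bigl(\frac{\tau-s}{\vartheta}+1\bigr)$ is affine in $s$ with slope $-\eta/\vartheta<0$, so it is strictly decreasing. Second, I will re-express the case split in the definition of $\text{ELU}$ in terms of $u$. Since $s\le\tau+\vartheta$ holds exactly when $u_{\vartheta,\eta}(s)\ge 0$, $\text{ELU}$ is the standard map $g(u)=u$ for $u\ge0$ and $g(u)=e^u-1$ for $u<0$. Third, $\delta_{\nu,\vartheta,\eta}(s)=\nu\,\frac{\eta-g(u_{\vartheta,\eta}(s))}{\eta+1}$ is an affine function of $g$ with slope $-\nu/(\eta+1)\le 0$, because $\nu\ge0$ and $\eta>0$.

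Monotonicity then follows once $g$ is shown to be nondecreasing on $\mathbb{R}$. Both branches are increasing. At $u=0$ the left limit is $e^0-1=0$ and the right value is $0$, so $g$ is continuous there and hence nondecreasing overall. (Its derivative is also continuous there, equal to $1$ on both sides, but monotonicity does not need this.) Composing the decreasing $u(\cdot)$, the nondecreasing $g$, and the nonincreasing outer affine map shows that $\delta_{\nu,\vartheta,\eta}$ is nondecreasing in $s$, with the $\nu=0$ case trivially constant.

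The special values come from direct substitution. At $s=\tau$ we have $u=\eta\ge0$, so $g(u)=\eta$ and $\delta=0$. At $s=\tau+\vartheta$ we have $u=0$, so $g(u)=0$ and $\delta=\nu\eta/(\eta+1)$. For the asymptotics, $u_{\vartheta,\eta}(s)\to-\infty$ as $s\to\infty$, and on the exponential branch $g(u)=e^u-1\to-1$. Hence $\delta\to\nu(\eta+1)/(\eta+1)=\nu$.

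For the range statement, I will bound $g$ on $s\ge\tau$. Monotonicity gives $g\le g(\eta)=\eta$, so $\delta\ge0$. Since $e^u-1>-1$ for every real $u$ and $g(u)=u\ge0>-1$ on the other branch, $g>-1$ everywhere, which gives $\delta<\nu\le\nu$. Together these yield $\delta_{\nu,\vartheta,\eta}(s)\in[0,\nu]$ for all $s\ge\tau$.

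I do not expect a genuine obstacle. The only point needing care is the bookkeeping of signs: the ``negative'' ELU is applied to a variable that decreases in $s$, and the transition condition is stated in $s$ rather than in $u$. The continuity check at $u=0$ is what makes the piecewise function globally monotone rather than merely monotone on each piece.
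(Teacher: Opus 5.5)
Your proposal is correct and follows the paper's proof essentially step for step: $u_{\vartheta,\eta}$ is strictly decreasing, ELU is nondecreasing, the outer affine map has slope $-\nu/(\eta+1)\le 0$, and the special values come from direct substitution at $s=\tau$, at $s=\tau+\vartheta$ and as $s\to\infty$. Your explicit continuity check at $u=0$ fills in a step the paper leaves implicit when it argues only from the derivatives of the two branches. One cosmetic slip: $g>-1$ gives $\delta<\nu$ only when $\nu>0$, since for $\nu=0$ one has $\delta\equiv 0=\nu$, so you should conclude $\delta\le\nu$ directly.
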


\paragraph{DA Forecast Bias Correction.} 

Our goal is to obtain the most accurate forecast on $\Iplus$, and thus we expect that the neighbor $f_i$'s are statistically unbiased by construction. However, once we blend with the DA forecast $e_i$, and especially with $e_\star$, we need the latter to be unbiased as well. Since in practice (and certainly for the energy forecasting application) the forecasts may be biased, we first de-bias them before fusing. 




Let $f_{\ell,p,t}(d)$ denote the observed generation and $e_{\ell,p,t}(d)$ the DA forecast curves for asset $\ell \in \{1, \ldots, L\}$, where $p \in \{2017,\ 2018\}$ denotes the period (year), $d \in \{1, \ldots, 365\}$ the day-of-year, and $t \in \{1, \ldots, 288\}$ the 5-minute interval within a day. We fit the following linear regression model independently for each asset $\ell$, period $p$, and interval $t$ to correct systematic forecast biases,
\begin{equation}\label{eq:bias-regression}
    f_{\ell,p,t}(d) = \alpha_{\ell,p,t} + \beta_{\ell,p,t} \cdot e_{\ell,p,t}(d) + \epsilon_{\ell,p,t}(d), 
\end{equation}
where, for each fixed $(\ell,p,t)$, the residuals
$\epsilon_{\ell,p,t}(d)$, taken over $d=1,\ldots,365$, are assumed i.i.d.\ Gaussian. The fitted coefficients $\hat\alpha_{\ell,p,t}$ and
$\hat\beta_{\ell,p,t}$ are used to obtain bias-corrected
forecasts
\begin{equation}\label{eq:bias}
    \hat{e}_{p,\ell.t}(d) = \hat\alpha_{\ell,p,t} + \hat\beta_{\ell,p,t} \cdot e_{\ell,p,t}(d).
\end{equation}

The bias coefficients for each period $p$ are estimated in-sample, using that period's own forecast--generation pairs. This is a deliberate idealization of operational practice: system operators continuously monitor and recalibrate the systematic bias of their forecast vendors, so by the time any given day is forecast, the prevailing bias level is approximately known. Estimating $(\hat\alpha, \hat\beta)$ over the full period approximates this running recalibration with a single static fit. 

\paragraph{Upscaling.} 

Because the DA forecasts are only hourly, we need to upscale the fused curves back to the 5-min timecale. To do so, we use cubic $\mathcal{B}$-splines, namely a linear combination of $\bar{b}_\tau = m (\tau) + q - 1$ basis functions
\begin{equation}
    \bar \mu \left( s \right) = \sum_{i = 1}^{\bar{b}_\tau} c_i \mathcal{B}_{i, d} \left( s \right),
\end{equation}
where $c_i \in \mathbb{R}$ are the basis coefficients, $\mathcal{B}_{i, d} \left(s \right)$ are piecewise cubic polynomials (order $q = 4$) estimated with the Cox–de Boor recursion formula, 
and the number of terms is $m (\tau) = \lceil  \frac{\left|\mathcal{I}_{>\tau}\right|}{9} \rceil$.


\paragraph{Fusion Acceptance Probability.}

Renewable energy CFs are frequently either exactly 0 or 1, corresponding to zero generation or maximum generation. Directly fusing DA forecasts $e_n$ (which rarely take values in $\{0,1\}$) with observed curves $f_n$ shrinks the predictive distribution toward the interior of the unit interval, artificially reducing the probability masses at 0 and 1. To preserve the latter, we introduce a randomized fusion acceptance mechanism that proxies edge-inflation. Specifically, for every neighbor of $f_{\star}$  either fusion is done as above, or it is skipped and $f_{k(i)}$ is retained as-is. The fusion probability is taken as
\begin{equation}
    \label{eq:fusion-rejection}
    p_i(p_{\mathrm{fusion}}) := \min\left( \frac{p_{\mathrm{fusion}} \cdot \left( 1 - \nu \right)}{1 - \frac{1}{\left| \mathcal{I}_{>\tau} \right|}
    \sum_{s \in \mathcal{I}_{>\tau}}
    \mathbb{I} \bigl( f_{k(i)}(s) \in \{0, 1\} \bigr)}, 1 \right),
\end{equation}
where $p_{\mathrm{fusion}}$ is a hyperparameter controlling the fusion probability, and $\nu$ is the {trust level} from Eq.~\eqref{eq:trust-level}. The probability of retaining $f_{k(i)}$ increases with the fraction of time that a curve $f_{k(i)}$ is at the edge, and decreases with $\nu$. Consequently, zero- or one-inflated curves are more likely to be kept unfused, preserving the tails of the predictive distributions.

Let $B_i \sim \Bern \left(p_i \right)$, with $B_i = 1$ indicating that the original forecast is retained (i.e., fusion is skipped), and $B_i = 0$ indicating forecast–observation fusion.  Then the fused forecast is 
\begin{equation}
    \label{eq:blended-fcst}
    \hat{\mu}_i (s) := B_i \cdot f_{k(i)}(s) + \left( 1 - B_i \right) \cdot \left[ \left( 1 -\delta_{\nu,\vartheta}(s)\right) \cdot f_{k(i)}(s) + \delta_{\nu, \vartheta}(s) \cdot e_{k(i)}(s) \right], \quad \forall i = 1, \dots, \kappa.
\end{equation}
In sum, the functional neighborhood we construct is the set of $\kappa$ curves $\cJ = \left\{\hat{\mu}_i (s), \ i=1,\ldots, \kappa \right\}$. 

\subsection{Adaptive and Dynamic Uncertainty Quantification}

\paragraph{Marginal Predictive Distribution at Time $s$.}

Reindexing the $\kappa$ fused forecasts $\hat{\mu}_i(s), i=1,\ldots,\kappa$ in Eq.~\eqref{eq:blended-fcst} in ascending order, $\hat\mu_1(s) \le \hat\mu_2(s) \le \dots \le \hat\mu_\kappa(s)$, with the weights $\omega_i$ reordered correspondingly, the weighted marginal quantiles function 
is defined as $Q_{\omega}\left(q; \hat\mu_\cdot(s)\right) := \hat\mu_{k(q)}(s)$, where $k(q)$ is the smallest index $k'$ such that $\sum_{i=1}^{k'} \omega_i \ge q \sum_{i=1}^{\kappa} \omega_i$. The median of the marginal predictive density is given by
\begin{equation}
    \label{eq:wquantile}
    \hat \mu_{\text{\text{med}}}(s) := Q_{\omega}\left(0.5; \hat \mu_\cdot(s) \right).
\end{equation}

The full marginal predictive density $p_s(\cdot)$ for the horizon $s \in \mathcal{I}_{>\tau}$ is constructed via the weighted Gaussian kernel smoothing \cite{cai2001weighted}, 
\begin{equation}
    p_{s} \left(\hat{\mu}; h(s) \right) := \frac{1}{\sum^{\kappa}_{i = 1} \omega_i} \sum^{\kappa}_{i = 1} \omega_i \cdot \frac{1}{\sqrt{2\pi}\, h(s)} \exp \left(-\frac{\left(\hat{\mu} - \hat{\mu}_i (s)\right)^2}{2h^2(s)}\right),
\end{equation}
where $\hat{\mu}$ denotes the CF value at which the density is evaluated, and $\omega_i := \omega^{fed}_{k(i)} \left( f_\star, e_\star, d_\star \right)$ abbreviates the similarity metric in Eq.~\eqref{eq:omega-weights} of the $i$-th selected neighbor $k(i)$.
The KDE bandwidth $h(s)$ depends on $s$ and is set according to the weighted extension of Silverman's Rule \cite{silverman2018} with the effective sample size term 
\begin{equation}
    \label{eq:wbandwith}
    h(s) = 0.9 \cdot \min\left(\hat{s}_{\omega} \left(\hat\mu_\cdot(s)\right),\; \frac{Q_{\omega} \left(0.75; \hat\mu_\cdot(s)\right) - Q_{\omega} \left(0.25; \hat\mu_\cdot(s)\right)}{1.34}\right) \cdot \left(\frac{\left(\sum_{i=1}^{\kappa} \omega_i\right)^2}{\sum_{i=1}^{\kappa} \omega_i^2}\right)^{-1/5},
\end{equation}
where $\hat{s}_{\omega} \left(\hat\mu_\cdot(s)\right)$ is the weighted marginal standard deviation, and $Q_{\omega} \left(0.25; \hat\mu_\cdot(s)\right)$ and $Q_{\omega} \left(0.75; \hat\mu_\cdot(s)\right)$ are the first and third predictive quartiles.

\paragraph{Marginal Quantiles.} The basic forecasting interval is based on marginal quantiles coming from the predictive distribution. Given a level $\alpha$, let $l \left(s; \alpha\right) = Q_\omega(1-\alpha; \hat{\mu}_\cdot(s))$ and $u \left(s;\alpha\right) = Q_\omega(\alpha; \hat{\mu}_\cdot(s))$ be the lower and upper quantiles. Then the corresponding central $(1 - \alpha) \times 100\%$ predictive band, computed pointwise for each $s$ is $\cR^{\mathrm{mar}}_\alpha(s) := [l(s; \alpha), u(s; \alpha)]$.

\paragraph{Nonparametric Functional Quantiles.}

Beyond the standard marginal predictive distribution at a fixed future hour $s$, the power of our approach lies in being able to construct functional boxplots and functional quantiles for the completed curve across the entire \emph{forecast horizon}. The goal is to create summary statistics of the set of curves $\cJ = \left\{\hat{\mu}_i (s), i=1,\ldots, \kappa \right\}$.  Specifically, we are interested in the median curve, the central region and the maximum envelope. The median curve is the most-central curve and interpreted as the ``most likely'' functional forecast. The central region is the band containing the 50\% of most-central curves. The maximum envelope is the band defined by most extreme non-outlier functions. 

\paragraph{Functional Envelope.} To define the above objects, we use the notion of \emph{centrality} or outlyingness  based on ranking by functional depth. %
Modified Band Depth (MBD) \cite{dai2020} averages the proportion of time that a functional observation $\hat{\mu}_i$ lies between pairs of curves in $\cJ_{/i}$ 
\begin{equation}
    \label{eq:mbd}
    \mathcal{MBD} \left(\hat{\mu}_i, \cJ_{/i} \right):= \binom{\kappa}{2}^{-1} \sum_{1 \leq i_1<i_2 \leq \kappa} \Leb \left( \{s: \min \left(\hat{\mu}_{i_1}(s), \hat{\mu}_{i_2}(s) \right) \leq \hat{\mu}_i(s) \leq \max \left(\hat{\mu}_{i_1}(s), \hat{\mu}_{i_2}(s) \}\right) \right).
\end{equation}

Using the metric $\mathcal{MBD}$ in Eq.~\eqref{eq:mbd}, we compute the functional depth $\text{MBD}_i$ of each curve and rank them according to $\text{MBD}_{(1)} \ge \text{MBD}_{(2)} \ge \ldots $. The \emph{deepest curve} corresponding to $\text{MBD}_{(1)}$ is called the central one:
\begin{align}
    \label{eq:central-curve}
    \hat \mu_{\text{\text{deep}}} (s) := \hat \mu_{i}(s): \qquad \text{MBD}_i = \text{MBD}_{(1)}.
\end{align}

Based on the idea of the functional boxplot \cite{sun2011functional}, the \emph{functional envelope} $\mathcal{R}^{\mathrm{fun}}_{\alpha}$ is defined as the region that contains the collection $\mathcal{H}^{\mathrm{fun}}_{\alpha}$ of the $\lceil (1 - \alpha) \cdot \kappa \rceil$ deepest curves $\hat{\mu}_i(\cdot)$ from the set in Eq.~\eqref{eq:blended-fcst}, where $\alpha$ is the significance level and $\lceil \cdot \rceil$ is the ceiling function,
\begin{equation}
    \label{eq:functional-envelope}
    \mathcal{R}^{\mathrm{fun}}_{\alpha}(s)
    := \Bigl[ \min_{\hat{\mu}_i \in \mathcal{H}^{\mathrm{fun}}_{\alpha}} \hat{\mu}_i(s),
       \ \max_{\hat{\mu}_i \in \mathcal{H}^{\mathrm{fun}}_{\alpha}} \hat{\mu}_i(s) \Bigr],
    \qquad s \in \mathcal{I}_{>\tau}.
\end{equation}
Note that the collection $\mathcal{H}^{\mathrm{fun}}_{\alpha}$ is fixed across horizons $s$, which gives a functional-driven, rather than a marginal, construction. 

\begin{remark}
    \label{rem:nesting}
    For $0 < \alpha_{j'} < \alpha_j < 1$ we have $\lceil(1-\alpha_j)\kappa\rceil \le \lceil(1-\alpha_{j'})\kappa\rceil$ and, since both collections are prefixes of the same depth ordering, $\mathcal{H}^{\mathrm{fun}}_{\alpha_j} \subseteq \mathcal{H}^{\mathrm{fun}}_{\alpha_{j'}}$. Taking the min/max in Eq.~\eqref{eq:functional-envelope} over a larger collection can only widen the band, so $\mathcal{R}^{\mathrm{fun}}_{\alpha_j}(s) \subseteq \mathcal{R}^{\mathrm{fun}}_{\alpha_{j'}}(s)$ for every $s \in \mathcal{I}_{>\tau}$: the higher-coverage envelope (smaller $\alpha_{j'}$, built from more of the deepest curves) contains the lower-coverage one, so the bands are nested and can never cross.
\end{remark}

The functional envelope in Eq.~\eqref{eq:functional-envelope} is not calibrated, so the $\lceil (1 - \alpha) \cdot \kappa \rceil$ deepest curves do not exactly attain the nominal coverage of $1 - \alpha$, because the depth ranking is a measure of centrality not of probability. We therefore introduce a calibration factor $k^{\mathrm{adj}}(\alpha)$ which adjusts the number of curves that represent a given nominal coverage, and define the \emph{adjusted functional envelope} as the region formed by the collection $\mathcal{H}^{\mathrm{adj}}_{\alpha}$ of the $\lceil k^{\mathrm{adj}}(\alpha) \cdot \kappa \rceil$ of deepest curves,
\begin{equation}
    \label{eq:adjusted-envelope}
    \mathcal{R}^{\mathrm{adj}}_{\alpha}(s)
    := \Bigl[ \min_{\hat{\mu}_i \in \mathcal{H}^{\mathrm{adj}}_{\alpha}}
              \hat{\mu}_i(s),
       \ \max_{\hat{\mu}_i \in \mathcal{H}^{\mathrm{adj}}_{\alpha}}
              \hat{\mu}_i(s) \Bigr],
    \qquad s \in \mathcal{I}_{>\tau}.
\end{equation}
The factor $k^{\mathrm{adj}}(\alpha) \in [0, 1]$ is estimated for each significance level $\alpha$, so that the functional coverage of $\mathcal{R}^{\mathrm{adj}}_{\alpha}$ matches its nominal level $1 - \alpha$. The proposed adjustment acts only on the number of curves retained, the construction $\mathcal{H}^{\mathrm{adj}}_{\alpha}$ is still a prefix of the depth ordering and is still fixed across horizons $s$.

\paragraph{Projection Envelope.}
As alternative, and similar to a functional Nadaraya--Watson estimator, 
we complete the partial curve $f_\star$ with the conditional expectation over the neighborhood. The resulting focal curve \cite{elias2022} is given by
\begin{equation}
    \label{eq:focal}
    \hat \mu_{\text{focal}} (s) = \frac{1}{\sum_{i=1}^{\kappa}\omega_i } \sum_{i=1}^{\kappa}\omega_i \cdot \hat\mu_i(s),
\end{equation}
where $\omega_i \equiv \omega^{fed}_i \left( f_\star, e_\star, d_\star \right)$ are the similarity weights in Eq.~\eqref{eq:omega-weights} and $\hat\mu_i(s)$ are the fused forecasts in Eq.~\eqref{eq:blended-fcst}.

To characterize the uncertainty of this conditional expectation, we construct a \emph{projection envelope} that has $\hat \mu_{\text{focal}}(s)$ as its central curve and is formed by the fused forecasts that bracket it \cite{elias2022}. For any candidate set $\cJ^\prime$, the degree of envelopment of $\hat \mu_{\text{focal}} (s)$ is
\begin{equation} 
    \label{eq:envelopment}
    \mathcal{E} \left(\hat \mu_{\text{focal}} (s),\cJ^\prime \right) := \Leb\!\left(\left\{s\in [\tau, T] : \min_{\hat\mu\in\cJ^\prime}\hat\mu(s)  \le \hat \mu_{\text{focal}}(s)  \le \max_{\hat\mu\in\cJ^\prime}\hat\mu(s)  \right\} \right),
\end{equation}
and the envelope is the union of successive passes that each attain full envelopment,
\begin{equation} 
    \label{eq:funtional_envelope}
    \cJ^{\text{prj}} = \left\{ \hat \mu_{\text{focal}}(s) \right\} \cup \bigcup_{j \ge 1} \cJ_j,\quad
    \cJ_j \in \argmin_{\cJ^\prime \subseteq \cP_j} \sum_{\hat \mu_i \in \cJ^\prime} \left\| \hat \mu_i (s) - \hat \mu_{\text{focal}} (s) \right\|_2,
\end{equation}
where $\cP_1 = \cJ$ and $\cP_{j+1} = \cP_j \setminus \cJ_j$ is the pool remaining after the $j$-th pass. 

We solve this envelopment problem greedily, transferring a fused forecast $\hat \mu_i (s)$ from $\cP_j$ to the current pass $\cJ^\prime$ in order of increasing $L_2$ distance to $\hat \mu_{\text{focal}} (s)$. We keep only those $\hat \mu_i (s)$ that increase the envelopment. The pass stops once $\hat \mu_{\text{focal}} (s)$ is bracketed at every point of the horizon $\mathcal{E} \left( \hat \mu_{\text{focal}} (s), \cP_j \right) < T - \tau$, and a new pass begins on the remaining pool $\cP_{j+1}$. The curves of $\cJ^{\mathrm{prj}}$ are then ranked by their $L_2$ distance to $\hat \mu_{\text{focal}} (s)$, and, as with the functional envelope, the number retained is calibrated rather than nominal. The projection band $\cJ^{\mathrm{prj}}_{\alpha}$ at significance level $\alpha$ is the collection of the $\lceil k^{\mathrm{prj}}(\alpha) \cdot |\cJ^{\mathrm{prj}}| \rceil$ curves closest to $\mu_{\text{focal}} (s)$,
\begin{align}
    \label{eq:projection-band}
    \mathcal{R}^{\text{prj}}_{\alpha} \left( s \right) := \left[
    \min_{\hat{\mu}_i \in \cJ^{\mathrm{prj}}_{\alpha}} \hat{\mu}_i \left( s \right), \
    \max_{\hat{\mu}_i \in \cJ^{\mathrm{prj}}_{\alpha}} \hat{\mu}_i \left( s \right)
    \right].
\end{align}
This band is structurally different from $\mathcal{R}^{\mathrm{adj}}_{\alpha}$, since the initial collection is $|\cJ^{\mathrm{prj}}| \le \kappa$ rather than $\kappa$, and the ordering is by $L_2$ distance to $\hat \mu_{\mathrm{focal}} (s)$ rather than by depth. In addition, the calibration factors of the two constructions are not interchangeable $k^{\mathrm{prj}}(\alpha) \ne k^{\mathrm{adj}}(\alpha)$. Both $k^{\mathrm{prj}}(\alpha)$ and $k^{\mathrm{adj}}(\alpha)$ are fractions of the neighborhood sorted by a ranking, rather than nominal probabilities. For each significance level $\alpha$ they are selected empirically by grid search on the validation dataset, minimizing a score in Section~\ref{sec:assess}, and independently for each update time $\tau$ and case study.

We obtain nine distinct forecast objects: three central curve estimates ($\hat{\mu}_{\text{med}}$, $\hat{\mu}_{\text{focal}}$, and $\hat{\mu}_{\text{deep}}$), four regions ($\mathcal{R}^{\text{mar}}$, $\mathcal{R}^{\text{fun}}$, $\mathcal{R}^{\text{adj}}$, and $\mathcal{R}^{\text{prj}}$), two predictive distributions $p(s)$, all inferred from the collection of neighboring curves $\mathcal{J}$. Figure~\ref{fig:functional-intervals} illustrates these elements for the Blanco Canyon wind asset on two dates, contrasting a dense fused neighborhood (left panles) with a sparse one (right panels). The regions differ in how they impose temporal structure (Figure~\ref{fig:functional-intervals}c--l). Marginal quantiles $\mathcal{R}^{\text{mar}}$ are constructed pointwise, so their boundaries reflect sampling noise in the marginal densities and offer no guarantee for the overall trajectory (Figure~\ref{fig:functional-intervals}c--l). By contrast, the functional envelope $\mathcal{R}^{\text{fun}}$ uses the $\lceil (1 - \alpha) \cdot \kappa \rceil$ deepest curves under MBD, yielding boundaries that are realizable trajectories (Figure~\ref{fig:functional-intervals}e, f). However, because MBD centrality is a global property, the deepest curve may still contain local extrema. As a result, these bands are the widest of all constructions and tend to over-cover at both significance levels. The adjusted functional envelope $\mathcal{R}^{\text{adj}}$ corrects for this, restoring nominal coverage in sparse cases (Figure~\ref{fig:functional-intervals}g, h), where depth rankings are less informative. Finally, the focal envelope $\mathcal{R}^{\text{prj}}$ selects boundaries by their proximity to the focal curve, keeping the band aligned with key features such as the evening ramp, unlike the functional envelope (Figure~\ref{fig:functional-intervals}i, j).

\begin{figure*}[htb!]
    \centering
    \begin{minipage}{.49\textwidth} 
        \begin{overpic}[width=1\textwidth, trim = {0cm, 0cm, 0cm, 0cm}, clip]{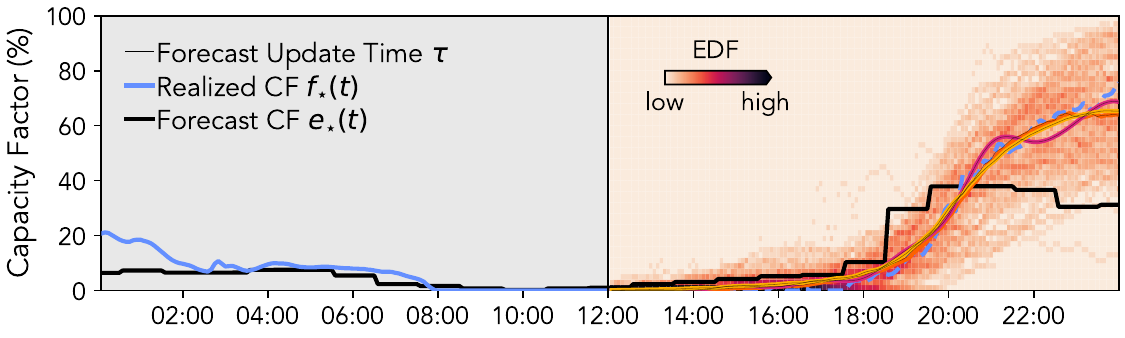}
            \put(0, 30){\textbf{a}}
        \end{overpic}
        \centering
        \begin{overpic}[width=1\textwidth, trim = {0cm, 0cm, 0cm, 0cm}, clip]{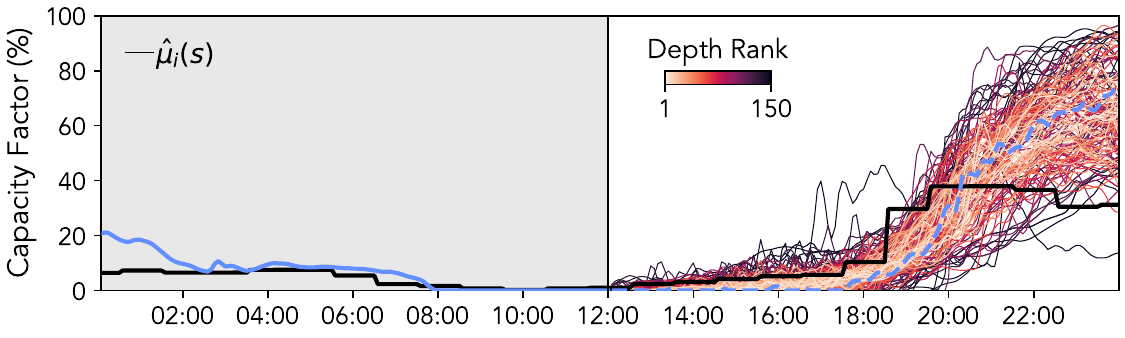}
            \put(0, 30){\textbf{c}}
        \end{overpic}
        \begin{overpic}[width=1\textwidth, trim = {0cm, 0cm, 0cm, 0cm}, clip]{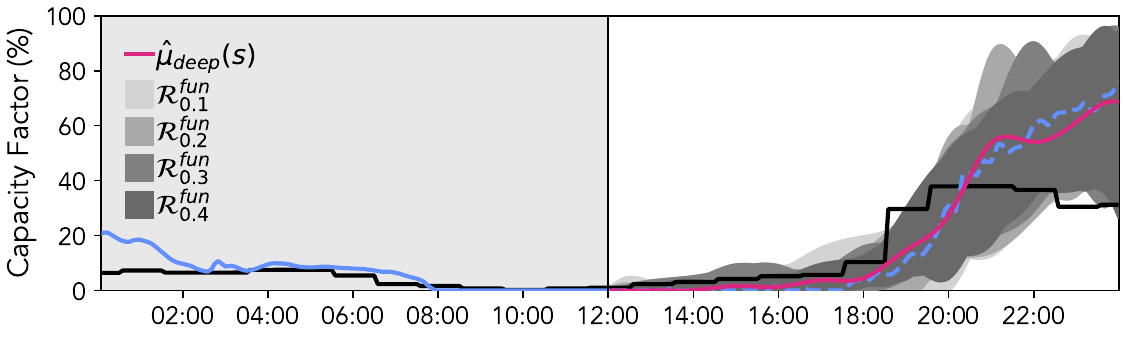}
            \put(0, 30){\textbf{e}}
        \end{overpic}
        \begin{overpic}[width=1\textwidth, trim = {0cm, 0cm, 0cm, 0cm}, clip]{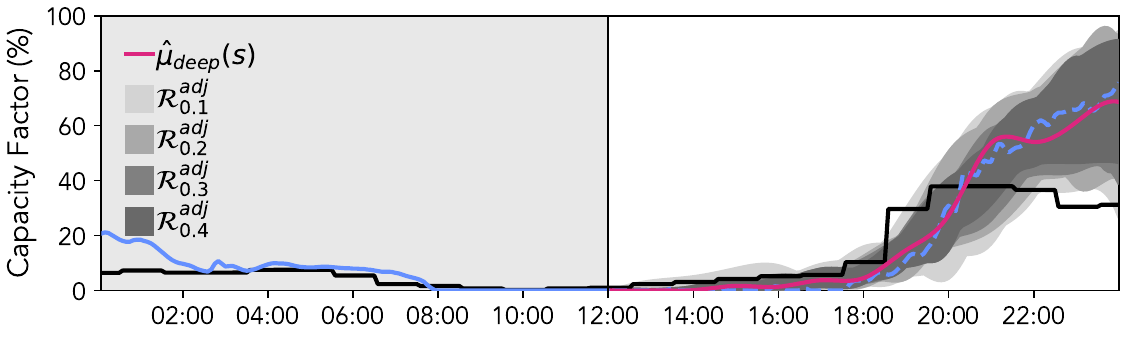}
            \put(0, 30){\textbf{g}}
        \end{overpic}
        \begin{overpic}[width=1\textwidth, trim = {0cm, 0cm, 0cm, 0cm}, clip]{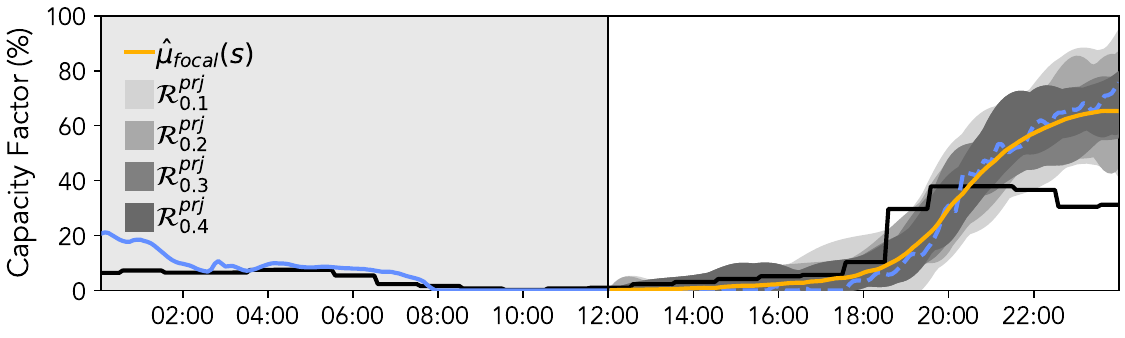}
            \put(0, 30){\textbf{i}}
        \end{overpic}
        \begin{overpic}[width=1\textwidth, trim = {0cm, 0cm, 0cm, 0cm}, clip]{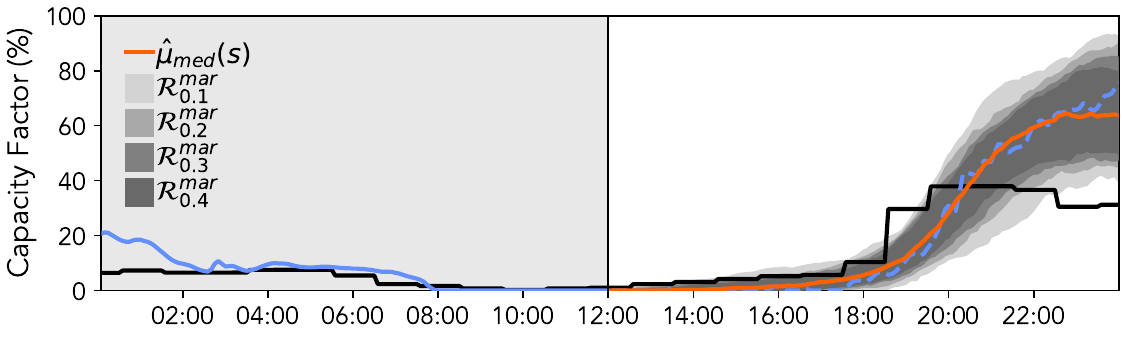}
            \put(0, 30){\textbf{k}}
        \end{overpic}
    \end{minipage}
    \begin{minipage}{.49\textwidth} 
        \centering
        \begin{overpic}[width=1\textwidth, trim = {0cm, 0cm, 0cm, 0cm}, clip]{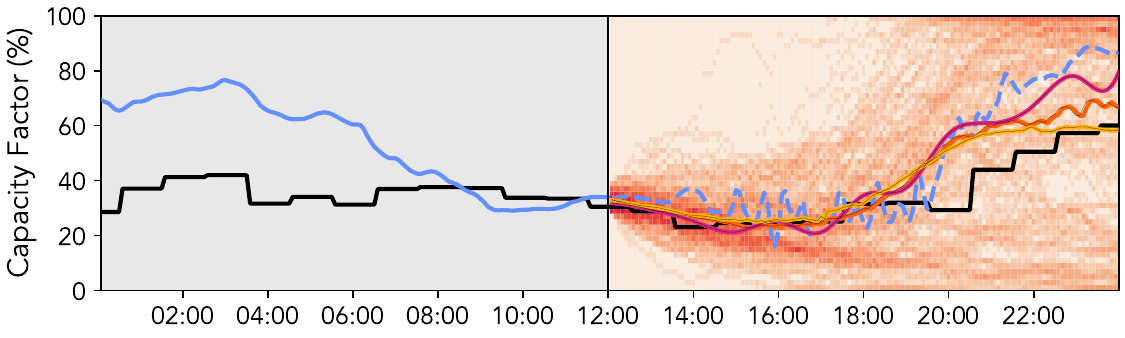}
            \put(0, 30){\textbf{b}}
        \end{overpic}
        \begin{overpic}[width=1\textwidth, trim = {0cm, 0cm, 0cm, 0cm}, clip]{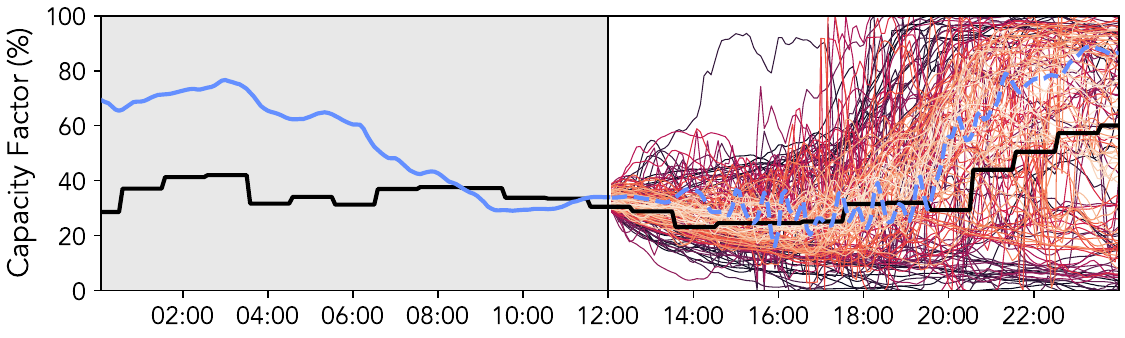}
            \put(0, 30){\textbf{d}}
        \end{overpic}
        \begin{overpic}[width=1\textwidth, trim = {0cm, 0cm, 0cm, 0cm}, clip]{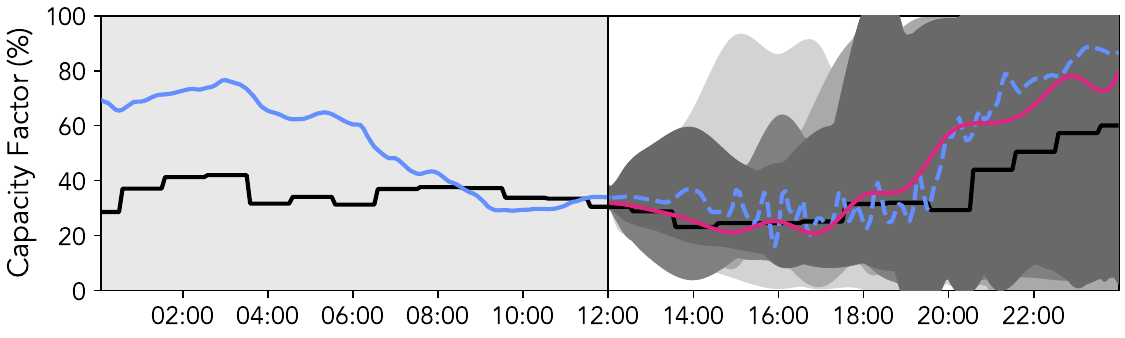}
            \put(0, 30){\textbf{f}}
        \end{overpic}
        \begin{overpic}[width=1\textwidth, trim = {0cm, 0cm, 0cm, 0cm}, clip]{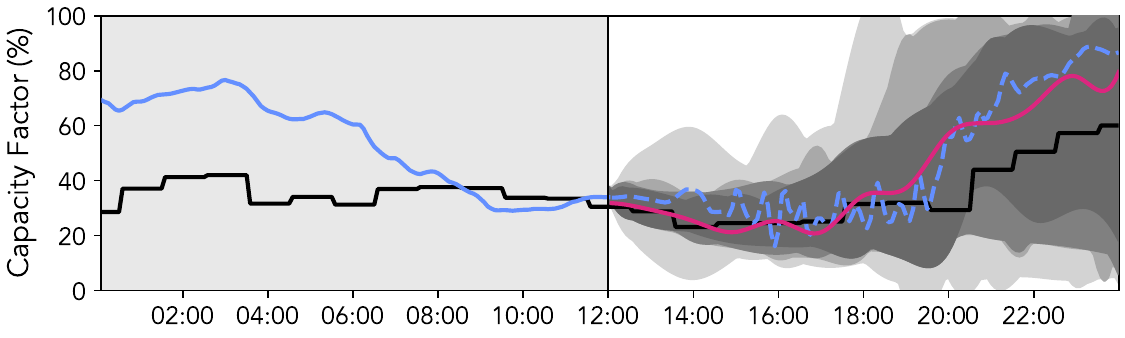}
            \put(0, 30){\textbf{h}}
        \end{overpic}
        \begin{overpic}[width=1\textwidth, trim = {0cm, 0cm, 0cm, 0cm}, clip]{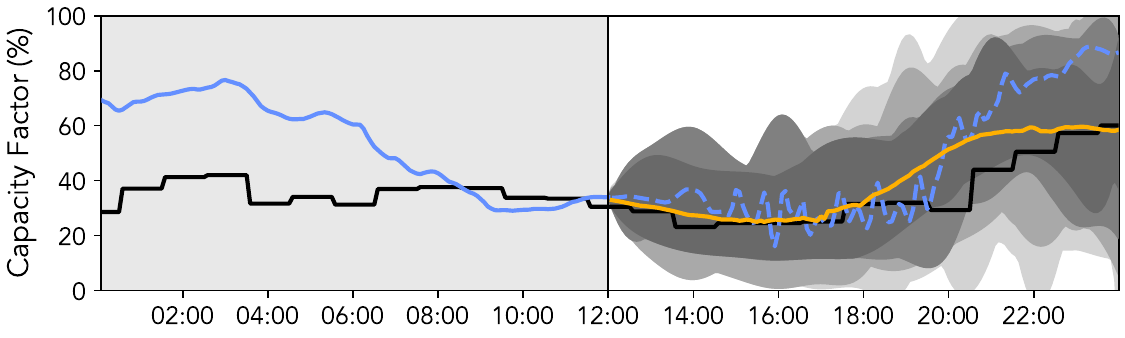}
            \put(0, 30){\textbf{j}}
        \end{overpic}
        \begin{overpic}[width=1\textwidth, trim = {0cm, 0cm, 0cm, 0cm}, clip]{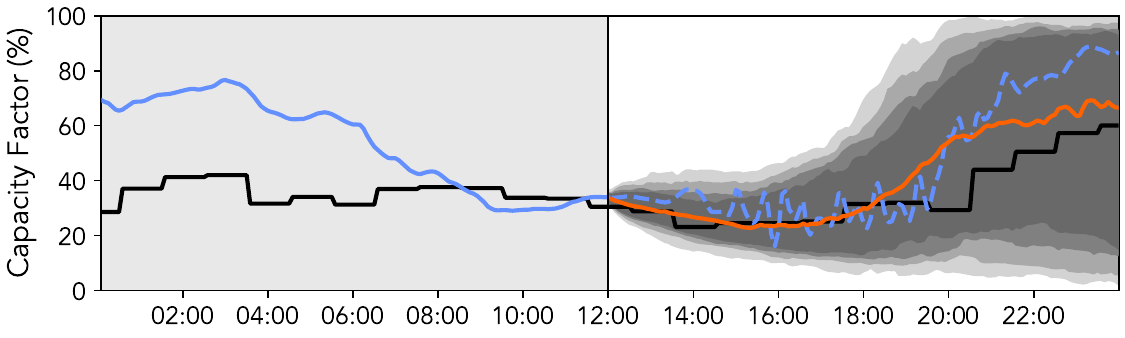}
            \put(0, 30){\textbf{l}}
        \end{overpic}
    \end{minipage}
    \caption{\textbf{Wind generation day-ahead (DA) forecast update with adaptive uncertainty quantification.} Blanco Canyon asset at forecast update time $\tau = 12$pm on Apr 23, when the fused neighborhood is dense (a, c, e, g, i, k), and on Jun 26, when it is sparse (b, d, f, h, j, l). Grey shading marks the observed period preceding $\tau$. (a, b) Predictive marginal density heatmap with the median curve $\hat{\mu}_{\mathrm{med}}(s)$ (orange), the deepest curve $\hat{\mu}_{\mathrm{deep}}(s)$ (violet), and the focal curve $\hat{\mu}_{\mathrm{focal}}(s)$ (yellow). (c, d) Fused forecasts $\hat{\mu}_i(s)$ coloured by Modified Band Depth (MBD) rank. (e--l) Nested predictive regions at $\alpha \in \{0.1, 0.2, 0.3, 0.4\}$ (light to dark): MBD-functional envelope $\cR^{\mathrm{fun}}$ (e, f), adjusted MBD-functional envelope $\cR^{\mathrm{adj}}$ (g, h), focal envelope $\cR^{\mathrm{prj}}$ (i, j), and marginal quantiles $\cR^{\mathrm{mar}}$ (k, l). The realized capacity factor $f_\cdot(t)$ is shown in blue and the DA forecast $e_\cdot(t)$ in black throughout.}
    \label{fig:functional-intervals}
\end{figure*}


\subsection{Calibration}\label{sec:assess}

In this section we discuss how to fine-tune our model for a given dataset and how to assess its performance. We consider the point forecasts, the predictive bands indexed by a confidence level $\alpha$ and the functional envelopes. In line with our focus on curve completion, our assessment metrics are not marginal but averaged across lead times $s>\tau$.

To assess the point forecasts $\hat{\mu}_\cdot$ we use lead-time-averaged root mean squared error (RMSE), mean absolute error (MAE) and mean bias error (MBE): 
\begin{align}
\label{eq:RMSE}
    \mathrm{RMSE}^2_n \left(\tau; \Theta\right) =
    \frac{1}{\left|\mathcal{I}_{>\tau}\right|}
    \sum_{s \in \mathcal{I}_{>\tau}}
    \left( \hat{\mu}_{n}\left(s; \Theta\right) - f_n(s) \right)^2;\\
    \label{eq:MAE}
    \mathrm{MAE}_n \left(\tau; \Theta\right) =
    \frac{1}{\left|\mathcal{I}_{>\tau}\right|}
    \sum_{s \in \mathcal{I}_{>\tau}}
    \left| \hat{\mu}_{n}\left(s; \Theta\right) - f_n(s) \right|; \\
\label{eq:MBE}
    \mathrm{MBE}_n \left(\tau; \Theta\right) =
    \frac{1}{\left|\mathcal{I}_{>\tau}\right|}
    \sum_{s \in \mathcal{I}_{>\tau}}
    \left( \hat{\mu}_{n}\left(s; \Theta\right) - f_n(s) \right);
\end{align}
a forecast can be unbiased ($\mathrm{MBE} \approx 0$) while still inaccurate.

For the interval forecasts, we employ the functional implementation of interval score (FIS) that compares the realization $f_n(s)$ with the point summary $m_n(s)$ and the predictive band $[l_n(s; \alpha), u_n(s; \alpha)]$ at significance level $\alpha$. The FIS is the  time-average of interval scores across lead-times $s$:
\begin{equation}
    \label{eq:FIS}
    \begin{split}
        \mathrm{FIS}_n \left(\alpha\right) & := \frac{1}{\left|\mathcal{I}_{>\tau}\right|}
    \sum_{s \in \mathcal{I}_{>\tau}} \Big\{ | u_n \left(s;\alpha\right) - l_n \left(s;\alpha\right) | + \frac{2}{\alpha} \left( l_n \left(s;\alpha\right) - f_n (s) \right) \mathbb{I} \left( f_n (s) < l_n \left(s;\alpha\right) \right) \\
        &\qquad + \frac{2}{\alpha} \left( f_n (s) - u_n \left(s;\alpha\right)  \right) \mathbb{I} \left( f_n (s) > u_n \left(s;\alpha\right) \right) \Big\}.
    \end{split}
\end{equation}

In complement, we compute the Functional Coverage Score (FCS)  \cite{khosravi2011comprehensive} that records the fraction of realizations covered by the $s$-marginal confidence band $[l_n(s; \alpha), u_n(s; \alpha)]$, averaged across the forecast horizons $s$,
\begin{equation}\label{eq:FCS}
    \mathrm{FCS}_n\left(\alpha\right) :=
    \frac{1}{\left|\mathcal{I}_{>\tau}\right|}
    \sum_{s \in \mathcal{I}_{>\tau}}
    \mathbb{I}\Bigl( l_n\left(s; \alpha\right) \le f_n(s)
    \le u_n\left(s; \alpha\right) \Bigr),
\end{equation}
so that $\mathrm{FCS}_n = 1$ is perfect coverage, and $\mathrm{FCS}_n = 0$ means the confidence bands never envelop the realization. Below we use $\alpha \in \{0.1, 0.2\}$.

To assess the entire predictive distribution at a fixed lead time $s$, we use the Kolmogorov-Smirnov (KS) statistic on the Probability Integral Transform (PIT) based on the predictive CDF $\hat{F}_{n,\Theta_\tau}\left(\cdot\,; s\right)$ constructed from the $\kappa$ fused forecasts $\hat{\mu}_i(s)$ in Eq.~\eqref{eq:blended-fcst}. The KS tests whether the empirical distribution of predictive quantiles
\begin{equation}
    F_{\mathrm{PIT}} \left(s, u; \Theta_\tau\right) = \frac{1}{N} \sum_{n=1}^{N} \mathbb{I} \left( \hat{F}_{n,\Theta_\tau} \left(f_n(s); s \right) \leq u \right),
\end{equation}
is uniform,
\begin{equation}\label{eq:KS}
    D_\mathrm{KS} \left( s; \Theta_\tau \right) = \sup_{u \in [0,1]} \left| F_{\mathrm{PIT}} \left( s, u; \Theta_\tau \right) - u \right|.
\end{equation}

Under local 
exchangeability of the target curve and the fused neighbors, the PIT is exactly (discretely) uniform and the marginal bands attain their nominal coverage.
The hyperparameter tuning for $\Theta_\tau$ described below is designed to drive the empirical PIT toward this ideal, and the reported KS distances $D_{\text{KS}}$ measure the remaining gap.

Last but not least, we assess the coverage of the functional envelopes across the entire $\Iplus$, using Simultaneous Coverage Probability (SCP) that  measures whether the entire realized curve is contained in the predictive region across all horizons,
\begin{equation}
    \label{eq:SCP}
    \mathrm{SCP}_n (\alpha ) := \mathbb{I} \Bigl( \{
    l_n \left( s; \alpha \right) \le f_n \left(s \right) \le u_n \left(s; \alpha \right) \ \forall s \in \mathcal{I}_{>\tau}\} \Bigr).
\end{equation}
A functional envelope is well-calibrated if empirical $\overline{\mathrm{SCP}}({\alpha})$ (averaged across $N$ forecasts) is close to its nominal coverage level $1-\alpha$.
Notice that the SCP is much stricter than the conventional marginal coverage, which checks each horizon $s$ independently. 

In the results, we report the $\overline{\mathrm{FIS}}({\alpha})$, $\overline{\mathrm{FCS}}({\alpha})$ and $\overline{\mathrm{SCP}}({\alpha})$ averaged across the forecast horizon and the $N$ forecasts in the testing set. 

\paragraph{Hyperparameter Tuning.}

Given the number of hyperparameters in $\Theta_\tau$, a standard $k$-fold grid-search cross-validation is not computationally feasible. Instead, for each forecast update time $\tau$, we estimate an optimal hyperparameter set $\Theta^\star_\tau$ that is common to all $L$ assets 
and invariant across days $d$.

The tuning objective function balances probabilistic calibration against deterministic accuracy. To assess calibration across lead times $s$, we partition the evaluated portion of the forecast horizon $\mathcal{I}_{>\tau}$ into $J = 6$ consecutive sub-intervals $\mathcal{I}_1, \ldots, \mathcal{I}_J$ of equal length, and compute the KS statistic in Eq.~\eqref{eq:KS} from the PIT values pooled within each sub-interval,
\begin{equation}\label{eq:KS-agg}
    D_\mathrm{KS} \left( \mathcal{I}_j; \Theta \right) = \sup_{u \in [0,1]} \left| \frac{1}{N \cdot \left|\mathcal{I}_j\right|} \sum_{n=1}^{N} \sum_{s \in \mathcal{I}_j} \mathbb{I} \left( U_n \left(s; \Theta \right) \leq u \right) - u \right|.
\end{equation}
The optimal hyperparameter set minimizes the aggregated KS statistic Eq.~\eqref{eq:KS-agg}, penalized by the accuracy of the focal curve,
\begin{equation}\label{eq:loss}
        \Theta^\star_\tau
        = \argmin_{\Theta} \ \mathcal{L}  \left(\tau; \Theta \right), \qquad \text{where}\quad 
        \mathcal{L}  \left(\tau; \Theta \right) := \left[ \frac{1}{J} \sum_{j=1}^{J} D_\mathrm{KS} \left(\mathcal{I}_j; \Theta \right) + \lambda_0 \cdot \mathrm{RMSE} \left(\tau; \Theta \right) \right],
\end{equation}
with $\lambda_0 \geq 0$ a regularization weight that anchors the predictive ensemble to the observations, preventing well-calibrated but inaccurate solutions, and
\begin{equation}
    \mathrm{RMSE} \left(\tau; \Theta \right) = \frac{1}{N} \sum_{n=1}^{N} \left[ \frac{1}{\left|\mathcal{I}_{>\tau}\right|} \sum_{s \in \mathcal{I}_{>\tau}} \left( f_n (s) - \hat{\mu}_{focal,n} \left(s; \Theta\right) \right)^2 \right]^{1/2},
\end{equation}
with $\hat{\mu}_{focal,n}(s;\Theta)$ the focal curve in Eq.~\eqref{eq:focal} associated with forecast $n$. We take $\lambda_0 = 0.25$ at the asset level, and $\lambda_0 = 0.5$ for the 2DA zonal forecasts, so both terms have the same order of magnitude.

To obtain $\Theta^\star_\tau$, we use the $D$ observed curves in $\mathcal{F}_\ell$ and corresponding set of DA forecasts $\cE_\ell$ from 2017 as the training set---the database of complete curves from which the functional neighbors are selected---and the partial curves $f_{\ell,d}(s)$ from 2018 as the testing set. We randomly select 10 assets $\ell^\prime$ from the testing set and use their 2018 curves as the validation (i.e., calibration) set, on which $\mathcal{L}(\tau;\Theta)$ is minimized. We then evaluate out-of-sample performance on 10 assets randomly chosen from the remaining $L - 10$ assets of the testing set.


Guided by an exploratory analysis conducted a priori, we partition the hyperparameters in $\Theta_\tau$ into a fixed subset and a tuned subset. The forgetting rate $\varepsilon_f$, the decay rates $\varepsilon_{e^-}$ and $\varepsilon_{e^+}$, the lengthscale of the partial curves $\lambda_f$, the number of spatiotemporal neighbors $\kappa$, and the fusion probability $p_{\mathrm{fusion}}$ are fixed across all forecast update times and both energy features (see Table~\ref{tab:asset_hyperparameters}). The remaining 5 hyperparameters are tuned by minimizing Eq.~\eqref{eq:loss}: the DA forecast and seasonal length-scales, parameterized relative to $\lambda_f$ through the ratios $\rho_e = \lambda_e / \lambda_f \in [10^{-5}, 10^{5}]$ and $\rho_d = \lambda_d / \lambda_f \in [10^{-5}, 10^{5}]$ (log scale), cf.~Proposition \ref{prop:invariance}; the size of the functional neighborhood $\kappa_0 \in [150, 5000]$ (log scale); the trust horizon $\vartheta \in [1, 12]$ hours (linear scale); and the trust level $\nu \in [0, 1]$ (linear scale). 


Since an exhaustive search over the joint hyperparameter space is intractable, we minimize Eq.~\eqref{eq:loss} via Bayesian optimization \cite{mockus1991bayesian}, as described next.



\paragraph{Fine-Tuning via Bayesian Optimization (BO).} 
We use a Gaussian Process (GP) \cite{williams2006gaussian} as the probabilistic surrogate for the relationship between the hyperparameters and the objective function in Eq.~\eqref{eq:loss}, $$\mathcal{L} \left(\tau; \Theta, \right) \sim \mathcal{GP}\left( \mathcal{M} \left( \Theta \right), \mathcal{K} \left(\Theta, \Theta^\prime\right)\right),
$$
where $\mathcal{M} \left(\Theta \right)$ is the mean function and $\mathcal{K} \left(\Theta, \Theta^\prime \right)$ defines the covariance between different hyperparameter configurations $\Theta, \Theta^\prime$; a separate GP surrogate is independently fitted for each forecast update time $\tau$. The GP surrogate provides an estimate of the uncertainty across the hyperparameter space that BO sequentially uses to select new hyperparameter combinations, balancing the trade-off between exploiting regions with low predicted values of $\mathcal{L} \left(\tau; \Theta \right)$ and exploring regions with high predictive uncertainty through an acquisition function. 

We use the BO implementation available in \texttt{Optuna} \cite{optuna2019}, which employs a Mat\'ern kernel $\mathcal{K}_{\nu = 5/2}$ with automatic relevance determination via a separate length-scale for each hyperparameter in $\Theta$, and the log-expected improvement acquisition criterion \cite{ament2023unexpected}. Hyperparameters whose admissible ranges span several orders of magnitude ($\rho_e$, $\rho_d$, and $\kappa_0$) are searched in $\log$ space, and integer-valued hyperparameters ($\kappa_0$, $\vartheta$) are rounded to the nearest admissible value. To initialize the GP surrogate, we draw 100 hyperparameter configurations via Latin Hypercube Sampling (LHS) \cite{mckay1979latin}; the BO then performs 100 additional guided evaluations, terminating at this fixed budget of 200 evaluations in total. The procedure is repeated across $R = 5$ random initializations.

\section{Results}\label{sec:results}

We present the results of our approach on three different case studies. First, we consider individual  wind assets (Section~\ref{sec:wind}) which is our main case study. To test the transferability and robust of our methods, we then consider individual solar assets (Section~\ref{sec:solar}) and zonally aggregated wind generation from real-life ERCOT data (Section \ref{sec:wind_zone}) which includes a  $48$-hour forecast horizon. In each case study we examine the tuned hyperparameters, sharpness, and calibration.

\subsection{Wind Power Generation}\label{sec:wind}

We consider  187 wind assets available in the 2030 Texas-7k ERCOT grid model for 2017 and 2018. The operation horizon is 24 hours and generation is recorded at 5 minute intervals, giving a total of $T=288$ periods.  The dataset captures a broad range of climatological and operational conditions, with the wind farms exhibiting substantial heterogeneity in installed capacity (Figure~\ref{fig:dataset}a), DA forecasting errors (Figure~\ref{fig:dataset}b), and average CFs (Figure~\ref{fig:dataset}c), providing a realistic testbed for our experiments. We remove systematic bias from the DA forecasts to reflect realistic operational conditions; 
later in this subsection we reintroduce the bias as a controlled robustness check.

\begin{table}[htb!]
    \centering
    \scriptsize
    \caption{\textbf{Tuning hyperparameters for dynamic updating of wind assets' unbiased DA forecast.} Estimated optimal hyperparameter for forecast update times $\tau=6$am, 12pm, and 6pm. The upper block lists the hyperparameters fine-tuned via Bayesian optimization. The lower block lists the hyperparameters held fixed across forecast update times.}
    \setlength{\tabcolsep}{2pt} 
    \renewcommand{\arraystretch}{1.25}
    \begin{tabular}{rc ccc}
    \toprule
    \textbf{Tuning Hyperparameter} & &
    \textbf{6am} & \textbf{12pm} & \textbf{6pm} \\
    \midrule
    Trust level                          & $\nu$    & 9.7\% & 20.2\% & 0\% \\
    Trust horizon                        & $\vartheta$ & 6~h   & 10~h   & 12~h    \\
    DA forecast similarity weight        & $\lambda_e$ & 21.8  & 11.4   & 13.5   \\
    Seasonal similarity weight           & $\lambda_d$ & 4.38  & 0     & 0.98   \\
    kNN step I neighborhood size         & $\kappa_0$  & 150   & 1,815  & 150   \\
    \midrule
    Partially observed curve forgetting rate & $\varepsilon_f$
        & \multicolumn{3}{c}{0.125~h} \\
    DA forecast past decay rate             & $\varepsilon_{e^-}$
        & \multicolumn{3}{c}{1~h} \\
    DA forecast future decay rate                     & $\varepsilon_{e^+}$
        & \multicolumn{3}{c}{4~h} \\
    Partially observed curve similarity weight & $\lambda_f$
        & \multicolumn{3}{c}{40} \\
    kNN step II neighborhood size              & $\kappa$
        & \multicolumn{3}{c}{150} \\
    Probability of fusion  with DA forecast                    & $p_{\mathrm{fusion}}$
        & \multicolumn{3}{c}{0.75} \\
    \bottomrule
\end{tabular}
    \label{tab:wind_unbiased_asset_hyperparameters}
\end{table}

To illustrate how fine-tuning evolves throughout the operational day, we analyze $\Theta^\star_\tau$  for $\tau \in {72, 144, 216}$, corresponding to 6am, 12pm, and 6pm. The similarity weights $\lambda_f$, whose inverse $\rho_f = 1/\lambda_f$ is the corresponding length-scale, define the neighborhood and are interpreted relative to the fixed weight on the partial curve $f_\star$ ($\lambda_f = 40$). The tuned hyperparameters indicate that the influence of the DA forecast changes over the course of the day (Table~\ref{tab:wind_unbiased_asset_hyperparameters}). The DA forecast is the primary channel in the morning. $\lambda_e/\lambda_f$ is largest at 6am ($0.55$) and roughly halves by 12pm ($0.29$), settling at $0.34$ by 6pm, while seasonal similarity remains secondary at every forecast update time ($\lambda_d/\lambda_e \leq 0.20$). In contrast, the trust level $\nu$ increases from 9.7\% at 6am to 20.2\% at midday, and the trust horizon $\vartheta$ extends from 6 to 12 hours. By 6pm, the DA forecast's influence on the trajectory is eliminated ($\nu = 0\%$), and the selected neighbors guide the forecast through the end of the day. Thus, the DA forecast informs the neighborhood selection early on, helps anchor trajectories midday, and becomes unnecessary by evening.

We now return to the 12pm case, in which we know the DA forecast has lost half its weight in defining the neighborhood, and the pull toward it at the end of the day has doubled (Table~\ref{tab:wind_unbiased_asset_hyperparameters}). To see  how the revised forecast uncertainty and functional envelopes adapt across days, we fix the asset (Blanco Canyon) and compare two days. On a typical day (Figure~\ref{fig:functional-intervals}a), the similarity weights are almost uniform over the neighborhood, so the effective sample size $\mathrm{ESS}(\tau) = 1 / \sum_{i=1}^{\kappa} \omega_i^2 \in [1, \kappa]$
\cite{kish1965survey} is close to the upper bound ($\mathrm{ESS}(\tau)\approx\kappa$). On an unusual day (Figure~\ref{fig:functional-intervals}b), the weights concentrate on a few neighbors ($\mathrm{ESS}(\tau)\ll \kappa)$. For each day, the selected curves are ranked by $\mathcal{MBD}$ in Eq.~\eqref{eq:mbd} and shown in order of depth (Figure~\ref{fig:functional-intervals}c,d). On the unusual day, the trajectories share no common trend, so the updated forecasts revert only weakly toward the DA forecast (Figure~\ref{fig:functional-intervals}d), and the {forecasting envelope} is wider, with heavier tails. The 60\% predictive region ($\alpha = 0.4$) is wider on that day for all four constructions (functional envelope $\mathcal{R}^{\mathrm{fun}}_\alpha$, its adjusted variation $\mathcal{R}^{\mathrm{adj}}_\alpha$, the projection envelope $\mathcal{R}^{\mathrm{prj}}_\alpha$, and the marginal quantiles $\mathcal{R}^{\mathrm{mar}}_\alpha$); see Figure~\ref{fig:functional-intervals}e--l. At the 90\% level ($\alpha = 0.1$), the projection envelope and the marginal quantiles give bands of comparable width on both days, widening toward the end of the operational day, whereas the adjusted functional envelope produces a narrower and less conservative band. The envelope width therefore depends on how compact the neighborhoods are, not on a deterministic, time-dependent variance. 

\subsection{Forecast Assessment}

\begin{table*}[htb!]
    \centering
    \scriptsize
    \caption{\textbf{Wind asset unbiased DA forecast probabilistic revision confidence band scores.} Functional interval score (FIS), functional coverage score (FCS), and simultaneous coverage probability (SCP) at significance levels $\alpha \in \{0.1, 0.2\}$ and forecast update times $\tau \in \{72, 144, 216\}$ (i.e.,~6am, 12pm, and 6pm). The confidence regions are constructed from the adjusted functional envelope $\mathcal{R}^{\mathrm{adj}}_\alpha$, the projection envelope $\mathcal{R}^{\mathrm{prj}}_\alpha$, the unadjusted functional envelope $\mathcal{R}^{\mathrm{fun}}_\alpha$, and the weighted marginal quantiles $\mathcal{R}^{\mathrm{mar}}_\alpha$. For each construction, we report the metric ``score'' and the fraction $k_\alpha$.}
    \begin{adjustbox}{max width=\linewidth}
        \setlength{\tabcolsep}{2.5pt} 
        \renewcommand{\arraystretch}{1.25}
        \begin{tabular}{ll cccccc cccccc cccccc}
    \toprule
    & & \multicolumn{6}{c}{$\boldsymbol{\overline{\mathrm{FSI}}({\alpha})}$}
      & \multicolumn{6}{c}{$\boldsymbol{\overline{\mathrm{FCS}}({\alpha})}$}
      & \multicolumn{6}{c}{$\boldsymbol{\overline{\mathrm{SCP}}({\alpha})}$} \\
    \cmidrule(lr){3-8} \cmidrule(lr){9-14} \cmidrule(lr){15-20}
    & & \multicolumn{2}{c}{$\boldsymbol{\mathcal{R}^{\mathrm{adj}}_{\alpha}}$}
      & \multicolumn{2}{c}{$\boldsymbol{\mathcal{R}^{\mathrm{prj}}_{\alpha}}$}
      & $\boldsymbol{\mathcal{R}^{\mathrm{fun}}_{\alpha}}$ & $\boldsymbol{\mathcal{R}^{\mathrm{mar}}_{\alpha}}$ 
      & \multicolumn{2}{c}{$\boldsymbol{\mathcal{R}^{\mathrm{adj}}_{\alpha}}$}
      & \multicolumn{2}{c}{$\boldsymbol{\mathcal{R}^{\mathrm{prj}}_{\alpha}}$}
      & $\boldsymbol{\mathcal{R}^{\mathrm{fun}}_{\alpha}}$ & $\boldsymbol{\mathcal{R}^{\mathrm{mar}}_{\alpha}}$ 
      & \multicolumn{2}{c}{$\boldsymbol{\mathcal{R}^{\mathrm{adj}}_{\alpha}}$}
      & \multicolumn{2}{c}{$\boldsymbol{\mathcal{R}^{\mathrm{prj}}_{\alpha}}$}
      & $\boldsymbol{\mathcal{R}^{\mathrm{fun}}_{\alpha}}$ & $\boldsymbol{\mathcal{R}^{\mathrm{mar}}_{\alpha}}$ \\
    \cmidrule(lr){3-4} \cmidrule(lr){5-6} \cmidrule(lr){7-7} \cmidrule(lr){8-8}
    \cmidrule(lr){9-10} \cmidrule(lr){11-12} \cmidrule(lr){13-13} \cmidrule(lr){14-14}
    \cmidrule(lr){15-16} \cmidrule(lr){17-18} \cmidrule(lr){19-19} \cmidrule(lr){20-20}
    $\boldsymbol{\tau}$ & $\boldsymbol{\alpha}$
    & score & $k_\alpha$ & score & $k_\alpha$ & score & score
    & score & $k_\alpha$ & score & $k_\alpha$ & score & score
    & score & $k_\alpha$ & score & $k_\alpha$ & score & score \\
    \midrule
    \multirow[c]{2}{*}{6am} & 0.1 & 0.566 & 0.60 & 0.560 & 0.75 & 0.621 & \textbf{0.532} & 0.915 & 0.60 & 0.913 & 0.65 & 0.971 & \textbf{0.898} & \textbf{0.743} & 1.00 & 0.682 & 1.00 & 0.621 & 0.283 \\
     & 0.2 & 0.473 & 0.32 & 0.469 & 0.45 & 0.568 & \textbf{0.447} & 0.806 & 0.32 & \textbf{0.804} & 0.32 & 0.956 & 0.805 & \textbf{0.743} & 1.00 & 0.682 & 1.00 & 0.516 & 0.103 \\
     \midrule
    \multirow[c]{2}{*}{12pm} & 0.1 & 0.624 & 0.65 & 0.622 & 0.75 & 0.681 & \textbf{0.585} & 0.912 & 0.65 & 0.912 & 0.70 & 0.969 & \textbf{0.901} & \textbf{0.770} & 1.00 & 0.712 & 1.00 & 0.661 & 0.387 \\
     & 0.2 & 0.526 & 0.40 & 0.522 & 0.50 & 0.618 & \textbf{0.494} & 0.810 & 0.40 & \textbf{0.804} & 0.40 & 0.951 & 0.808 & \textbf{0.770} & 1.00 & 0.712 & 1.00 & 0.567 & 0.190 \\
     \midrule
    \multirow[c]{2}{*}{6pm} & 0.1 & 0.646 & 0.75 & 0.646 & 0.85 & 0.685 & \textbf{0.612} & \textbf{0.904} & 0.70 & \textbf{0.904} & 0.75 & 0.967 & 0.909 & \textbf{0.879} & 1.00 & 0.822 & 1.00 & 0.762 & 0.514 \\
     & 0.2 & 0.538 & 0.50 & 0.540 & 0.60 & 0.603 & \textbf{0.518} & 0.810 & 0.50 & \textbf{0.800} & 0.50 & 0.940 & 0.813 & \textbf{0.814} & 0.95 & 0.822 & 1.00 & 0.653 & 0.309 \\
    \bottomrule
\end{tabular}
    \end{adjustbox}
    \label{tab:wind_unbiased_asset_envelope}
\end{table*}

Table~\ref{tab:wind_unbiased_asset_envelope} evaluates the four predictive band constructions  in terms of sharpness and coverage. The weighted marginal quantiles $\mathcal{R}^{\mathrm{mar}}_\alpha$ attain the lowest FIS at every $\tau$ and $\alpha$, with the adjusted functional and projection envelopes only $4$--$7\%$ behind and the functional envelope $12$--$27\%$ behind. Under FCS, the first three constructions are all close to nominal (within $0.015$ at both the $90\%$ and $80\%$ levels), while $\mathcal{R}^{\mathrm{fun}}_\alpha$ is systematically too conservative. Within that margin, the ranking depends on the significance level: the projection envelope $\mathcal{R}^{\mathrm{prj}}_\alpha$ is the closest to nominal at the $80\%$ level at every update time ($0.804$, $0.804$, and $0.800$) and never deviates by more than $0.013$ at either level, whereas the marginal quantiles are closest at the $90\%$ level. The adjusted variation $\mathcal{R}^{\mathrm{adj}}_\alpha$ corrects the over-coverage of $\mathcal{R}^{\mathrm{fun}}_\alpha$ but at a cost of $4$--$7\%$ in FIS. On these two criteria, then, the pointwise construction is the sharpest while remaining within $0.013$ of nominal.

\begin{table}[htb!]
    \centering
    \scriptsize
    \caption{\textbf{Wind asset unbiased DA forecast probabilistic revision deterministic error metrics.} Root mean squared error (RMSE), mean absolute error (MAE), and mean bias error (MBE) of the point forecasts at forecast update times $\tau \in \{72, 144, 216\}$ (i.e.,~6am, 12pm, and 6pm). The estimators are the weighted marginal median curve $\hat{\mu}_{\text{med}}(s)$, the deepest curve $\hat{\mu}_{\text{deep}}(s)$ ranked by Modified Band Depth, the focal curve $\hat{\mu}_{\text{focal}}(s)$, the ARIMA and Bootstrapped Re-anchored Day-Ahead (BRDA) forecast only. Errors are computed over the forecast horizon $\mathcal{I}_{>\tau}$ and expressed in capacity factor units.}

    \setlength{\tabcolsep}{10pt} 
    \renewcommand{\arraystretch}{1.25}

\begin{tabular}{l rrr rrr rrr}
    \toprule
     \textbf{Estimator} &
    \multicolumn{3}{c}{$\tau=6$am} & \multicolumn{3}{c}{$\tau=12$pm} & \multicolumn{3}{c}{$\tau=6$pm} \\  
    \cmidrule(lr){2-4} \cmidrule(lr){5-7} \cmidrule(lr){8-10} 
    &   \textbf{RMSE} & \textbf{MAE} & \textbf{MBE} & \textbf{RMSE} & \textbf{MAE} & \textbf{MBE} & \textbf{RMSE} & \textbf{MAE} & \textbf{MBE}\\ \midrule
      $\hat{\mu}_{\mathrm{mar}}$ & \textbf{0.132} & \textbf{0.099} & 0.003 & \textbf{0.144} & \textbf{0.111} & -0.004 
     & 0.140 & \textbf{0.117} & -0.006
     \\
      $\hat{\mu}_{\mathrm{deep}}$ & 0.142 & 0.107 & 0.002 
     & 0.155 & 0.119 & \textbf{0.0}
     & 0.143 & 0.119 & -0.005
     \\
      $\hat{\mu}_{\mathrm{focal}}$ & \textbf{0.132} & 0.103 & -0.003 
     & 0.147 & 0.116 & -0.003
     & \textbf{0.139} & 0.119 & \textbf{-0.004}
     \\
     \midrule
      ARIMA & 0.249 & 0.204 & -0.036
     & 0.217 & 0.171 & 0.046
     & 0.206 & 0.177 & 0.073
     \\
        BRDA & 0.143 & 0.109 & \textbf{0.001} 
     & 0.148 & 0.115 & 0.001
     & 0.159 & 0.135 & 0.005 \\
    \bottomrule
\end{tabular}

    \label{tab:wind_asset-error-unbiased}
\end{table}

The stricter SCP reverses the earlier conclusion. Unlike pointwise bands, SCP requires the entire realized trajectory to lie within the band, not just individual time points. At $\alpha=0.1$, the adjusted envelope achieves SCPs of $0.743$--$0.879$ and the projection envelope $0.682$--$0.822$, while marginal quantiles perform poorly ($0.283$--$0.514$), failing as simultaneous bands at every update time. The ordering $\mathcal{R}^{\mathrm{adj}} > \mathcal{R}^{\mathrm{prj}} > \mathcal{R}^{\mathrm{fun}} \gg \mathcal{R}^{\mathrm{mar}}$ holds across both significance levels and all update times, with the exception of $\alpha=0.2$ at 6pm, where the $\mathcal{R}^{\mathrm{prj}}$ overtakes $\mathcal{R}^{\mathrm{adj}}$ ($0.822$ against $0.814$). This highlights the sharpness–simultaneity trade-off: pointwise bands are calibrated for each instant but do not capture the reliability of the entire trajectory. For operators making multi-hour commitments, only $\mathcal{R}^{\mathrm{adj}}$ and $\mathcal{R}^{\mathrm{prj}}$ provide meaningful information about the whole curve. Notably, the focal curve is the most accurate point estimator at 6pm by both RMSE and MBE (Table~\ref{tab:wind_asset-error-unbiased}), making the combination of $\hat{\mu}_{\mathrm{focal}}$ and $\mathcal{R}^{\mathrm{prj}}_\alpha$ the best choice for users needing an accurate estimate, sharp intervals, and wide coverage. However, no method reaches nominal simultaneous coverage at $\alpha=0.1$, with the neighbors fraction saturating at one. The envelope spans every neighbor, so the limiting factor is not only its width but also how well-centered their neighborhood are.

Table~\ref{tab:wind_asset-error-unbiased} reports the quality of the point forecasts.  All three forecasts are essentially unbiased ($|\mathrm{MBE}| \leq 0.006$). The best RMSE is provided by the weighted marginal median curve $\hat{\mu}_{mar}$ and the focal curve $\hat{\mu}_{\mathrm{focal}}$, which differ by less than 2\% in RMSE at all update times.  $\hat{\mu}_{\mathrm{deep}}$ consistently yields the highest RMSE and MAE. Notably, the forecast errors are stable across $\tau$'s, in part because the largest errors occur during the evening ramp and the overnight period.  

The two baselines separate clearly from our probabilistic revision (see details in Supplementary Note~\ref{sec:baselines}). ARIMA is the least accurate estimator at every update time (RMSE $0.218$--$0.250$). In contrast, BRDA is competitive at 6am and 12pm (RMSE $0.143$ and $0.148$ against $0.132$ and $0.144$) but by 6pm the focal curve is 14.4\% more accurate ($0.139$ against $0.159$). 

Calibration of the \emph{probabilistic revision} is stable even when assessed interval by interval.  Each interval $[T_0, T_1)$ has a start time $T_0 = \tau + k$, with lead time $k \in \{0, 2, 4, \ldots, 22 - \tau\}$~h, and an ending time $T_1 := T_0 + \Delta$, with $\Delta = 2$h for wind assets. The Kolmogorov--Smirnov statistic $D_{KS}$ of the PIT over two-hour intervals never exceeds $0.049$ at any update time (Table~\ref{tab:wind_asset-ks-unbiased}), and shows no change with $\tau$. However, the final interval of the day is consistently the most weakly calibrated in all three columns ($0.030$, $0.040$, $0.031$). The two comparator baselines fail the same test by a wide margin. ARIMA reaches $D_{KS}=0.193$ in the first interval after the 6am update and performs worse than our algorithm in every interval. BRDA is better behaved but still peaks at $0.098$, roughly twice as worse as us. 

\subsection{Rolling  Intra-day Forecast Revisions}

\begin{figure*}[htb!]
    \centering
    \begin{minipage}{.49\textwidth} 
        \centering
        \begin{overpic}[width=1\textwidth, trim = {0cm, 0cm, 0cm, 0cm}, clip]{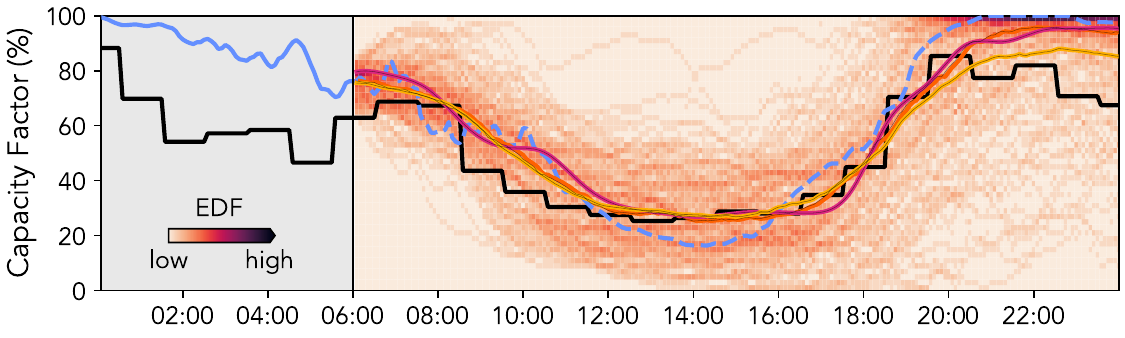}
            \put(0, 30){\textbf{a}}
        \end{overpic}
        \begin{overpic}[width=1\textwidth, trim = {0cm, 0cm, 0cm, 0cm}, clip]{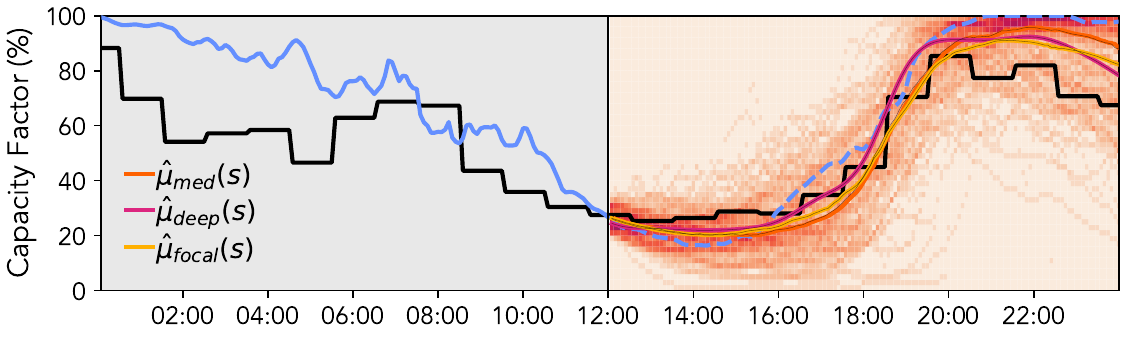}
            \put(0, 30){\textbf{c}}
        \end{overpic}
        \begin{overpic}[width=1\textwidth, trim = {0cm, 0cm, 0cm, 0cm}, clip]{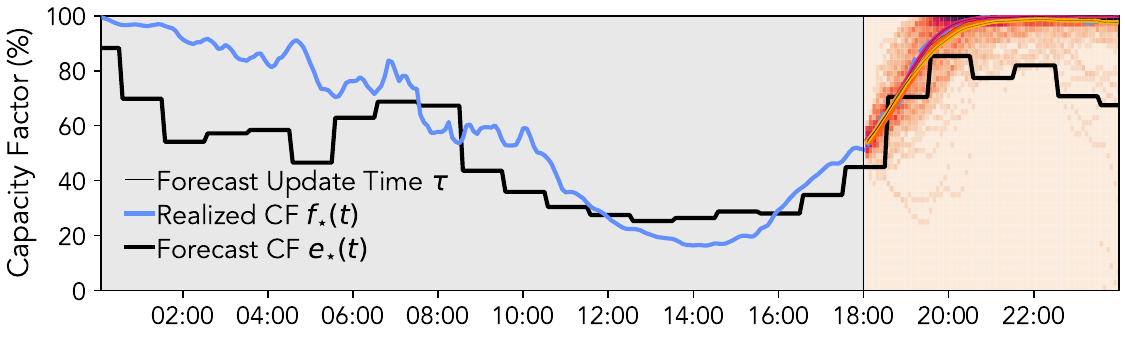}
            \put(0, 30){\textbf{e}}
        \end{overpic}
    \end{minipage}
    \begin{minipage}{.49\textwidth} 
        \centering
        \begin{overpic}[width=1\textwidth, trim = {0cm, 0cm, 0cm, 0cm}, clip]{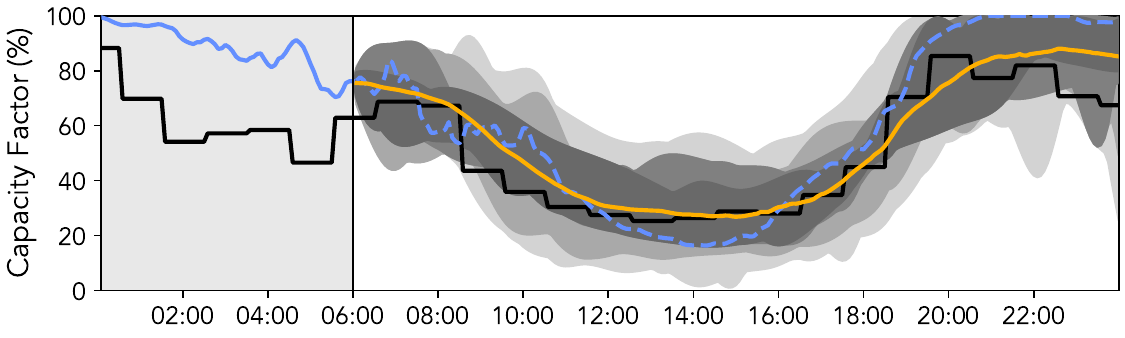}
            \put(0, 30){\textbf{b}}
        \end{overpic}
        \begin{overpic}[width=1\textwidth, trim = {0cm, 0cm, 0cm, 0cm}, clip]{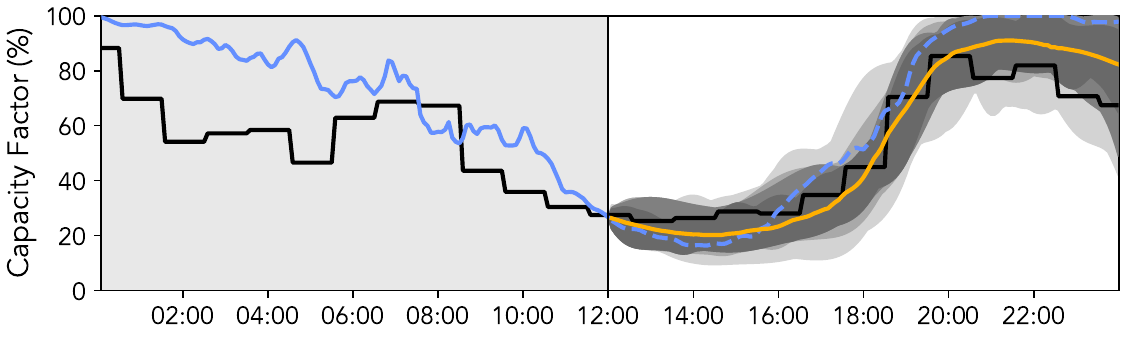}
            \put(0, 30){\textbf{d}}
        \end{overpic}
        \begin{overpic}[width=1\textwidth, trim = {0cm, 0cm, 0cm, 0cm}, clip]{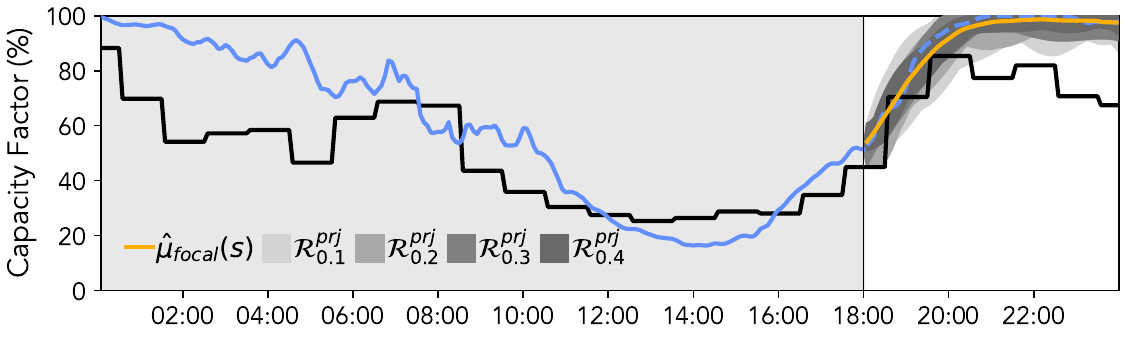}
            \put(0, 30){\textbf{f}}
        \end{overpic}
    \end{minipage}
    \centering
    \begin{overpic}[width=.165\textwidth, trim = {-.5cm, 0cm, 0cm, 0cm}, clip]{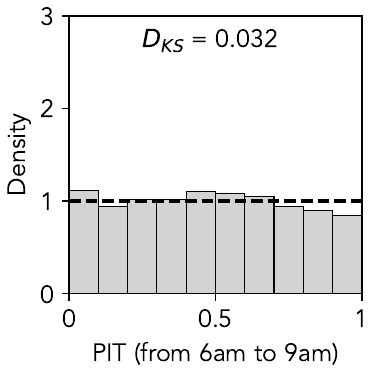}
        \put(2, 85.5){\textbf{g}}
    \end{overpic}
    \includegraphics[width=.1525\textwidth, trim = {0cm, 0cm, 0cm, 0cm}, clip]{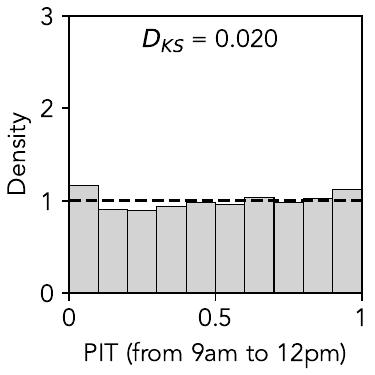}
    \begin{overpic}[width=.1525\textwidth, trim = {0cm, 0cm, 0cm, 0cm}, clip]{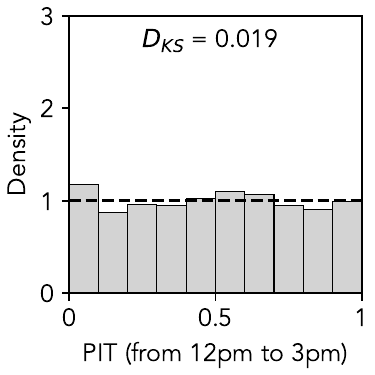}
        \put(-2, 90){\textbf{h}}
    \end{overpic}
    \includegraphics[width=.165\textwidth, trim = {-.5cm, 0cm, 0cm, 0cm}, clip]{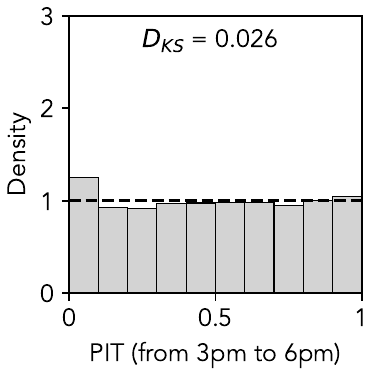}
    \begin{overpic}[width=.1525\textwidth, trim = {0cm, 0cm, 0cm, 0cm}, clip]{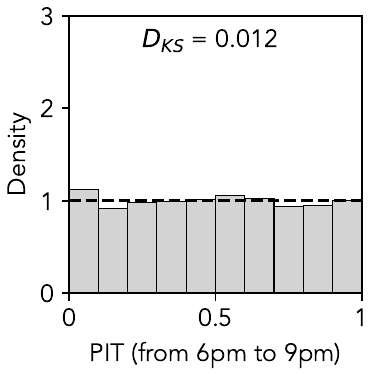}
        \put(0, 90){\textbf{i}}
    \end{overpic}
    \includegraphics[width=.1525\textwidth, trim = {0cm, 0cm, 0cm, 0cm}, clip]{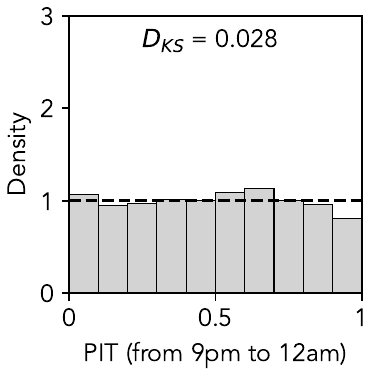}
    \caption{\textbf{Wind generation DA forecast functional probabilistic revision.}  intra-day forecasting for Barranca Wind asset on May 1. We consider three update times: $\tau=$ 6am (a, b, g), 12pm (c, d, h) and 6pm (e, f, i). Empirical density function (EDF) heatmap (a, c, e) with the median curves $\hat{\mu}_{\text{med}}$ Eq.~\eqref{eq:wquantile} (orange), focal curves $\hat{\mu}_{\text{focal}}$ Eq.~\eqref{eq:focal} (yellow), and deepest curves $\hat{\mu}_{\text{deep}}$ Eq.~\eqref{eq:central-curve} (violet). The focal curve envelope confidence regions (b, d, f). Bottom row shows the PIT evaluated in test dataset from the update events ---6am (g), 12pm (h) and 6pm (i)--- to 3h lead time (g, h, i, left), and from 3h lead time to 6h lead time (g, h, i, right). }    
    \label{fig:dynamic-pits}
\end{figure*}

We now let the update time vary, so that what changes across the revised forecasts is how much of the fixed target day has already been observed. Figure~\ref{fig:dynamic-pits} follows the Barranca Wind asset on May 1, with updates issued at $\tau=6$am, 12pm, and 6pm, each conditioned on the realized CFs up to $\tau$. As realizations accumulate, the marginal EDF concentrates, and the median $\hat{\mu}_{\text{med}}$, deepest $\hat{\mu}_{\text{deep}}$, and focal $\hat{\mu}_{\text{focal}}$ curves converge to one another (Figure~\ref{fig:dynamic-pits}a,c,e). Similarly, the focal envelopes $\mathcal{R}^{\mathrm{prj}}_\alpha$ contract at every confidence level (Figure~\ref{fig:dynamic-pits}b,d,f). The updates are not only sharper but better located.
The DA forecast $e_\star(s)$ understates the evening ramp (between $8$pm--$12$am) by $20$--$30\%$, a gap the $6$am update already closes in part and the $12$pm update closes entirely, after which $\hat{\mu}_{\text{focal}}(s)$ tracks the realized CF rather than $e_\star(s)$.

Calibration is preserved as the updates accumulate. The PITs in Figure~\ref{fig:dynamic-pits}g--i fix the lead time from each update event, evaluating the first three hours after $\tau$ separately from the following three. All six histograms are close to uniform, with $D_{KS}$ between $0.012$ and $0.032$ and no deterioration from the $0$--$3$h band to the $3$--$6$h band. These values fall below the bound obtained over the finer two-hour intervals of Table~\ref{tab:wind_asset-ks-unbiased}, as expected from the increases in observations per window. The weak calibration of the final interval reappears in the last window ($9$pm--$12$am, $D_{KS}=0.028$) but stays small, so the updated envelopes reflect an information gain rather than miscalibration.

\begin{figure*}[htb!]
    \centering
    \begin{minipage}{.49\textwidth} 
        \centering
        \begin{overpic}[width=1\textwidth, trim = {0cm, 0cm, 0cm, 0cm}, clip]{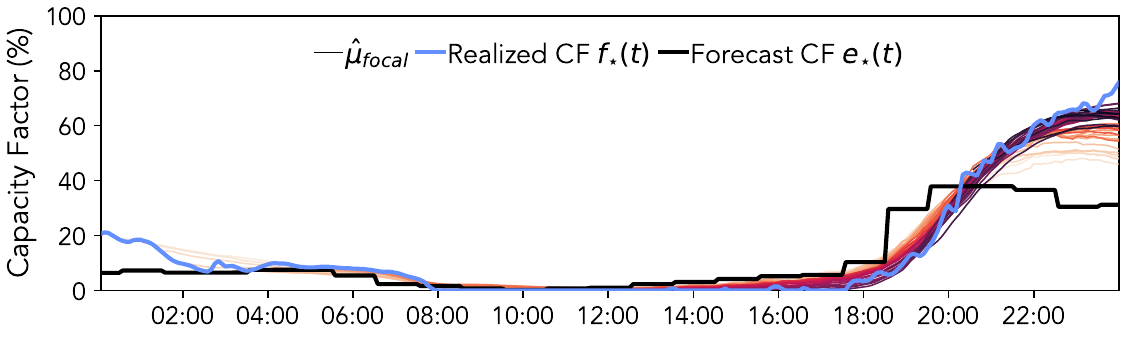}
            \put(10, 24){{\small Apr 23}}
        \end{overpic}
    \end{minipage}
    \begin{minipage}{.49\textwidth} 
        \centering
        \begin{overpic}[width=1\textwidth, trim = {0cm, 0cm, 0cm, 0cm}, clip]{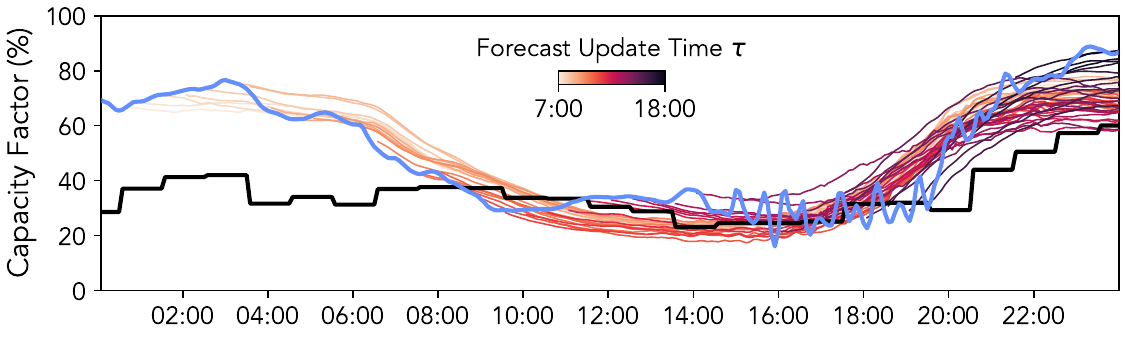}
            \put(10, 6){{\small Jun 26}}
        \end{overpic}
    \end{minipage}
    \caption{\textbf{Rolling dynamic updating.} Blanco Canyon wind asset on Apr 23 (left) and Jun 26 (right), matching Fig~\ref{fig:functional-intervals}. The focal curve $\hat{\mu}_{\text{focal}}(s), s\in \mathcal{I}_{>\tau}$ is updated every 30 minutes from $\tau=$12:30am to $\tau=12$am + $1$ day as new observations $f_\star(s)$ become available, with the colors indicating the forecast update time $\tau$. The realized trajectory $f(\cdot)$ is in blue and the original DA forecast (given only hourly) $e_\star(\cdot)$ is in black.}    
    \label{fig:focal-envelope}
\end{figure*}


Our intra-day revisions delivers forecasts at any update time, so with realizations arriving every 5 minutes one can build a matching stream of dynamic updates. To do so, we fine-tune hyperparameters for $\tau=6am, 12pm$ and 6pm and then linearly interpolate each of them for intermediate updates $\tau$; e.g.,~the trust horizon $\vartheta$ at 2pm is $(2/3)\vartheta_{12pm}+(1/3)\vartheta_{6pm}$.

Figure~\ref{fig:focal-envelope} shows the focal curve $\hat{\mu}_{\text{focal}}(s)$ updated every 30 minutes for the Blanco Canyon asset on the two contrasting days (we do not show the respective functional envelopes as the visualization is messy, but makes for a cool ``movie''). On Apr 23, the partial curve falls in a dense neighborhood and successive updates overlap, so the afternoon point forecast barely moves beyond the 11am update. On Jun 26, the neighborhood is sparse, and each new observation continues to move the focal curve through the evening ramp. 


\subsection{Case Study II: Intra-day Probabilistic Revisions for Solar Power Generation}\label{sec:solar}

We next run our probabilistic intra-day revisions on the 211 solar assets in the 2030 Texas-7k grid model for 2017 and 2018.
Solar generation is a hard test of transferability because it follows a deterministic diurnal trajectory pattern that wind lacks, and its uncertainty collapses at sunset rather than growing with lead time.  As with wind, solar generators exhibit substantial heterogeneity in installed capacity (Figure~\ref{fig:dataset}a), DA forecasting error (Figure~\ref{fig:dataset}b), and average CFs (Figure~\ref{fig:dataset}c). DA solar forecasts are similarly de-biased using Eq.~\eqref{eq:bias}. The proposed equinox-informed distance in Eq.~\eqref{eq:elevation_dist} is particularly useful for solar generation trajectories, since what makes two days comparable is the solar elevation profile rather than proximity in the calendar. The effect of this distance is visible in the neighborhood construction for Blue Bell Solar II on Aug 8 (Figure~\ref{fig:spatio-temporal-solar-ercot}). The functional neighbors are drawn almost exclusively from dates with a comparable solar elevation profile, concentrating around the target date, while the Haversine filter subsequently retains only the geographically consistent assets among them.

\begin{figure*}[htb!]
    \centering
    \begin{minipage}{.48\textwidth} 
        \centering
        \begin{overpic}[width=1.125\textwidth, trim = {0cm, 0cm, 0cm, 0cm}, clip]{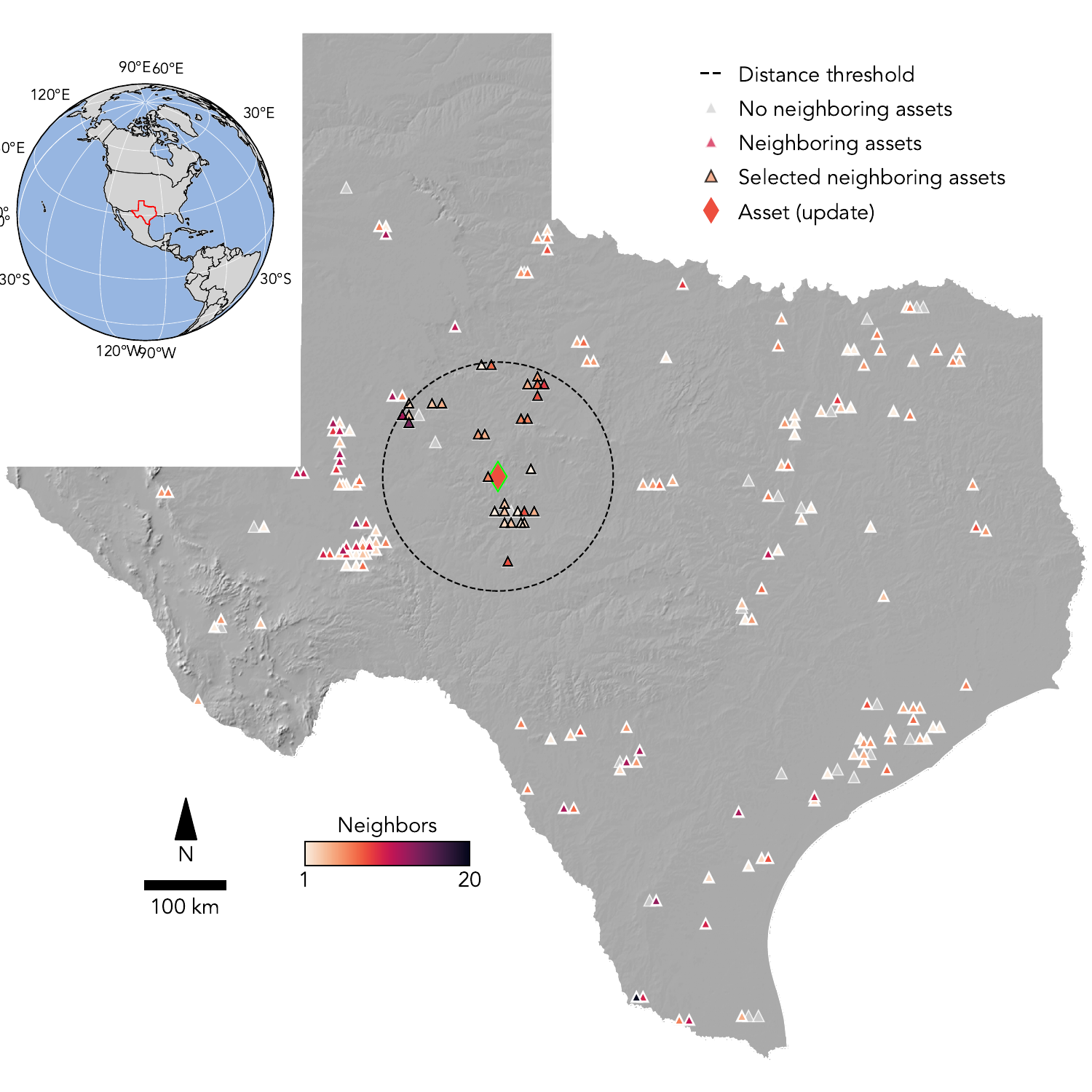}
        \end{overpic}
    \end{minipage}
    \begin{minipage}{.50\textwidth} 
        \centering
        \begin{overpic}[width=.75\textwidth, trim = {0cm, 0cm, 0cm, 0cm}, clip]{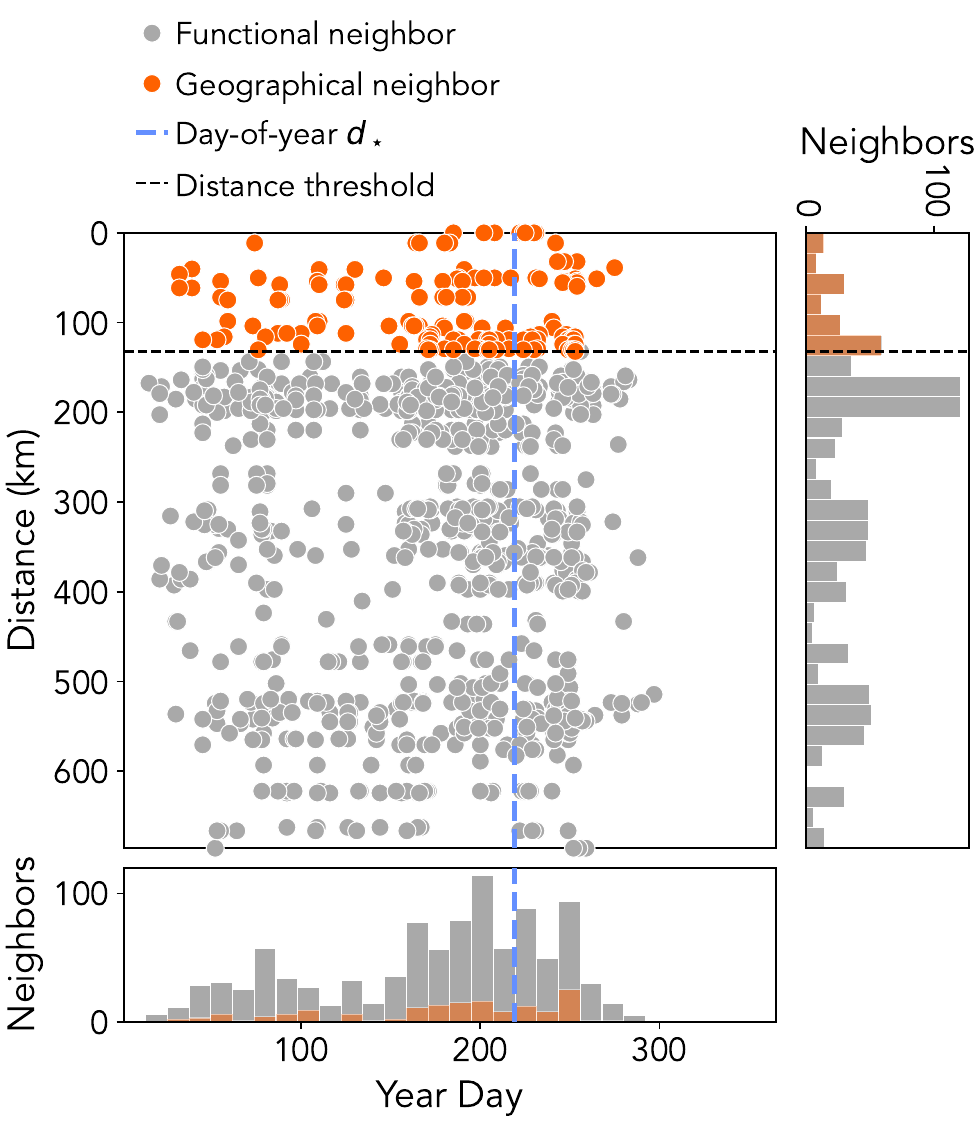}
        \end{overpic}
    \end{minipage}
    \caption{\textbf{Functional neighborhood for solar generation.} Neighboring curves selected for Blue Bell Solar II (indicated with a bright green diamond) asset on Aug 8 (left panel). Gray markers represent assets without any similar curves to Blue Bell Solar II. Shaded markers with a white edge represent $\kappa_0$ solar assets with neighboring curves before applying the distance filter, and shaded markers with a black edge those with neighbors after applying the distance filter. The Haversine distance threshold, defined by the most distant functional neighbor, is the dashed black circle. Frequency of similar curves in the $\kappa$-neighborhood for each month, ranked by the distance to Blue Bell Solar II (right panel), before further filtering out by seasonal distance. In all panels, light colors represent low frequency and darker colors higher frequency.}
    \label{fig:spatio-temporal-solar-ercot}
\end{figure*}

Solar generation is a cyclostationary stochastic process with seasonal and diurnal periodicity \cite{terren2023processing}. This complicates comparisons of generation curves across days, since some variation is astronomical rather than meteorological. The standard normalization is the clear-sky index (CSI) which is not applicable in our setting as 
our probabilistic revision operates directly on generation CFs and we do not have access to asset-level irradiance. To address this, we restrict the operational-day interval to the daylight window $\mathcal{I}(d_{\star}) = \left[t_0(d_{\star}),\, t_1 (d_{\star})\right]$, excluding the hours that registered zero generation on all of the previous seven days. We then restrict all similarity computations to $\mathcal{I}(d_{\star})$.
As a result, solar generation has minimal edge-inflation and admits $p_{\mathrm{fusion}} = 0.9$, compared to $p_{\mathrm{fusion}} = 0.75$ for wind. Solar profiles tend to be less variable, so we take a larger $\lambda_f = 100$ (rather than the $40$ used for wind), which separates near from distant neighbors and flattens the PIT histograms.

Supplementary Table~\ref{tab:asset_hyperparameters} lists the fine-tuned hyperparameters for solar generation. For solar, the DA forecast carries much less weight ($\lambda_e/\lambda_f < 0.03$) in the morning and becomes irrelevant in the afternoon when $\lambda_e$ is essentially zero. Seasonal similarity goes in the opposite direction, steadily increasing in forecast update time. Thus, DA similarity matters more in the morning and seasonal similarity dominates in the afternoon, where it contributes nearly as much as CF similarity. Compared to wind, time-of-year is much more informative for solar, forming an astronomical baseline for information fusion. Supplementary Figure~\ref{fig:solar_asset_neighbors} shows the overall neighborhood construction for a solar asset and the spatiotemporal distribution of the $\kappa=150$ neighbors selected.


\begin{figure*}[htb!]
    \centering
    \begin{minipage}{.49\textwidth} 
        \centering
        \begin{overpic}[width=.9\textwidth, trim = {0cm, 0cm, 0cm, 0cm}, clip]{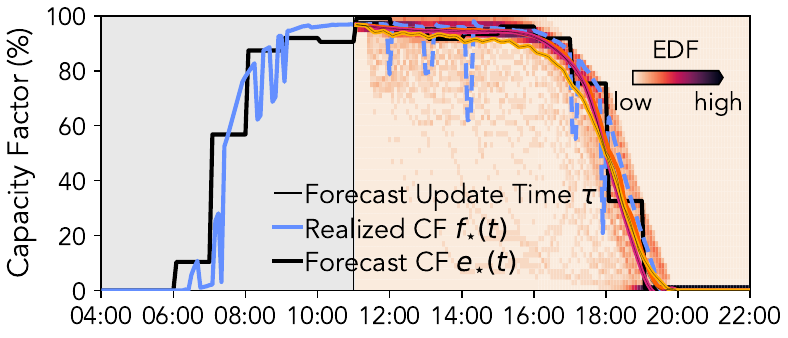}
            \put(0, 42.5){\textbf{a}}
        \end{overpic}
        \begin{overpic}[width=.9\textwidth, trim = {0cm, 0cm, 0cm, 0cm}, clip]{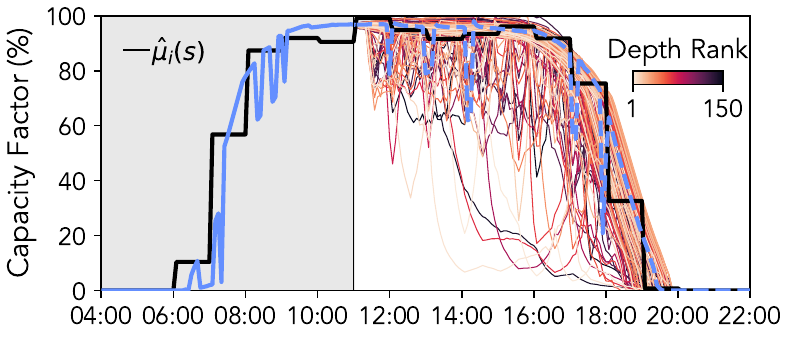}
            \put(0, 42.5){\textbf{c}}
        \end{overpic}
        \begin{overpic}[width=.9\textwidth, trim = {0cm, -.5cm, 0cm, 0cm}, clip]{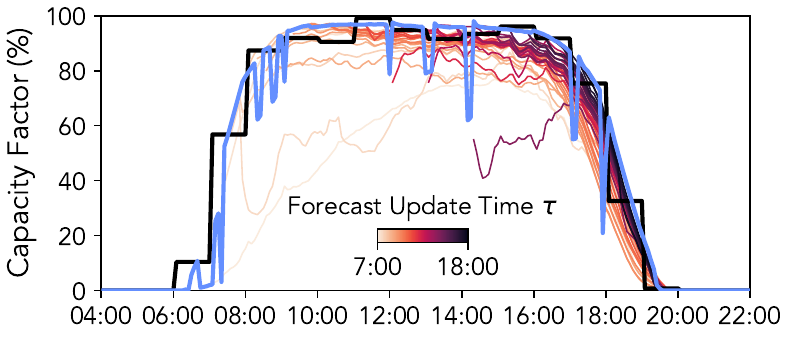}
            \put(0, 47.5){\textbf{e}}
        \end{overpic}
    \end{minipage}
    \begin{minipage}{0.49\textwidth}
        \centering
        \begin{overpic}[width=.9\textwidth, trim = {0cm, 0cm, 0cm, 0cm}, clip]{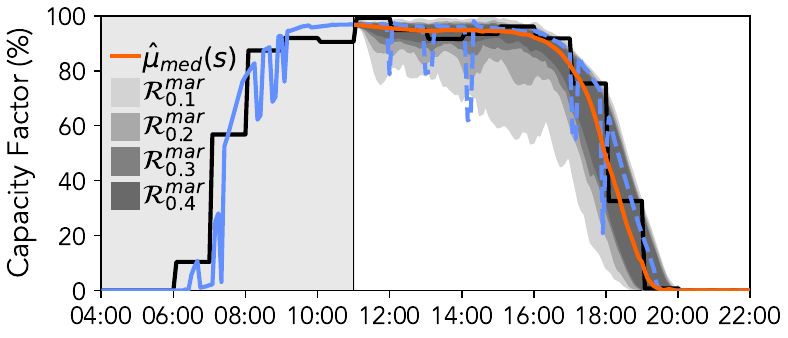}
            \put(0, 42.5){\textbf{b}}
        \end{overpic}
        \begin{overpic}[width=.9\textwidth, trim = {0cm, 0cm, 0cm, 0cm}, clip]{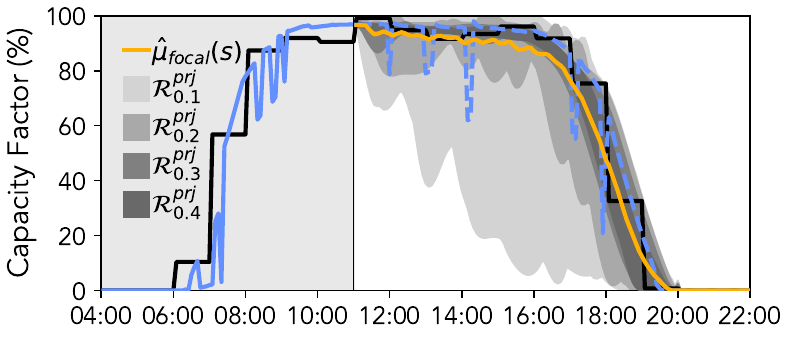}
            \put(0, 42.5){\textbf{d}}
        \end{overpic}
          \begin{overpic}[width=.4\textwidth, trim = {0cm, 0cm, 0cm, 0cm}, clip]{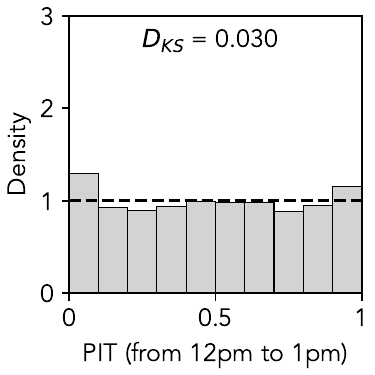}  
            \put(-5, 100){\textbf{f}}
        \end{overpic}  
          \begin{overpic}[width=.4\textwidth, trim = {0cm, 0cm, 0cm, 0cm}, clip]{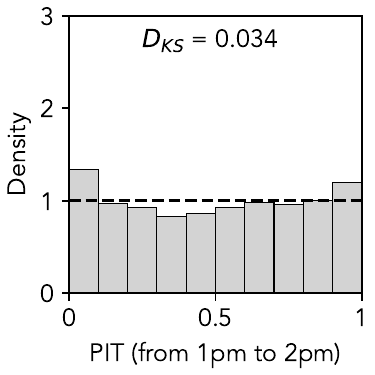}
        \end{overpic} 
    \end{minipage}
    \caption{\textbf{Solar generation DA forecast functional probabilistic revision.} Blue Bell Solar II asset on Aug 8 at forecast update time $\tau=11$am. Marginal empirical density function (EDF) heatmap with the median curve $\hat{\mu}_{\text{med}}$ in Eq.~\eqref{eq:wquantile} (orange), focal curve $\hat{\mu}_{\text{focal}}$ in Eq.~\eqref{eq:focal} (yellow), and deepest curve $\hat{\mu}_{\text{deep}}$ in Eq.~\eqref{eq:central-curve} (violet)  (a). Marginal quantile regions (b), fused forecasts $\hat{\mu}_i$ ranked by Modified Band Depth (MBD) (c), and focal envelope regions (d). Rolling probabilistic revision with the focal curve updated every 30 minutes from $\tau=7$am to $\tau=6$pm as new observations become available (e). PIT histograms at forecast update time $\tau=11$am, evaluated on lead times $s\in$ 12-1pm  (f, left) and $s\in$1-2pm (f, right).}
    \label{fig:solar_update}
\end{figure*}


Figure~\ref{fig:solar_update} illustrates the probabilistic CF completion at forecast update time $\tau=$~11am. We note the ``long tails'' of the predictive distribution during mid-day when solar generation can experience downward spikes due to passing clouds. These solar down-ramps (which are absent from the DA forecast and did end up realizing again in the afternoon) are successfully captured by a few of the ensemble members. While the marginal quantile regions $\cR^{\mathrm{mar}}$ are quite tight for all $s \in \cI_{>\tau}$ (Figure~\ref{fig:solar_update}b), the focal envelope $\cR^{\mathrm{prj}}$ retains each neighbor as a whole trajectory and therefore stays open to the cloud-driven ramps that some past-similar days have exhibited (Figure~\ref{fig:solar_update}d), highlighting the difference in the respective predictive band constructions. 


Figure~\ref{fig:solar_update}e shows a rolling update, visualizing the focal curve $\hat{\mu}_{\mathrm{focal}}$ updated every 30 minutes. Notably, the early-morning updates are very volatile due to the repeated down-ramps. Strong mean-reversion to the typical CF trajectory can be observed when forecast updates coincide with further cloudiness (e.g., for $\tau=1$pm and $\tau=$2pm).

The point and interval metrics reported in
Tables~\ref{tab:solar_asset-error-unbiased}, \ref{tab:solar_unbiased_asset_envelope} and \ref{tab:solar_asset-ks-unbiased}, demonstrate high sharpness and robust calibration. The intervals $[T_0, T_1)$ run from 10am to 4pm with lead times $k \in \{0, 2, 4, \ldots\}$ and $\Delta=1$~h for solar assets. Forecast intervals become progressively sharper throughout the day, for example FIS for $\cR^{\mathrm{adj}}$ decreases from $0.49$ at update time $\tau=10$am to $0.313$ at $2$pm at $\alpha = 0.1$ (Table~\ref{tab:solar_unbiased_asset_envelope}). The top two envelope constructions are $\cR^{\mathrm{mar}}$ and $\cR^{\mathrm{adj}}$, which also have FCS values that are very close to nominal levels. 
The unadjusted functional envelope $\cR^{\mathrm{fun}}$ is again too wide and tends to over-cover (FCS is above 95\% even with 80\% nominal). The adjusted functional envelope  $\cR^{\mathrm{adj}}$ provides the best (by far) SCP values. We note that simultaneous functional coverage is challenging in the presence of cloud-driven downramps. There are no significant differences among the point forecasts which all have comparable RMSE and near-zero MBE values (Table~\ref{tab:solar_asset-error-unbiased}).


The PIT remains close to uniform over the whole horizon ($D_{KS}\leq0.042$, Table~\ref{tab:solar_asset-ks-unbiased}), but degrades over the final daylight hours due to the hard sunset boundary that collapses the entire predictive distribution. Figure~\ref{fig:solar_update}f illustrates a couple of the PIT histograms with respective $D_{KS} = 0.030$ and $0.034$ over the two lead-time ranges, comparable to the wind experiments.

\subsection{Case Study III: ERCOT Zonal Wind Medium-Term Forecasts}\label{sec:wind_zone}

As a final case study, we extend the probabilistic revision from the  intra-day to the medium-term setting and move from synthetic assets to real market data. We start with hourly-issued, $48$~h-ahead (2DA) wind generation forecasts and their realizations for the five aggregated regions of ERCOT (Panhandle, North, Central, South and West) from Jan 2023 through Dec 2025, published by GridStatus \cite{gridstatus_ercot_wind_forecast}. We use 2023--2024 as our collection of curves, and test over 2025. To make generation comparable across zones, we convert MW to CFs by normalizing by the commissioned wind capacity in that region in the corresponding month. Monthly online capacity is constructed from generator-level records in the U.S. Energy Information Administration's Preliminary Monthly Electric Generator Inventory (Form EIA-860M), which reports capacity, location, resource, operating status and operating date \cite{eia860m}; see Figure~\ref{fig:ercot-monthly-capacity}. Each generator is then assigned to an ERCOT region by mapping its reported location to a county and applying ERCOT's published wind and solar region-to-county mapping \cite{ercot_region_county_mapping}.

\begin{figure*}[htb!]
    \centering
    \begin{minipage}{.42\textwidth} 
        \centering
        \begin{overpic}[width=.925\textwidth, trim = {0cm, .25cm, 0cm, 0cm}, clip]{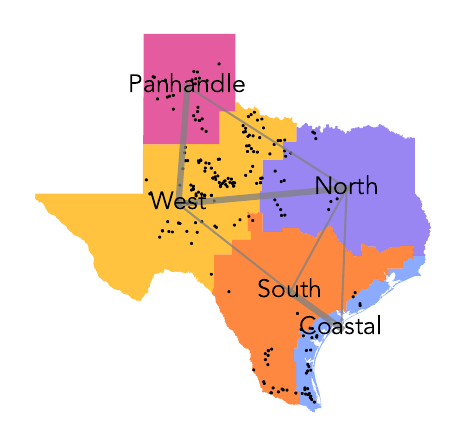}
            \put(0, 88){\textbf{a}}
        \end{overpic}
        \begin{overpic}[width=.45\textwidth, trim = {0cm, 0cm, 0cm, 0cm}, clip]{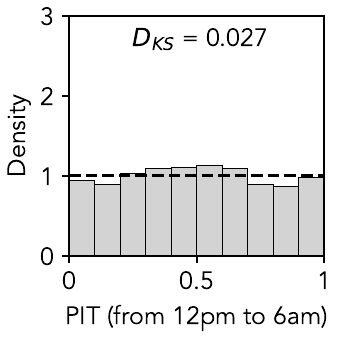}
            \put(-2.5, 101){\textbf{e}}
        \end{overpic}
        \begin{overpic}[width=.45\textwidth, trim = {0cm, 0cm, 0cm, 0cm}, clip]{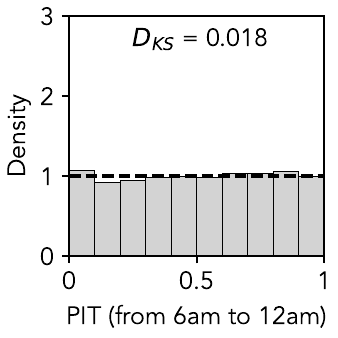}
        \end{overpic}
    \end{minipage}
    \begin{minipage}{.57\textwidth} 
        \centering
        \begin{overpic}[width=1\textwidth, trim = {0cm, 0cm, 0cm, 0cm}, clip]{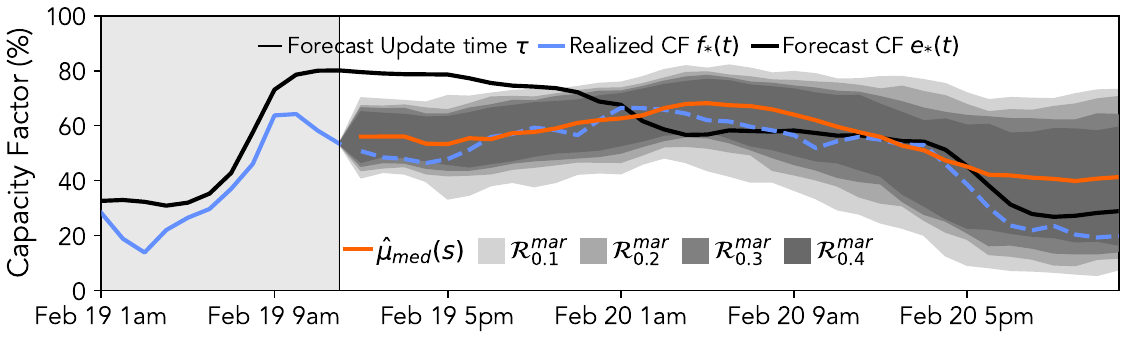}
            \put(0, 30){\textbf{b}}
        \end{overpic}
        \begin{overpic}[width=1\textwidth, trim = {0cm, 0cm, 0cm, 0cm}, clip]{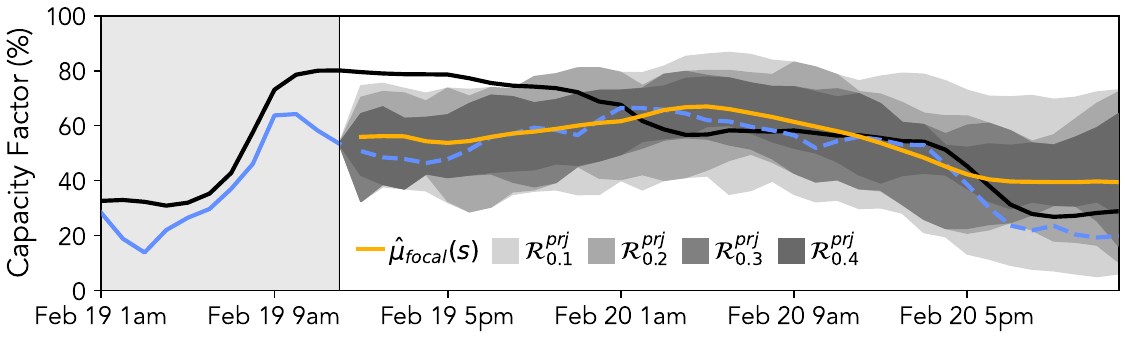}
            \put(0, 30){\textbf{c}}
        \end{overpic}
        \begin{overpic}[width=1\textwidth, trim = {0cm, 0cm, 0cm, 0cm}, clip]{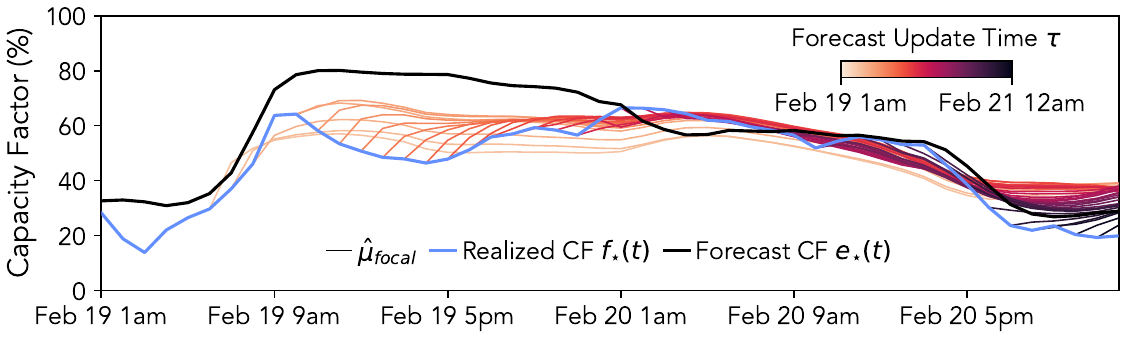}
            \put(0, 30){\textbf{d}}
        \end{overpic}
    \end{minipage}
    \caption{\textbf{Wind 2DA regional forecast probabilistic revision.} ERCOT's North region on $d_\star=$~Feb 19 at forecast update time $\tau = 12$pm with a $\cN^{(1)}$ neighborhood. (a) Wind regions in ERCOT (Panhandle, North, West, South, and Coastal), wind asset locations (black dots) and the graph $G$ defining the neighboring zones (thick edges link first-order neighbors and thin edges represent second-order neighbors). (b) Marginal quantile regions $\cR^{\mathrm{mar}}$ and $\hat{\mu}_{\text{med}}$ in Eq.~\eqref{eq:wquantile}. (c) Focal envelopes $\cR^{\mathrm{prj}}$ and $\hat{\mu}_{\text{focal}}$ in Eq.~\eqref{eq:focal}. In both panels, we show the envelopes for levels $\alpha=0.1, 0.2, 0.3, 0.4$. (d) Rolling probabilistic revision with the focal curve updated every hour as new observations become available, from $\tau = 12$pm to $\tau = 11$pm $+\,2$ days. Panel (e) PIT histograms evaluated over 0-18 lead hours (left), and 18-36 lead hours (right) starting at $\tau = 12$pm.}
    \label{fig:zone_dynamic_wind}
\end{figure*}
 
The previous case studies defined spatial distance from geographical (lat, lon) coordinates. The ERCOT zones admit no such natural distance, and leave far fewer curves to draw neighbors from. Accordingly, we introduce a discrete graph distance, representing the ERCOT zones as a graph $G = \left(V, E\right)$, where nodes represent zones and edges represent spatial proximity between zones. Figure~\ref{fig:zone_dynamic_wind}a visualizes $V$ with direct neighbors indicated by thick edges.  For a given node $i \in V$, the set of its $k$th-order neighbors is defined as $\mathcal{N}^{(k)}(i) = \{j \in V : d(i,j)=k\}$, where $d(i,j)$ denotes the graph distance between nodes $i$ and $j$. The zeroth-order neighborhood $\cN^{(0)}$ is the node $i$ itself, the first-order neighborhood $\cN^{(1)}$ is the set of direct neighbors, and the second-order neighborhood $\cN^{(2)}$ adds more distant nodes (thin edges in Figure~\ref{fig:zone_dynamic_wind}a). Raising $k$ increases the pool size for the nearest neighborhood and reduces homogeneity. As a further distinction to the first two case studies, 
%
the zonal setup has $48$h horizon.


The probabilistic revision shown in Figure~\ref{fig:zone_dynamic_wind} considers the North region on Feb~12 at forecast update time $\tau=$~12pm with a first-order neighborhood $\cN^{(1)}$ which includes 2 (North and West) out of 5 zones. The $\kappa=150$ neighborhood ensemble curves are constructed exactly as in the asset-level experiments, see Figure~\ref{fig:wind_region_neighbors}a--c. Spatially, about 85 neighbors come from the North region itself $\cN^{(0)}$, with the remainder from the West region, see Figure~\ref{fig:wind_region_neighbors}d. Temporally, the neighbors concentrate in the late winter and early spring months around the forecast day, although ensemble members from all 12 months are present (Figure~\ref{fig:wind_region_neighbors}e). Compared to asset-level wind forecasting, zonal data requires a smaller CF similarity weight ($\lambda_f = 10$ against $40$), and same neighborhood ($\kappa = 150$), without which $D_{\mathrm{KS}}$ does not reach the values reported above (Table~\ref{tab:wind_zone-envelope_c1-unbiased}). 
In the zonal experiments, the loss in Eq.~\eqref{eq:loss} is relatively flat around $\lambda_e$, but higher values progressively degrade sharpness. Therefore, we cap $\lambda_e$ at 10. This bound is small enough to identify interior values where the loss is informative ($\lambda_e = 1.87$ and $6.78$). The selected seasonal weights $\lambda_d$ are small, indicating that the time-of-year information is already in the shape of the neighboring 2DA forecasts (Table~\ref{tab:wind_zone-hyperparameters-unbiased}). Since zonal generation almost never hits either zero or maximum CF of 1, there is no need for edge-inflation and we always fuse with the original forecast, taking $p_{\mathrm{fusion}} = 1$. 

\begin{remark}
    \label{rem:reissue}
    Let $e_\star$ denote the 2DA forecast of the operational day $d_\star$ with similarity weights $\omega_i$, and let $e^{(\tau)}_\star$ denote the most recently issued 2DA forecast available at the update time $\tau$. The neighborhood $\cA_\kappa$, the weights $\omega_i$ and $\delta_{\nu,\vartheta}(s)$ computed from $e_\star$, remain fitted as at $\tau$. The fusion with Eq.~\eqref{eq:blended-fcst} is instead constructed by replacing the forecast of each neighbor $e_{k(i)}(s)$ with the reissued forecast $e^{(\tau)}_\star(s)$, so that the neighborhood is unchanged, but the fusing forecast is the current one.
\end{remark}

At $\tau=$~12pm shown in Figure~\ref{fig:zone_dynamic_wind}b-c, the marginal predictive bands $\cR^{\mathrm{mar}}$ and the projected functional envelopes $\cR^{\mathrm{prj}}$ look largely similar, with the latter being a bit tighter at the end of the day. Unlike other case studies, here we can distinguish between the quality of the point forecasts: $\hat{\mu}_{\mathrm{mar}}$ exhibits a persistent positive bias, while $\hat{\mu}_{\mathrm{deep}}$ has very high RMSE. Thus, the focal curve $\hat{\mu}_{\mathrm{focal}}$ is the clear winner, with low RMSE and MAE and with a bias never exceeding $0.004$ in magnitude 
(Table~\ref{tab:wind_zone-error_c1-unbiased}). The projection envelope $\cR^{\mathrm{prj}}_{\alpha}$, which is built around  $\hat{\mu}_{\mathrm{focal}}(s)$, has near-perfect FCS and achieves excellent FIS (Table~\ref{tab:wind_zone-envelope_c1-unbiased}). 
%
In contrast, the functional envelope $\cR^{\mathrm{fun}}_{\alpha}$ again over-covers ($0.93$--$0.97$), and the marginal $\cR^{\mathrm{mar}}$ has very poor simultaneous coverage. The adjusted $\cR^{\mathrm{adj}}$ slightly outperforms  $\cR^{\mathrm{prj}}$ on SCP and on FIS in the morning. 



As the operational day progresses, the proposed method dynamically updates the focal curve at every hour (Figure~\ref{fig:zone_dynamic_wind}d), progressively revising the trajectory over the $48$~h forecasting horizon. The PIT histograms are flat over both the $0$--$18$~h and $18$--$36$~h lead windows, with $D_{KS} = 0.027$ and $0.018$, respectively (Figure~\ref{fig:zone_dynamic_wind}e), indicating that the probabilistic forecasts remain well calibrated at this aggregation level and with an extended forecasting horizon (Table \ref{tab:wind_zone-envelope_c1-unbiased}). 

We also check the impact of picking the zonal neighborhood. Raising the zonal neighborhood order $k$, more importance is placed on DA similarity and especially seasonal similarity (Table \ref{tab:wind_zone-hyperparameters-unbiased}). Because the larger pool is climatically more heterogeneous, the algorithm switches to emphasizing time-of-year. The trust level and trust horizon hyperparameters slightly increase in $k$.


\section{Conclusion}\label{sec:conclude}

In this study we design a data-driven non-parametric approach to generate and update probabilistic forecasts intra-day. To this end, we use nearest-neighbor logic across four quantities: day-ahead (DA) hourly forecast curve, partially realized sub-hourly generation curve, geographic distance across assets and equinox distance between days of the year. Critically, we utilize a functional perspective both to define the similarity metric and to subsequently construct functional envelopes. 

The proposed framework effectively updates DA forecasts into calibrated probabilistic ensemble forecasts of the remainder of the operational day. The calibration procedure, which tunes the fusion (trust level and horizon) and neighborhood selection (lengthscale) hyperparameters via Bayesian optimization, yielded Probability Integral Transform diagnostics close to uniformity, functional coverage scores close to nominal levels, and low interval scores, and was validated across both wind and solar assets, as well as real-life ERCOT data.

The principal benefits of our method arise from its nonparametric and distribution-free nature. Our method makes no parametric assumptions about the generation process, requires no numerical weather model, and leaves the underlying DA model untouched. As a consequence, the method is portable across assets, technologies (i.e.,~wind and solar), and regions, and its low computational cost makes intra-day operational deployment practical. Theoretical support is provided for the proposed neighborhood construction, the nesting of the functional envelopes, and the calibration. Because the ensemble members are built from historical curves, they inherit realistic ramps, temporal persistence, and capacity factor boundary by construction.  Together, these properties make the framework an operationally deployable complement to existing DA forecasting pipelines, at both the asset and aggregate levels.

Our results offer two practical insights into the functional nearest-neighbor construction. First, the conservative three-channel neighborhood selection, which requires candidate neighbors to be simultaneously close in the partial realization, the DA forecast, and the day-of-year space, and is restricted to geographically nearby assets, produces ensembles whose dispersion adapts naturally to the day's difficulty. It is tight when many close neighbors exist and appropriately wide on atypical days. Second, the fusion with the DA forecast ought to be fine-tuned to appropriately capture reversion to the mean, with underlying hyperparameters that strongly depend on update time $\tau$. 

Several tasks are left for future research. First, the specifics of how to define and fuse similarity metrics between two pairs of data objects should be investigated further. For instance, we relied on a simple geodesic lat-lon distance for spatial similarity, which may not capture the differences in climatic conditions, terrain, vegetation, topography, built environments, and moisture that may strongly impact the relevance of using a different asset for forecasting. Alternatives, such as using spatial warping techniques \cite{li2025probabilistic} or explictly clustering by  climate zones  might overcome this limitation.


Next, our implicit premise was to seek an unbiased probabilistic forecast that is assessed symmetrically in terms of predictive quantiles.  However, in power system operations, one may seek a forecast that  minimizes expected operational costs (e.g., curtailment) or reduces operational risks (e.g., loss-of-load) in quasi-real-time decision-making. The forecast and functional envelope that are optimal for decision support may therefore exhibit intentional bias if the underlying cost or risk structure is asymmetric. Hence, taking into account operational risk criteria could be embedded into the construction of similarity metrics and for fine-tuning algorithm parameters. 

\section*{Code Availability}

The Python software, scripts to reproduce the experiments, and visualization Jupyter notebooks developed to generate the figures in this research are publicly available in a GitHub repository \url{https://github.com/gterren/dynamic_update}. The experiment results are available in a Zenodo repository. The assets locations are from the 2030 Texas-7k grid publicly available in a GitHub repository \url{https://github.com/PrincetonUniversity/Vatic}. 

\section*{Acknowledgements}

G.T.S. was partly supported by the California NanoSystems Institute (CNSI) Climate Innovation Postdoctoral Fellowship at the University of California Santa Barbara (UCSB). M.L. was partially supported by the National Science Foundation (NSF) grants DMS-2407550 and DMS-2420988. G.T.S. and M.L. thank Prof.~Deshmukh for providing valuable insight. G.T.S. and M.L. also acknowledge the ARPA-E PERFORM dataset generated by NREL \cite{OEDI_Dataset_5772}. Use was made of the computational facilities purchased with funds from NSF CNS-1725797 and administered by the Center for Scientific Computing (CSC). The CSC is supported by the CNSI and the Materials Research Science and Engineering Center (MRSEC; NSF DMR 2308708) at UCSB. 

\section*{Declaration of generative AI use}

The authors acknowledge the use of the AI tool Claude to assist with coding and data processing tasks, as well as editing and improving the clarity of the manuscript during the preparation of this work. After using this tool, the authors reviewed and edited the content as needed and take full responsibility for the content of the publication.

\section*{Author contributions}

G.T.S. and M.L. conceptualized the study, acquired funding, developed the methodology, curated the data, conducted the formal analyses and investigations, drafted the manuscript, and  reviewed and edited the manuscript. G.T.S. developed the software and worked on the visualization. M.L. supervised the project and handled project administration. 

\section*{Competing interests}

The authors declare no competing interests.

\printbibliography

\appendix

\clearpage

\section{Supporting Figures} 

\begin{figure*}[htb!]
    \centering
    \begin{minipage}{.48\textwidth} 
        \centering
        \begin{overpic}[width=1\textwidth, trim = {0cm, 0cm, 0cm, 0cm}, clip]{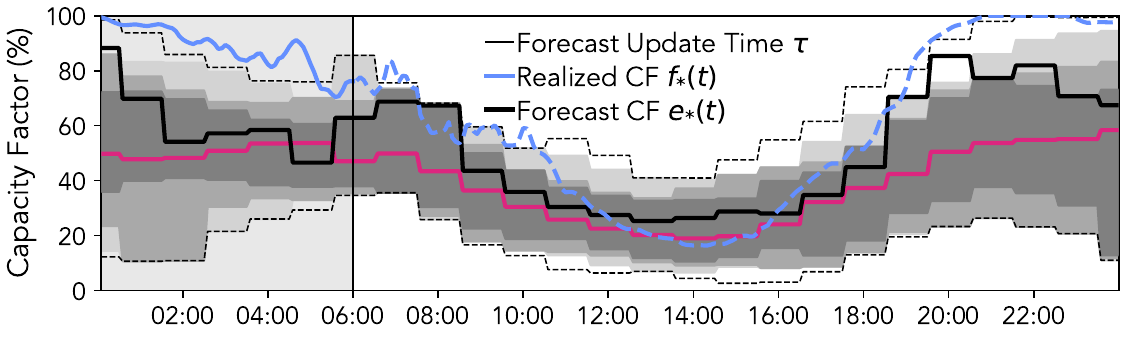}
            \put(0, 28.5){\textbf{a}}
        \end{overpic}
        \begin{overpic}[width=1\textwidth, trim = {0cm, 0cm, 0cm, 0cm}, clip]{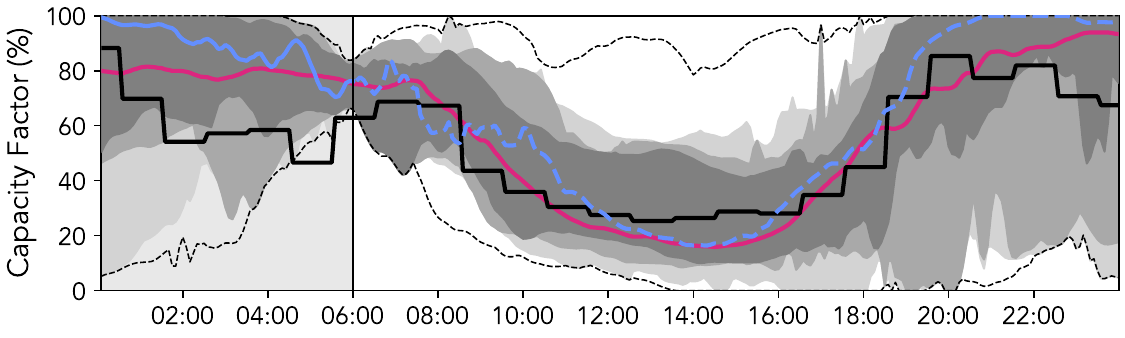}
            \put(0, 28.5){\textbf{b}}
        \end{overpic}
        \begin{overpic}[width=1\textwidth, trim = {0cm, 0cm, 0cm, 0cm}, clip]{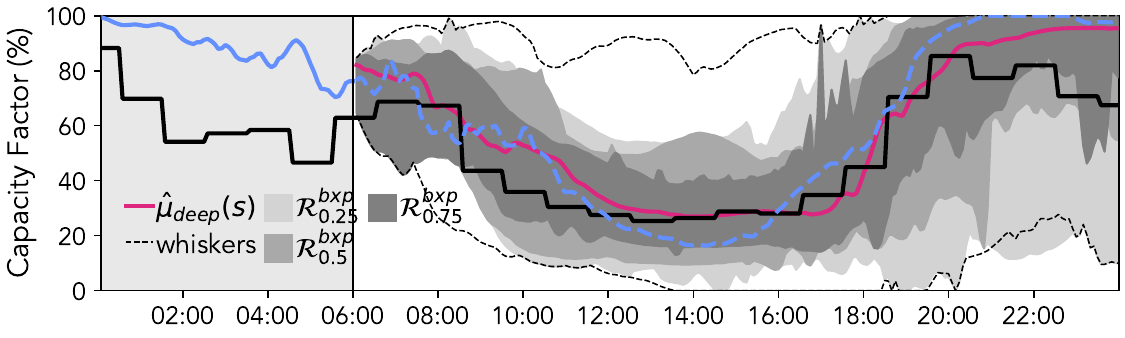}
            \put(0, 28.5){\textbf{c}}
        \end{overpic}
    \end{minipage}
    \begin{minipage}{.48\textwidth} 
        \centering
        \begin{overpic}[width=.725\textwidth, trim = {1cm, 0cm, 3cm, 0cm}, clip]{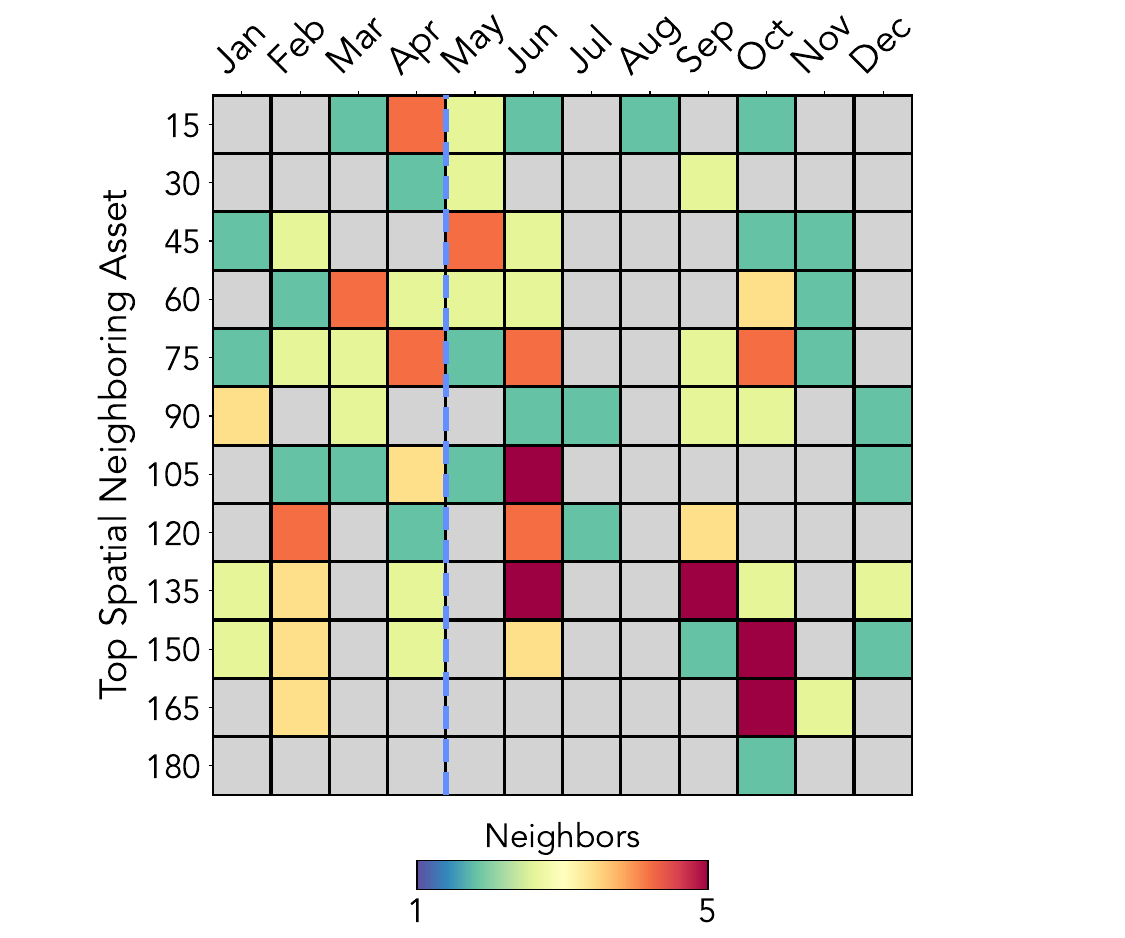}
            \put(2.5, 90){\textbf{d}}
        \end{overpic}
        \begin{overpic}[width=.255\textwidth, trim = {0cm, -3cm, 0cm, 0cm}, clip]{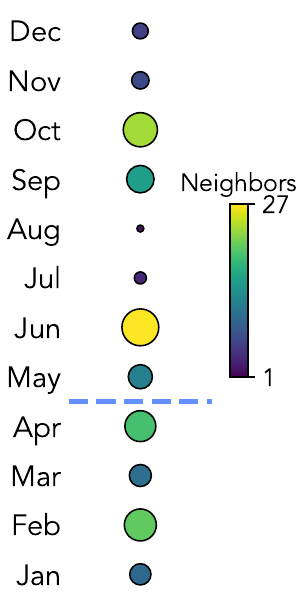}
            \put(0, 105){\textbf{e}}
        \end{overpic}
    \end{minipage}
    \caption{\textbf{Functional nearest neighbors.} Barranca Wind asset on $d_\star$=~May 1 at forecast update time $\tau = 6$am. \emph{Left column, panels (a,b,c):} Day-ahead forecast neighbors $e_1, \ldots, e_\kappa$, realized capacity factor neighbors $f_1 (t), \ldots, f_\kappa (t)$, and fused curves $\hat{\mu}_1(s), \ldots, \hat{\mu}_\kappa (s)$, all ranked by Modified Band Depth; \emph{Panel d:} Spatiotemporal distribution of neighbors; $d_\star$ marked as the blue vertical line; \emph{Panel e:} temporal distribution of neighbors by month; $d_\star$ is the blue horizontal line.}
    \label{fig:neighbors}
\end{figure*}

\begin{figure*}[htb!]
    \centering
    \begin{minipage}{.48\textwidth} 
        \centering
        \begin{overpic}[width=1\textwidth, trim = {0cm, 0cm, 0cm, 0cm}, clip]{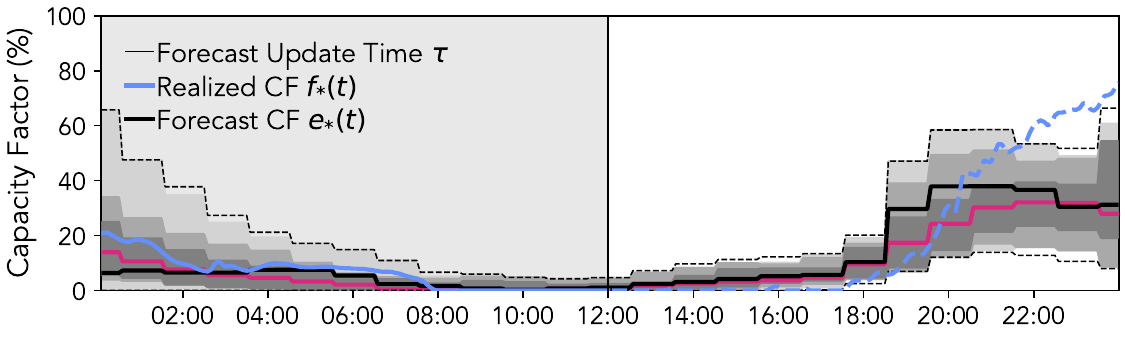}
            \put(0, 28.5){\textbf{a}}
            \put(55,25){{\scriptsize DA forecast neighbors}}
        \end{overpic}
        \begin{overpic}[width=1\textwidth, trim = {0cm, 0cm, 0cm, 0cm}, clip]{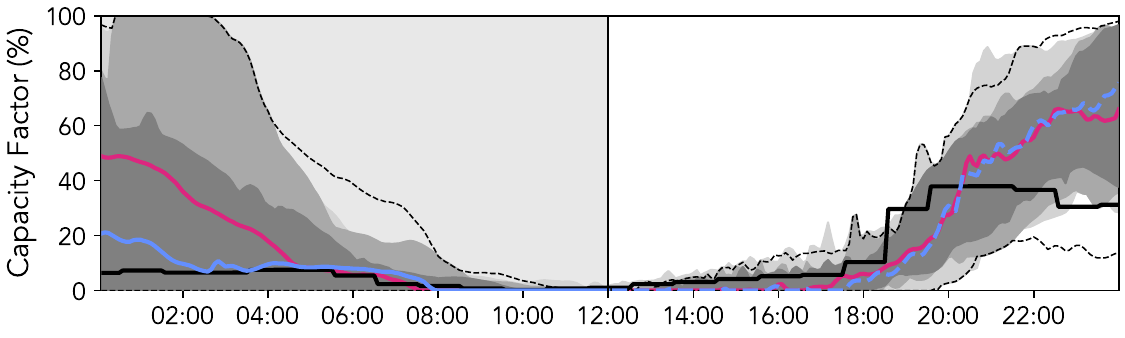}
            \put(0, 28.5){\textbf{c}}
            \put(55,25){{\scriptsize Partial CF neighbors}}
        \end{overpic}
        \begin{overpic}[width=1\textwidth, trim = {0cm, 0cm, 0cm, 0cm}, clip]{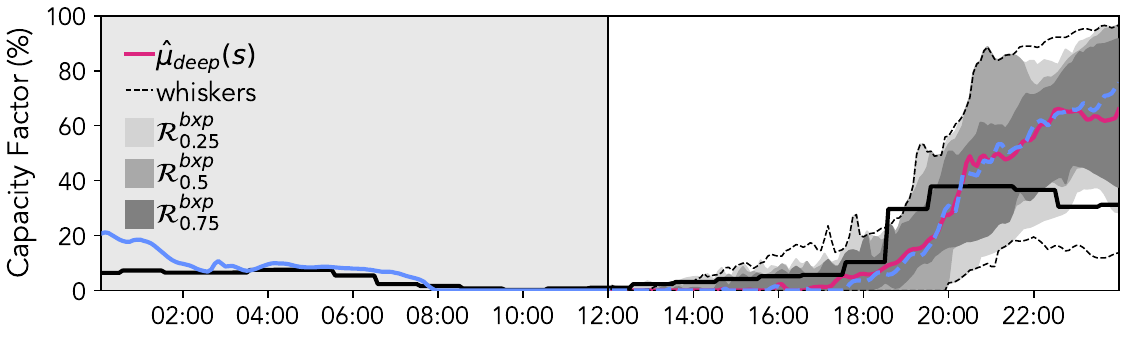}
            \put(0, 28.5){\textbf{e}}
            \put(55,25){{\scriptsize Fused neighbors}}
        \end{overpic}
    \end{minipage}
    \begin{minipage}{.48\textwidth} 
        \centering
        \begin{overpic}[width=1\textwidth, trim = {0cm, 0cm, 0cm, 0cm}, clip]{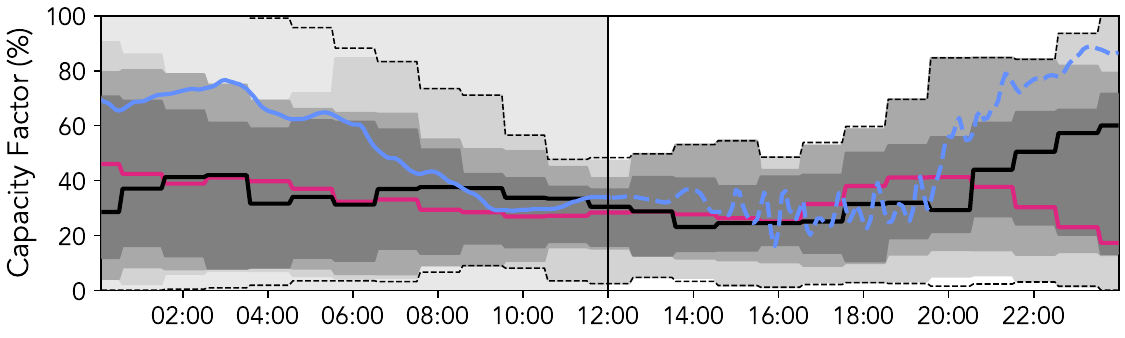}
            \put(0, 28.5){\textbf{b}}
        \end{overpic}
        \begin{overpic}[width=1\textwidth, trim = {0cm, 0cm, 0cm, 0cm}, clip]{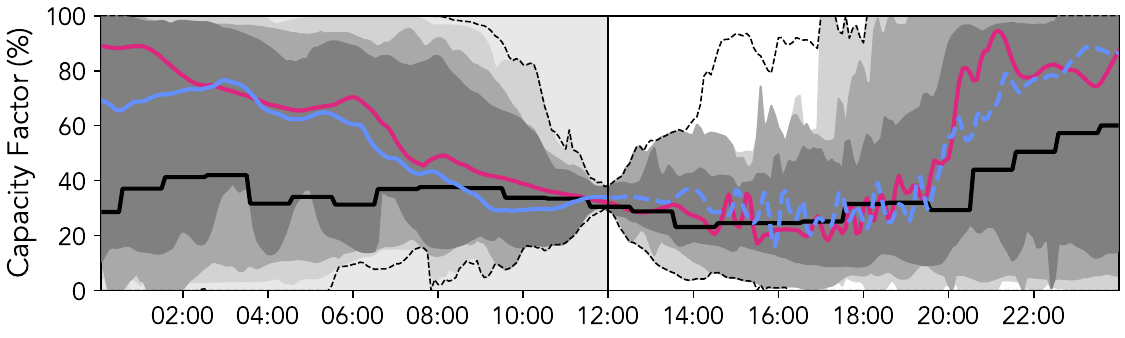}
            \put(0, 28.5){\textbf{d}}
        \end{overpic}
        \begin{overpic}[width=1\textwidth, trim = {0cm, 0cm, 0cm, 0cm}, clip]{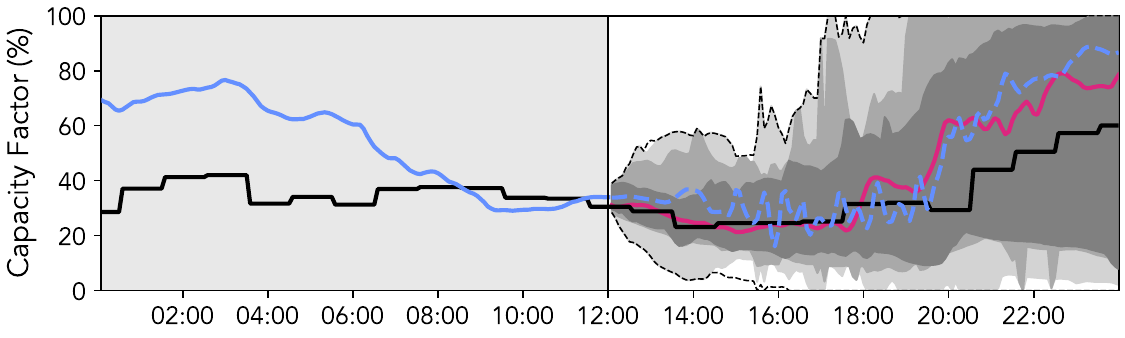}
            \put(0, 28.5){\textbf{f}}
        \end{overpic}
    \end{minipage}
    \caption{\textbf{Functional nearest neighbors.} Blanco Canyon asset at forecast update time $\tau = 12$pm. \emph{Left column (a, c, e)}: $d_\star$=~Apr 23; \emph{Right column (b, d, f):} $d_\star$=~Jun 26. \emph{Top Row:} DA forecast neighbors $e_1, \ldots, e_\kappa$, visualized as enhanced functional boxplot based on MBD (panels a, b); \emph{Middle row:}  realized partial CF curve neighbors $f_1, \ldots, f_\kappa$ (panels c, d); \emph{Bottom row:} fused curves $\hat{\mu}_1, \ldots, \hat{\mu}_\kappa$ (panels e, f).}
    \label{fig:functional_neighbors}
\end{figure*}

\begin{figure}[htb!]
    \begin{minipage}{.45\textwidth} 
        \centering
        \begin{overpic}[width=1\textwidth, trim = {0cm, 0cm, 0cm, 0cm}, clip]{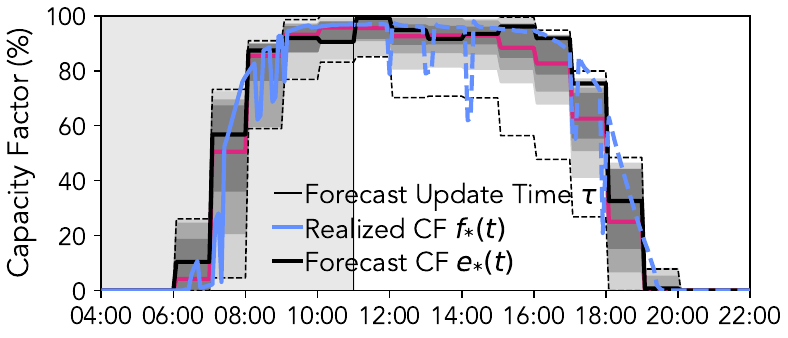}
            \put(0, 42){\textbf{a}}
        \end{overpic}
        \begin{overpic}[width=1\textwidth, trim = {0cm, 0cm, 0cm, 0cm}, clip]{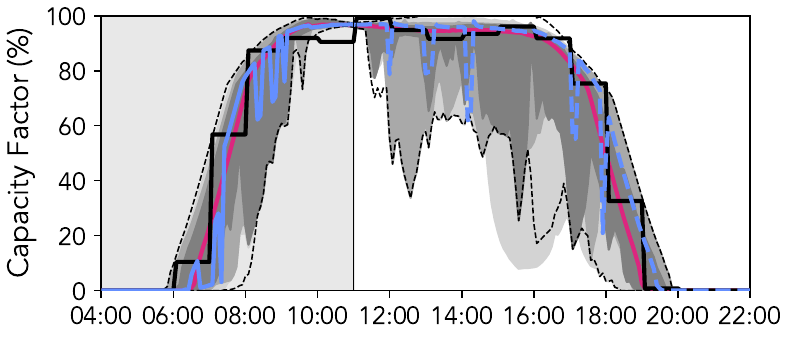}
            \put(0, 42){\textbf{b}}
        \end{overpic}
        \begin{overpic}[width=1\textwidth, trim = {0cm, 0cm, 0cm, 0cm}, clip]{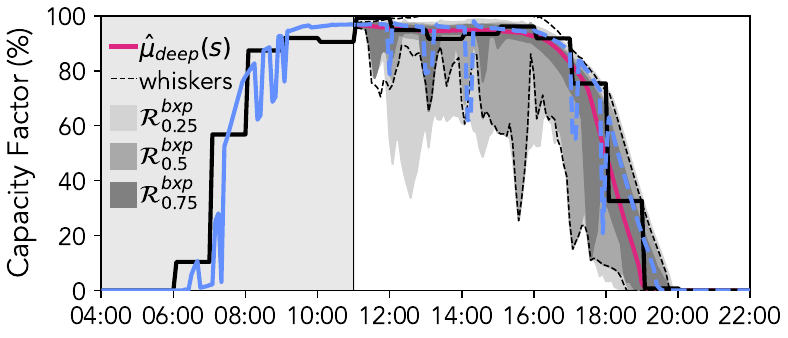}
            \put(0, 42){\textbf{c}}
        \end{overpic}
    \end{minipage}
    \begin{minipage}{.54\textwidth} 
        \centering
        \begin{overpic}[width=.71\textwidth, trim = {1cm, 0cm, 3cm, 0cm}, clip]{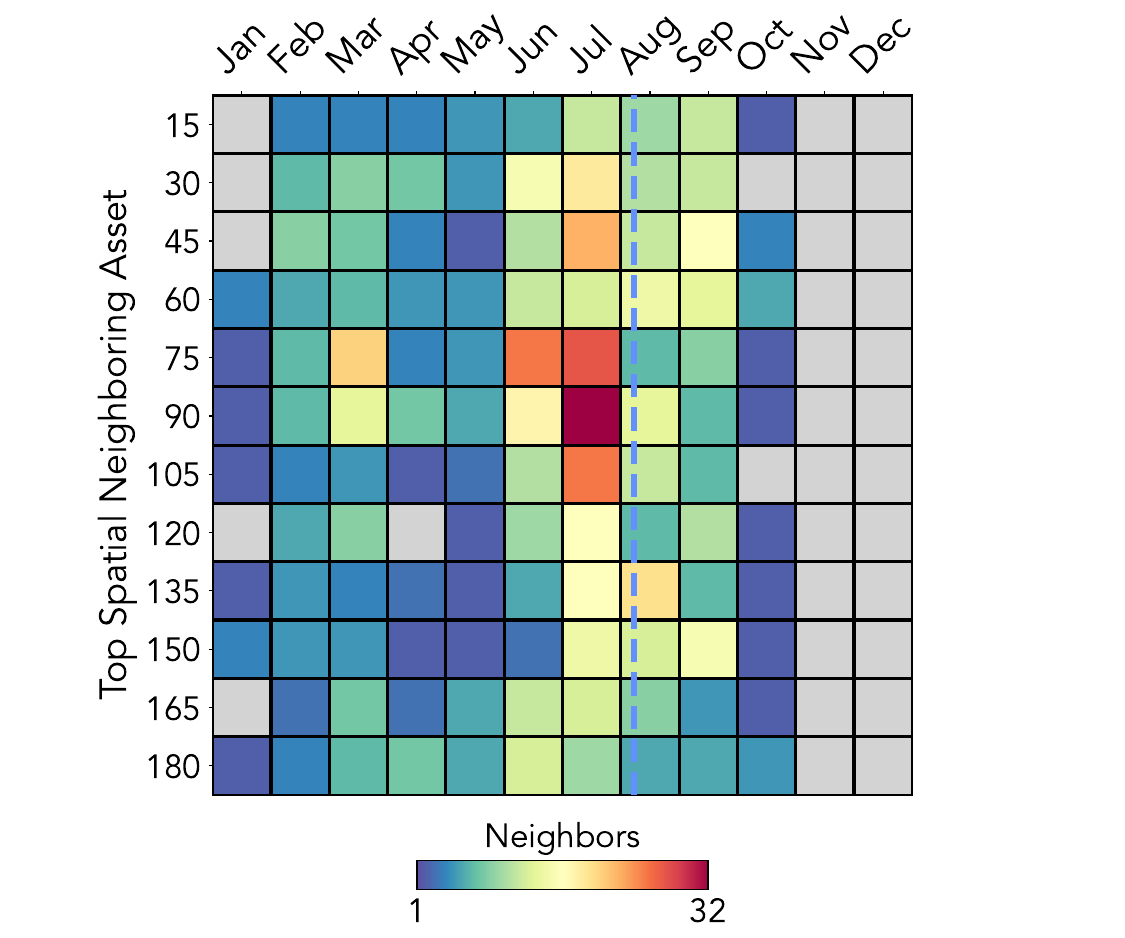}
            \put(2.5, 90){\textbf{d}}
        \end{overpic}
        \begin{overpic}[width=.275\textwidth, trim = {0cm, -2.25cm, 0cm, 0cm}, clip]{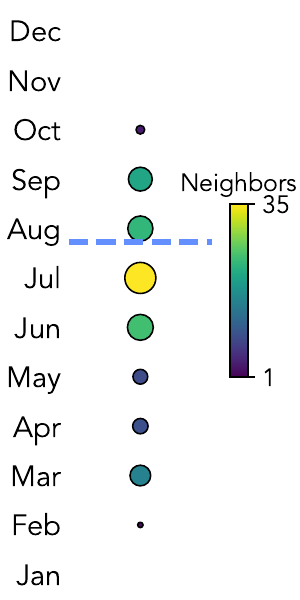}
            \put(0, 101){\textbf{e}}
        \end{overpic}
    \end{minipage}
    \caption{\textbf{Functional nearest neighbors for a solar asset.} Blue Bell Solar on Aug 8 at forecast update time $\tau = 11$am. \emph{Left column:} Blue curve is the partial realized $f_\star$ and solid black is the original DA forecast $e_\star$. \emph{Panel a:} Enhanced functional boxplot of the DA-forecast neighbors $e_1, \ldots, e_\kappa$. \emph{Panel b:} Enhanced functional boxplot of the CF neighbors $f_1, \ldots, f_\kappa$. \emph{Panel c:} Enhanced functional boxplot of the fused neighbors $\hat{\mu}_1, \ldots, \hat{\mu}_\kappa$. \emph{Panel d:} Spatiotemporal distribution of the $\kappa_0$ neighbors across months of the years and Haversine distances; $d_\star$ is marked as the blue vertical line. \emph{Panel e:} Distribution of neighbors by month $d_\star$ is the blue horizontal line.}
    \label{fig:solar_asset_neighbors}
\end{figure}


\begin{figure}[htb!]
    \begin{minipage}{.53\textwidth} 
        \centering
        \begin{overpic}[width=1\textwidth, trim = {0cm, 0cm, 0cm, 0cm}, clip]{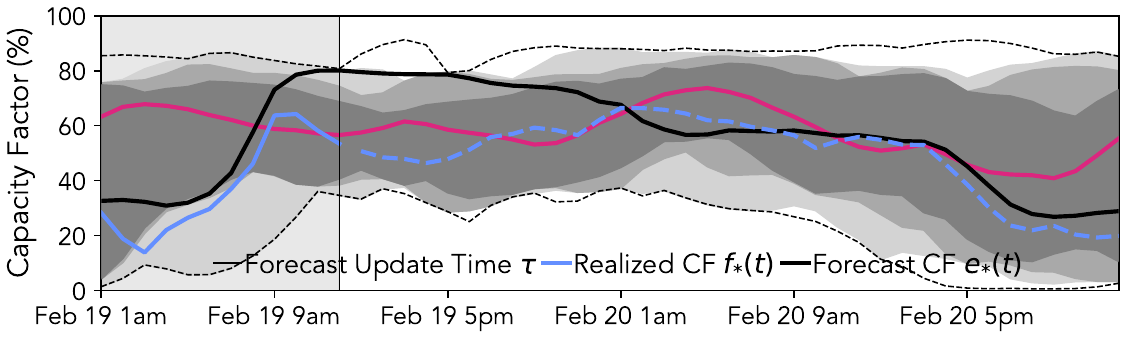}
            \put(0, 28.5){\textbf{a}}
        \end{overpic}
        \begin{overpic}[width=1\textwidth, trim = {0cm, 0cm, 0cm, 0cm}, clip]{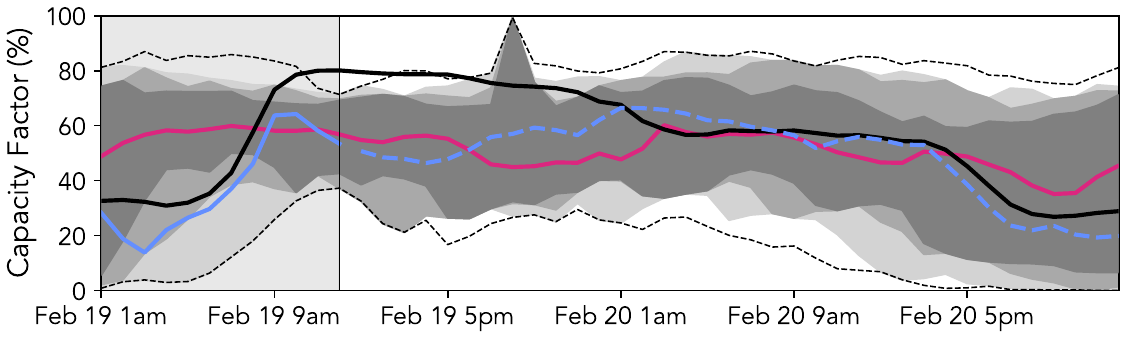}
            \put(0, 28.5){\textbf{b}}
        \end{overpic}
        \begin{overpic}[width=1\textwidth, trim = {0cm, 0cm, 0cm, 0cm}, clip]{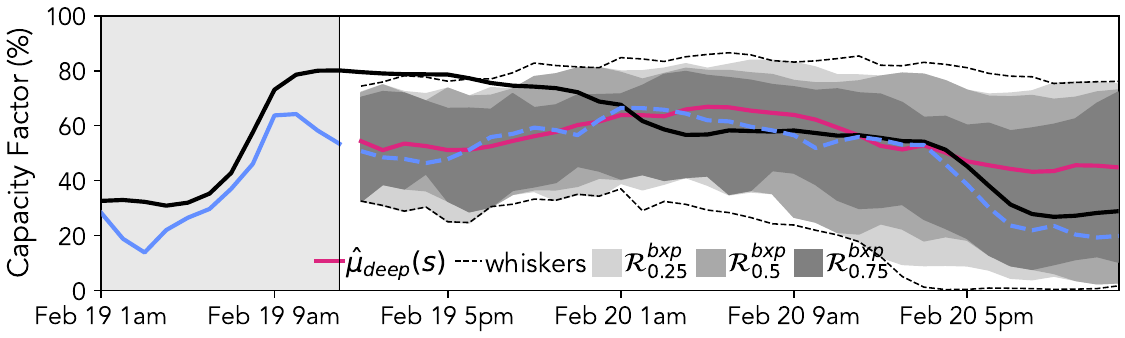}
            \put(0, 28.5){\textbf{c}}
        \end{overpic}
    \end{minipage}
    \begin{minipage}{.45\textwidth} 
        \centering
        \begin{overpic}[width=.75\textwidth, trim = {0cm, -.25cm, 0cm, 0cm}, clip]{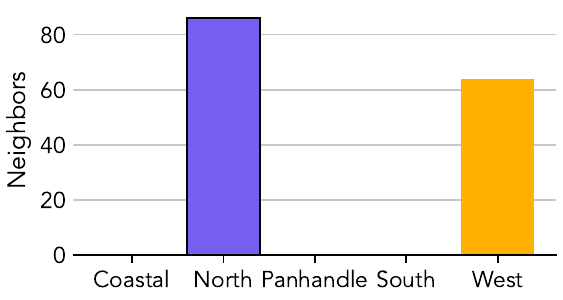}
            \put(0, 56){\textbf{d}}
        \end{overpic}\\
        \begin{overpic}[width=.3\textwidth, trim = {0cm, 0cm, 0cm, 0cm}, clip]{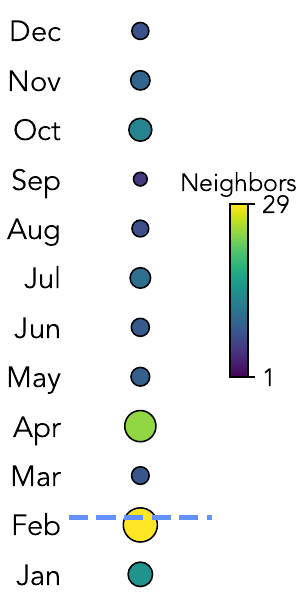}
            \put(-10, 95){\textbf{e}}
        \end{overpic}
    \end{minipage}
    \caption{\textbf{Functional nearest neighbors for ERCOT regional wind 2 Day-Ahead (DA) forecast updating.} North ERCOT region on $d_\star=$~Feb 19 at forecast update time $\tau = 12$pm with a $\cN^{(1)}$ neighborhood. \emph{Left column:} Blue curve is the realized $f_\star$ and solid black is the original 2DA forecast $e_\star$. \emph{Panel a:} Enhanced functional boxplot for the 2DA-forecast neighbors $e_1, \ldots, e_\kappa$. \emph{Panel b:} Realized capacity factor neighbors $f_1, \ldots, f_\kappa$. \emph{Panel c:} Fused curves $\hat{\mu}_1, \ldots, \hat{\mu}_\kappa$  ranked by Modified Band Depth. \emph{Panel d:} Distribution of neighboring curves  selected from ERCOT’s North ($k=0$) and West ($k=1$) regions, black edges mark the target region (North). \emph{Panel e:} Temporal distribution of nearest neighbors by month, the blue line indicates the day $d_\star$.}
    \label{fig:wind_region_neighbors}
\end{figure}

\clearpage
\addtocontents{toc}{\protect\setcounter{tocdepth}{2}}   

\setcounter{section}{0}
\setcounter{figure}{0}
\setcounter{table}{0}
\setcounter{equation}{0}
\setcounter{lemma}{0}          

\renewcommand{\thesection}{S\arabic{section}}
\renewcommand{\thesubsection}{S\arabic{section}.\arabic{subsection}}
\renewcommand{\thefigure}{S\arabic{figure}}
\renewcommand{\thetable}{SI\arabic{table}}
\renewcommand{\theequation}{S\arabic{equation}}

\renewcommand{\thepage}{SI\arabic{page}}
\setcounter{page}{1}

\begin{center}
    {\Large \textbf{Supplementary Information}}\\[1.0em]
    {\large Nearest Neighbor Functional Curve Completion for Intraday Probabilistic Revisions of Renewable Generation Forecasts}\\[0.8em]
    {Guillermo Terr\'en-Serrano and Michael Ludkovski}
\end{center}

\vspace{1.5em}
\renewcommand{\contentsname}{Supplementary Contents}
\tableofcontents          
\vspace{1.5em}

\clearpage

\section{Supplementary Note: Theoretical properties of the functional neighborhood}
\label{sec:theory}

\begin{proposition}[Scale invariance and identifiability]
\label{prop:invariance}
    Fix $\lambda = (\lambda_f, \lambda_e, \lambda_d) \in (0,\infty)^3$ and suppose ties in $\{r_n\}$ are broken by a fixed rule independent of $\lambda$. Then for every $c > 0$, the rescaled parameter $c\lambda$ yields:
    \begin{enumerate}
        \item[(i)] the same Stage~I neighborhood $\mathcal{N}_{\kappa_0}$, the same set $\mathcal{A}_\kappa$, and the same collection of fused forecasts $\{\hat\mu_i\}$;
        \item[(ii)] focal curve weights of the softmax form
        \begin{equation}
            \label{eq:softmax}
            \omega'_i(c\lambda) \;=\;
            \frac{\exp(-\tilde{r}_i/\zeta)}
               {\sum_{j \in \mathcal{A}_\kappa} \exp(-\tilde{r}_j/\zeta)},
            \qquad
            \tilde{r}_i = \max\{d_{f,i},\ \rho_e\, d_{e,i},\ \rho_d\, d_{d,i}\},
        \end{equation}
        with ratios $\rho_e = \lambda_e/\lambda_f$, $\rho_d = \lambda_d/\lambda_f$ and temperature $\zeta := (c\lambda_f)^{-1}$.
        \item[(iii)] The parameterization by $(\lambda_f, \lambda_e, \lambda_d)$ is not identifiable: the predictive distribution is constant along each ray $\{c\lambda : c > 0\}$ up to the value of $\zeta=(c\lambda_f)^{-1}$.
    \end{enumerate}
\end{proposition}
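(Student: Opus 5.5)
The plan is to reduce everything to the composite dissimilarity $r_n$ of Eq.~\eqref{eq:composite}. Write $d_{f,n}=\|f_\star-f_n\|_{\varepsilon_f}$, $d_{e,n}=\|e_\star-e_n\|_{\varepsilon_{e^-},\varepsilon_{e^+}}$, $d_{d,n}=\|d_\star-d_n\|_S$. None of these distances involves $\lambda$, and neither do the Haversine distances. Since $\max$ is positively homogeneous,
\[
r_n(c\lambda)=\max\{c\lambda_f d_{f,n},\,c\lambda_e d_{e,n},\,c\lambda_d d_{d,n}\}=c\lambda_f\,\tilde r_n,
\]
with $\tilde r_n=\max\{d_{f,n},\rho_e d_{e,n},\rho_d d_{d,n}\}$. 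This holds because $c\lambda_f>0$ factors out of each argument, and $\tilde r_n$ depends on $\lambda$ only through the ratios $\rho_e,\rho_d$, which are invariant under $\lambda\mapsto c\lambda$. This single identity drives all three parts.

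\emph{Part (i).} Because $x\mapsto c\lambda_f x$ is strictly increasing, the ordering of $\{r_n(c\lambda)\}$ coincides with the ordering of $\{r_n(\lambda)\}$. Pairs tied at $\lambda$ are exactly the pairs tied at $c\lambda$, and by hypothesis the tie-breaking rule does not depend on $\lambda$. Hence the ranking is identical, and $\mathcal N_{\kappa_0}$ is the same. The Stage~II re-ranking uses $\|\cdot\|_H$, which is $\lambda$-free, and its ties are broken by $\omega^{fed}$, i.e.\ by the same $r_n$ ordering. Therefore $\mathcal A_\kappa$ is unchanged.

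For the fused curves, note that $\hat\mu_i$ in Eq.~\eqref{eq:blended-fcst} depends only on $f_{k(i)}$, $e_{k(i)}$, $\delta_{\nu,\vartheta,\eta}$ and $B_i$. Lemma~\ref{lem:LIE} and the definition of $\delta$ involve only $(\nu,\vartheta,\eta,\tau)$. Likewise, $p_i$ in Eq.~\eqref{eq:fusion-rejection} involves only $p_{\mathrm{fusion}}$, $\nu$ and $f_{k(i)}$. So the collection $\{\hat\mu_i\}$ is unchanged. Here the $B_i$ are read as coupled, i.e.\ drawn with the same randomness, which is the one point to state explicitly.

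\emph{Part (ii).} Using $\omega_i=\exp(-r_i)$, the normalized focal weights in Eq.~\eqref{eq:focal} are
\[
\omega_i'=\frac{e^{-r_i}}{\sum_j e^{-r_j}}=\frac{\exp(-c\lambda_f\tilde r_i)}{\sum_j\exp(-c\lambda_f\tilde r_j)},
\]
which is Eq.~\eqref{eq:softmax} with $\zeta=(c\lambda_f)^{-1}$. Normalization is over $\mathcal A_\kappa$, which by part (i) is the same set of indices for every $c$.

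\emph{Part (iii).} Combine (i) and (ii). Every $\lambda$-dependent output is computed from the fixed set $\{\hat\mu_i\}$ together with the weights $\omega'_i$. This covers the weighted quantiles $Q_\omega$, the KDE $p_s$ with the Silverman bandwidth of Eq.~\eqref{eq:wbandwith} (which is invariant under rescaling all weights), the ESS, the focal curve and the projection envelope. The MBD-based objects do not depend on the weights at all. On a ray, the weights are therefore a function of $(\rho_e,\rho_d,\zeta)$ only, so two parameter vectors with the same ratios and the same $c\lambda_f$ give identical outputs.

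The main obstacle is wording (iii) precisely. As stated, the neighborhoods are constant along the whole ray, but the weights still vary through $\zeta$. The non-identifiability claim therefore amounts to this: the map $\lambda\mapsto$(forecast) factors through $(\rho_e,\rho_d,\zeta)$, and only the neighborhood-defining part factors through $(\rho_e,\rho_d)$. This is the justification for tuning the ratios with $\lambda_f$ held fixed. I would also check that no downstream step, such as the log-domain evaluation, reintroduces $\lambda$ separately; it does not, since that evaluation uses $r_n$ directly.
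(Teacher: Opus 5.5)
Your proposal is correct and follows the paper's route. Positive homogeneity of the maximum gives $r_n(c\lambda)=c\lambda_f\,\tilde r_n$, which preserves the Stage~I ranking; Stage~II and the fusion step are $\lambda$-free; and substituting into the normalization yields the softmax weights with $\zeta=(c\lambda_f)^{-1}$. Your extra points are sound and slightly sharper than the paper's own write-up: the Stage~II tie-break by $\omega^{fed}$ uses only the order-preserved ranking, the $B_i$ must be coupled (or the claim read in distribution), and (iii) is really a factorization through $(\rho_e,\rho_d,\zeta)$ rather than literal constancy along rays.
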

\begin{proof}
    For (i), $r_n(c\lambda) = \max_c\{c\lambda_c d_{c,n}\} = c\, r_n(\lambda)$ by positive homogeneity of the maximum, and multiplication by $c > 0$ preserves the ordering of $\{r_n\}$; hence the $\kappa_0$ nearest neighbors are unchanged, and Stage~II and the fusion step do not involve $\lambda$. For (ii), write $r_i(c\lambda) = c\lambda_f\, \tilde{r}_i$ with $\tilde{r}_i$ as in Eq.~\eqref{eq:softmax}; substituting $\omega_i = \exp(-\tilde{r}_i/\zeta)$ into the normalization $\omega'_i = \omega_i/\sum_j \omega_j$ gives the stated softmax form.

    Stage~II depends only on $\mathcal{N}_{\kappa_0}$ and on Haversine distances, neither of which involves $\lambda$. Similarly, the fusion step of Eqs.~\eqref{eq:fusion-rejection}--\eqref{eq:blended-fcst} involves only parameters disjoint from $\lambda$.
\end{proof}

Consequently, the predictive distribution depends on $(\lambda_f, \lambda_e, \lambda_d)$ only through the ratios $(\rho_e, \rho_d)$, which govern neighbor \emph{selection}, and the temperature $\zeta > 0$, which governs neighbor \emph{weighting}. The reparameterization $(\rho_e, \rho_d, \zeta) \in (0,\infty)^3$ removes the non-identifiability and separates the two roles of the length-scales: which channel dominates selection, and how peaked the resulting weights are.

This is the rationale for the parameterization adopted in the hyperparameter tuning of Section~\ref{sec:assess}: the length-scale $\lambda_f$ (equivalently, the temperature $\zeta = 1/\lambda_f$) is fixed a priori, and the Bayesian optimization searches over the identifiable ratios $(\rho_e, \rho_d)$. Searching the raw triple $(\lambda_f, \lambda_e, \lambda_d)$ instead would introduce a redundant direction along which the aggregated $D_{\mathrm{KS}}$ objective is exactly flat up to the value of $\zeta$. The next lemma summarizes the impact of $\zeta$.



\begin{lemma}
\label{lemma:temperature}
    Let $\tilde{r}_{(1)} < \tilde{r}_{(2)} \le \cdots$ order the dissimilarities on $\mathcal{A}_\kappa$, assume $\tilde{r}_{(1)}$ is unique, and define the effective ensemble size $\ess(\zeta) = \bigl(\sum_{i \in \mathcal{A}_\kappa} \omega'_i(\zeta)^2\bigr)^{-1}$, consistent with the effective sample size appearing in the bandwidth rule of Eq.~\eqref{eq:wbandwith}, since $\bigl(\sum_i \omega'^2_i\bigr)^{-1} = \bigl(\sum_i \omega_i\bigr)^2 / \sum_i \omega_i^2$. Then:
    \begin{enumerate}
        \item[(i)] as $\zeta \to 0^+$, $\omega'$ converges to a point mass on the nearest neighbor,  $\hat\mu_{\text{focal}}$ converges to the single best neighbor (a $1$-nearest-neighbor rule) and $\ess(\zeta) \downarrow 1$;
        \item[(ii)] as $\zeta \to \infty$, $\omega'$ converges to the uniform weight $1/\kappa$, $\hat\mu_{\text{focal}}$ converges to the unweighted ensemble mean and $\ess(\zeta) \uparrow \kappa$.
    \end{enumerate}
\end{lemma}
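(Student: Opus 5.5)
The plan is to work directly from the softmax representation in Proposition~\ref{prop:invariance}(ii). Since $\mathcal{A}_\kappa$ and the fused curves $\{\hat\mu_i\}$ do not depend on $\zeta$ (part (i) of that proposition), the only $\zeta$-dependence in $\hat\mu_{\text{focal}}$ and in $\ess(\zeta)$ is through the weights $\omega'_i(\zeta)$. Both claims therefore reduce to computing limits of a finite softmax, and everything else follows by continuity of finite sums.

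For (i), let $i^\star$ be the index attaining the unique minimum $\tilde r_{(1)}$. I will multiply the numerator and denominator of Eq.~\eqref{eq:softmax} by $\exp(\tilde r_{(1)}/\zeta)$, which gives
\[
\omega'_i(\zeta) = \frac{\exp(-(\tilde r_i-\tilde r_{(1)})/\zeta)}{1+\sum_{j\ne i^\star}\exp(-(\tilde r_j-\tilde r_{(1)})/\zeta)}.
\]
Every gap $\tilde r_j-\tilde r_{(1)}$ with $j\ne i^\star$ is strictly positive by uniqueness, so each of those exponentials tends to $0$ as $\zeta\to0^+$. Hence $\omega'_{i^\star}\to1$ and $\omega'_i\to0$ for every other $i$. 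Then $\hat\mu_{\text{focal}}(s)=\sum_i\omega'_i\hat\mu_i(s)\to\hat\mu_{i^\star}(s)$ for each $s$, which is the $1$-nearest-neighbor rule. Similarly $\sum_i\omega_i'^2\to1$, so $\ess(\zeta)\to1$.

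For (ii), as $\zeta\to\infty$ each $\exp(-\tilde r_i/\zeta)\to1$. Since the $\tilde r_i$ are finite, $\omega'_i\to1/\kappa$, where $|\mathcal{A}_\kappa|=\kappa$. The focal curve then tends to $\kappa^{-1}\sum_i\hat\mu_i(s)$, and $\sum_i\omega_i'^2\to1/\kappa$, giving $\ess\to\kappa$.

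Before either limit, I will note the standard bound $1\le\ess\le\kappa$. It follows from $\sum\omega_i'^2\le(\sum\omega_i')^2=1$ together with Cauchy--Schwarz, $1=(\sum\omega'_i)^2\le\kappa\sum\omega_i'^2$. This confirms that the two limits are the extremes of the range, which justifies the arrows $\downarrow1$ and $\uparrow\kappa$ in the sense of approaching the bounds.

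The main obstacle is the literal monotonicity suggested by $\downarrow$ and $\uparrow$. To address it, I would show that $\zeta\mapsto\sum_i\omega_i'^2$ is nonincreasing. Set $\beta=1/\zeta$ and write $\sum_i\omega_i'^2 = Z(2\beta)/Z(\beta)^2$ with $Z(\beta)=\sum_i e^{-\beta\tilde r_i}$. Differentiating the logarithm gives $-2\,\E_{2\beta}[\tilde r]+2\,\E_{\beta}[\tilde r]$, where $\E_\beta$ denotes expectation under the Gibbs weights at inverse temperature $\beta$. The mean $\E_\beta[\tilde r]$ is nonincreasing in $\beta$, because its derivative is $-\mathrm{Var}_\beta(\tilde r)\le0$. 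So the expression is $\ge0$ in $\beta$, meaning $\sum\omega_i'^2$ increases in $\beta$ and decreases in $\zeta$. If this monotonicity step proves delicate, I would fall back on stating only the limits and the bounds.
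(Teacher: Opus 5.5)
Your proposal is correct and follows essentially the same route as the paper: the limits are read off the softmax form in Eq.~\eqref{eq:softmax}, and the monotonicity behind $\downarrow 1$ and $\uparrow \kappa$ comes from differentiating $\log\sum_i\omega_i'^2$ in the inverse temperature $\beta=1/\zeta$, which gives $2\bigl(\E_{\beta}[\tilde r]-\E_{2\beta}[\tilde r]\bigr)\ge 0$. Your identity $\tfrac{d}{d\beta}\E_\beta[\tilde r]=-\mathrm{Var}_\beta(\tilde r)\le 0$ rigorously justifies the step the paper states only informally (``tilting shifts mass toward smaller $\tilde r$''), and your bounds $1\le\ess\le\kappa$ are a harmless addition.
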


\begin{proof}[Proof of Lemma \ref{lemma:temperature}]
    The limits of $\omega'$ and $\hat\mu_{\text{focal}}$ follow from Eq.~\eqref{eq:softmax}. For ESS, set $\gamma = 1/\zeta$ and $p_i(\gamma) \propto e^{-\gamma \tilde{r}_i}$ 
    Then
    \[
      \frac{d}{d\gamma} \log \sum_i p_i(\gamma)^2
      \;=\; 2\bigl(\E_p[\tilde{r}] - \E_q[\tilde{r}]\bigr),
    \]
    where $q_i \propto e^{-2\gamma\tilde{r}_i}$ is the exponentially tilted distribution. Since tilting by $e^{-\gamma\tilde{r}}$ shifts mass toward smaller $\tilde{r}$, $\E_q[\tilde{r}] \le \E_p[\tilde{r}]$, so $\sum_i p_i^2$ is nondecreasing in $\gamma$, hence nonincreasing in $\zeta = 1/\gamma$, and $\ess = (\sum_i p_i^2)^{-1}$ is nondecreasing in $\zeta$. 
\end{proof}

\bigskip


\begin{proof}[Proof of Lemma \ref{lem:LIE}]
    $u_{\vartheta,\eta}(s)$ is affine and strictly decreasing in $s$ ($du/ds = -\eta/\vartheta < 0$). ELU is nondecreasing (derivative $1$ on $\{u \ge 0\}$, $e^u > 0$ on $\{u < 0\}$), so $\mathrm{ELU}(u_{\vartheta,\eta}(s))$ is nonincreasing in $s$, hence $\delta_{\nu,\vartheta,\eta}(s) = \nu(\eta - \mathrm{ELU}(u_{\vartheta,\eta}(s)))/(\eta+1)$ is nondecreasing in $s$. At $s = \tau$, $u = \eta \ge 0$ so $\mathrm{ELU}(\eta) = \eta$, giving $\delta(\tau) = 0$. As $s \to \infty$, $u \to -\infty$ so $\mathrm{ELU}(u) \to e^u - 1 \to -1$, giving $\delta \to \nu(\eta+1)/(\eta+1) = \nu$. At $s = \tau + \vartheta$, $u = 0 = \mathrm{ELU}(0)$, giving the stated transition value. Monotonicity plus the two limits give $\delta \in [0,\nu]$ throughout.
\end{proof}

\clearpage



\section{Supplementary Note: Baselines}
\label{sec:baselines}

We implemented two comparator forecasting models to provide a baseline for the performance of the proposed functional probabilistic revision.


\paragraph{ARIMA.}

The first baseline is a dynamic harmonic regression based on an ARIMA time series model for each asset, which completes the observed CF $f_\star(s)$ without ever seeing the day-ahead forecast $e_\star(s)$ \cite{hyndman2014forecasting}.

A seasonal ARIMA with period $T = 288$ is impractical, so we use Fourier harmonic terms to model the daily cyclic mean trend and then ARIMA for the residuals,
\begin{equation}
  f_\star(t) = \sum_{r=1}^{R} \left[ a_r \sin \frac{2 \pi r t}{T}
      + \tilde{a}_r \cos \frac{2 \pi r t}{T} \right] + n(t),
  \qquad
  n(t) \sim \mathrm{ARIMA}(p,d,q).
\end{equation}
We estimate the parameters, including the innovation variance $\nu^2$, jointly by maximum likelihood on the last $60$ days of the training year (2017). The differencing order $d$ is selected by unit-root testing, and $(p,q)$ are selected by the Akaike Information Criterion with the small-sample correction (AICc) \cite{hyndman2008automatic}. We select the number of harmonics $R$ using the weighted interval score with rolling-origin cross-validation on the training year, and we fix the observed history length to $H = 3 \times T$ (3 days prior to the forecast updated time). We apply the resulting time series model in the testing year (2018) without refitting.

Given $\tau$, the fitted model is run over the observed history up to $\tau$ by Kalman filtering \cite{durbin2012time}, so that the partial curve $f_\star(s), s \in \cI_{<\tau}$ is embedded in the state at the forecast update time $\tau$. We then simulate forward to end of the day (midnight) via the ARIMA recursion, with the innovation shocks $\varepsilon(s) \sim \mathcal{N}(0, \nu^2)$ generating probabilistic trajectories. The simulations are clipped to $[0,1]$ and the marginal quantiles and envelopes are based on an ensemble $M_k(s)$ of $K=1000$ trajectories.

\paragraph{Bootstrapped Re-anchored DA forecast (BRDA).} 

The second baseline is the probabilistic upgrade of the day-ahead forecast. Specifically, we re-anchor the DA forecast on the last observed CF and then bootstrap with historical residuals,  which is a standard model previously used in wind power forecasts \cite{madsen2005standardizing}. 
We use the exponential weight function $w (s) = \exp \left( \beta (\tau - s) \right)$ to blend persistence of intra-day forecast deviation ($f_\star(\tau)-e_\star(\tau)$) and the original DA forecast $e_\star(s)$ according to 
\begin{equation}
  \hat{e}_\star (s)
    = w (s) \left(f_\star (\tau) - e_\star (\tau) \right)
      + e_\star (s),
\end{equation}
The decay rate $\beta = 0.231/12$ (1~h), with the lead $s - \tau$, is fixed a priori.

To obtain a predictive distribution, we use $\hat{e}_\star(s)$ as the point forecast and fuse it with historical forecast-actuals curves:
\begin{equation}
  M_k(s) = 0 \vee (\hat{e}_\star (s) + d_{i_k}(s)) \wedge 1, \qquad s \in \cI_{> \tau}
\end{equation}
where $d_i (s) = f_i (s) - \hat{e}_i(s)$ is the realized deviation of curve $i$. We build an ensemble of $K=1000$ such bootstrapped (sampled with replacement) and clipped curves to obtain BRDA marginal quantiles and functional envelopes. 

\clearpage


\section{Supplementary Note: Enhanced functional boxplot}
\label{sec:boxplot}

The functional boxplot \cite{sun2011functional} extends the classical boxplot to functional data, with each element a curve rather than a scalar. The \emph{enhanced} variant displays the deepest curve $\hat{\mu}_{\mathrm{deep}}(s)$ of Eq.~\eqref{eq:central-curve} as the functional median, together with the $25\%$, $50\%$ and $75\%$ central regions and the \emph{fences}. The three central regions are functional envelopes of the form~\eqref{eq:functional-envelope}, evaluated at the fixed levels,
\begin{equation}
    \label{eq:boxplot-regions}
    \mathcal{R}^{\mathrm{bxp}}_{\alpha} \left( s \right) :=
    \left[ \min_{\hat{\mu}_i \in \mathcal{H}_{\alpha}} \hat{\mu}_i \left( s \right), \
           \max_{\hat{\mu}_i \in \mathcal{H}_{\alpha}} \hat{\mu}_i \left( s \right) \right], \quad \alpha \in \{0.75,\, 0.5,\, 0.25\},
    \;
\end{equation}
where $\mathcal{H}^{\mathrm{bxp}}_{\alpha}$ collects the $\lceil (1 - \alpha) \cdot \, \kappa \rceil$ deepest curves. The regions in functional boxplot  describe the depth ordering of the fused curves at conventional proportions and carry no coverage claim. The fences are obtained by inflating that region by a  $1.5$ factor, so that $\mathcal{J}(s) := \bigl[\, l(s) - 1.5 w(s), \; u(s) + 1.5 w(s)\bigr]$, where $[\,l(s), u(s)\,] := \mathcal{R}^{\mathrm{bxp}}_{0.5}(s)$ for the $50\%$ central region and $w(s) := u(s) - l(s)$ for its pointwise width.

\clearpage


\section{Supplementary Note: De-biasing 48h-ahead forecasts}
\label{sec:48h-ahead_debias}

Because ERCOT 48h-ahead forecasts exhibit substantial systematic bias that varies year-to-year, it is essential to de-bias them correctly. To this end, we fit separate linear regression models for each region and each calendar year. Let $f_{p,d,h,r} (t)$ denote the observed generation, and $e_{p,d,h,r} (t)$ the 2DA forecast curves for region $r\in \{1,\ldots,5\}$, period $p\in \{2023, \ 2024, \ 2025\}$, day-of-year $d$, and hour $t\in \{ 1, \ldots, 24 \}$. 
\begin{equation}
    f_{p, d, h, r} (t) = \alpha_{p, r} (t) + \beta_{p, r} (t) \cdot e_{p, d, h, r} (t) + \epsilon_{p, d, h, r} (t).
\end{equation}
We include the forecast hour index $h \in \{1, \ldots, 48\}$ to capture the systematic bias in the forecasted hour $h$. The forecast index $h$ represents the 48 forecasts for hour $t$ in day $d$. Notice that each 2DA forecasts provide 2 forecasts for each hour $t$. The bias-corrected forecast is then obtained as
\begin{equation}
    \hat{e}_{p, d, h, r} (t) = \hat{\alpha}_{p, r} (t) + \hat{\beta}_{p, r} (t) \cdot e_{p, d, h, r} (t).
\end{equation}

\clearpage


\section{Supplementary Figures}

\begin{figure}[htb!]
    \centering
    \begin{overpic}[width=.27\textwidth, trim = {0cm, 0cm, 0cm, 0cm}, clip]{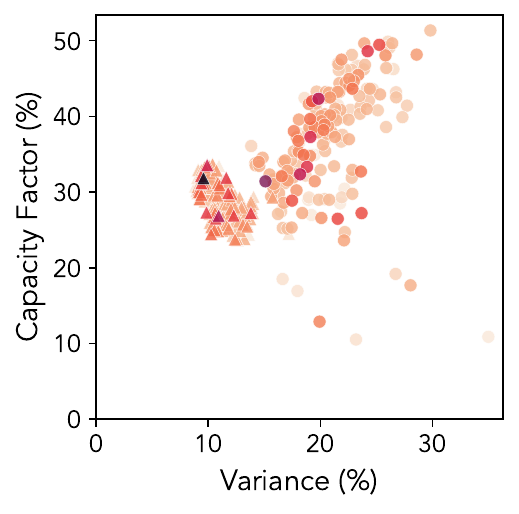}
      \put(0, 95){\textbf{a}}
    \end{overpic}
    \begin{overpic}[width=.29\textwidth, trim = {0cm, 0cm, 0cm, 0cm}, clip]{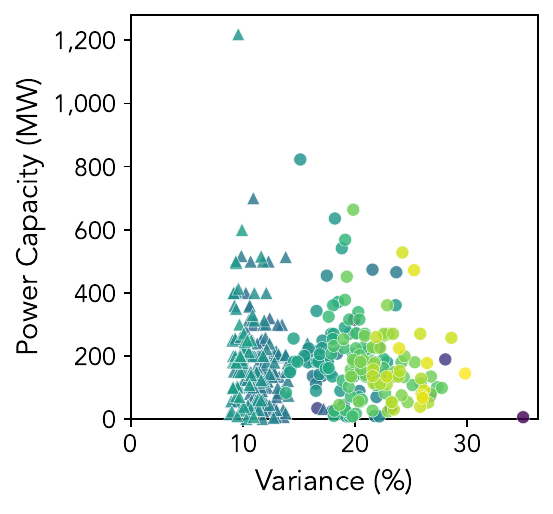}
      \put(0, 89){\textbf{b}}
    \end{overpic}
    \begin{overpic}[width=.27\textwidth, trim = {0cm, 0cm, 0cm, 0cm}, clip]{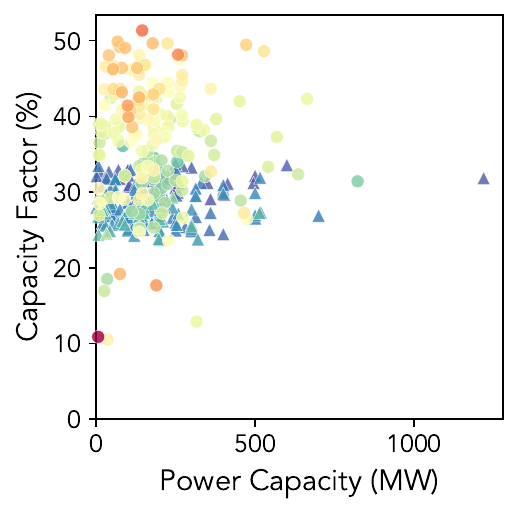}
      \put(0, 95){\textbf{c}}
    \end{overpic}
    \begin{minipage}{.45\textwidth} 
        \centering
        \begin{overpic}[width=.925\textwidth, trim = {0cm, 0cm, 0cm, 0cm}, clip]{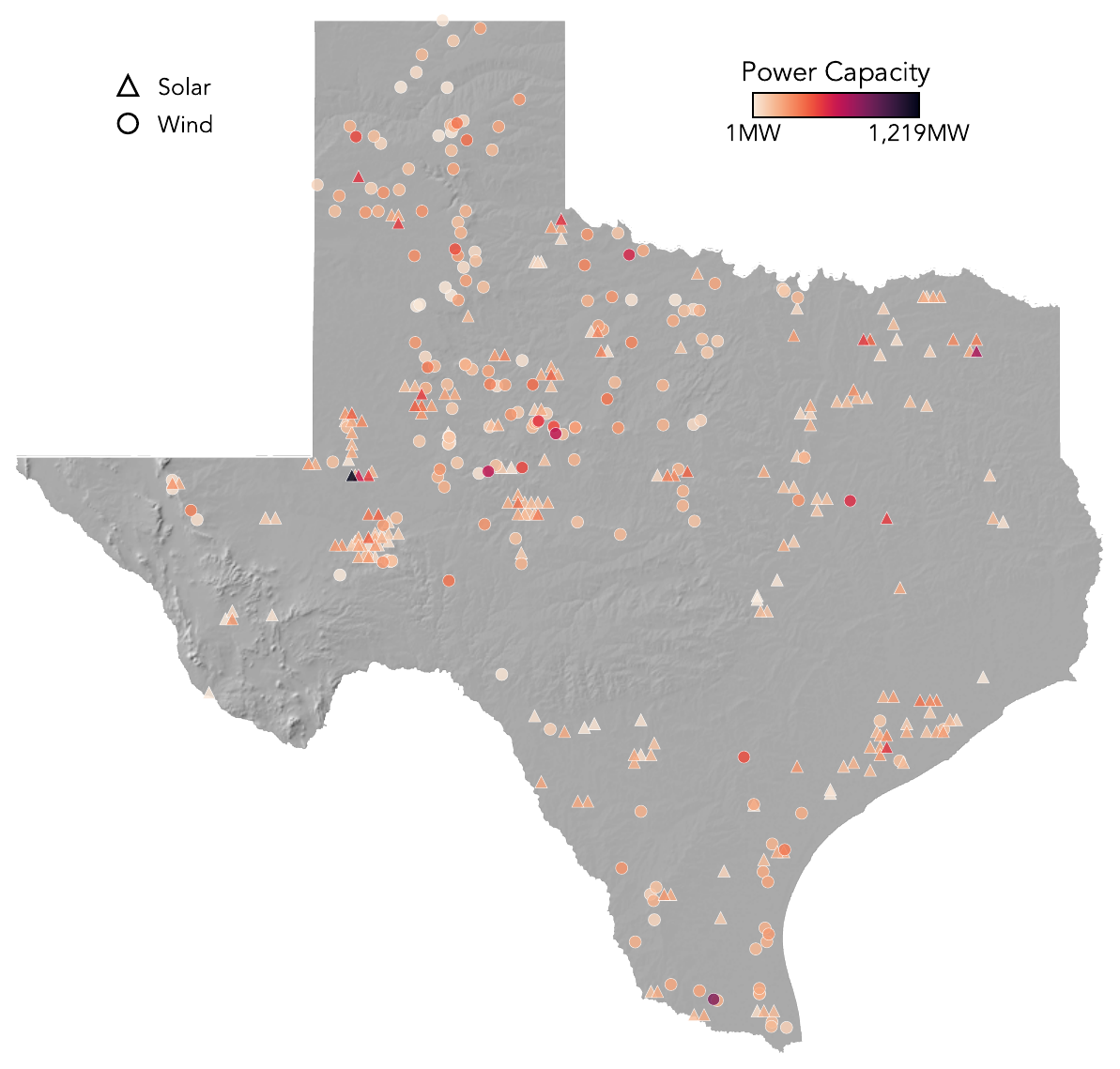}
            \put(0, 90){\textbf{d}}
        \end{overpic}
        \begin{overpic}[width=.925\textwidth, trim = {0cm, 0cm, 0cm, 0cm}, clip]{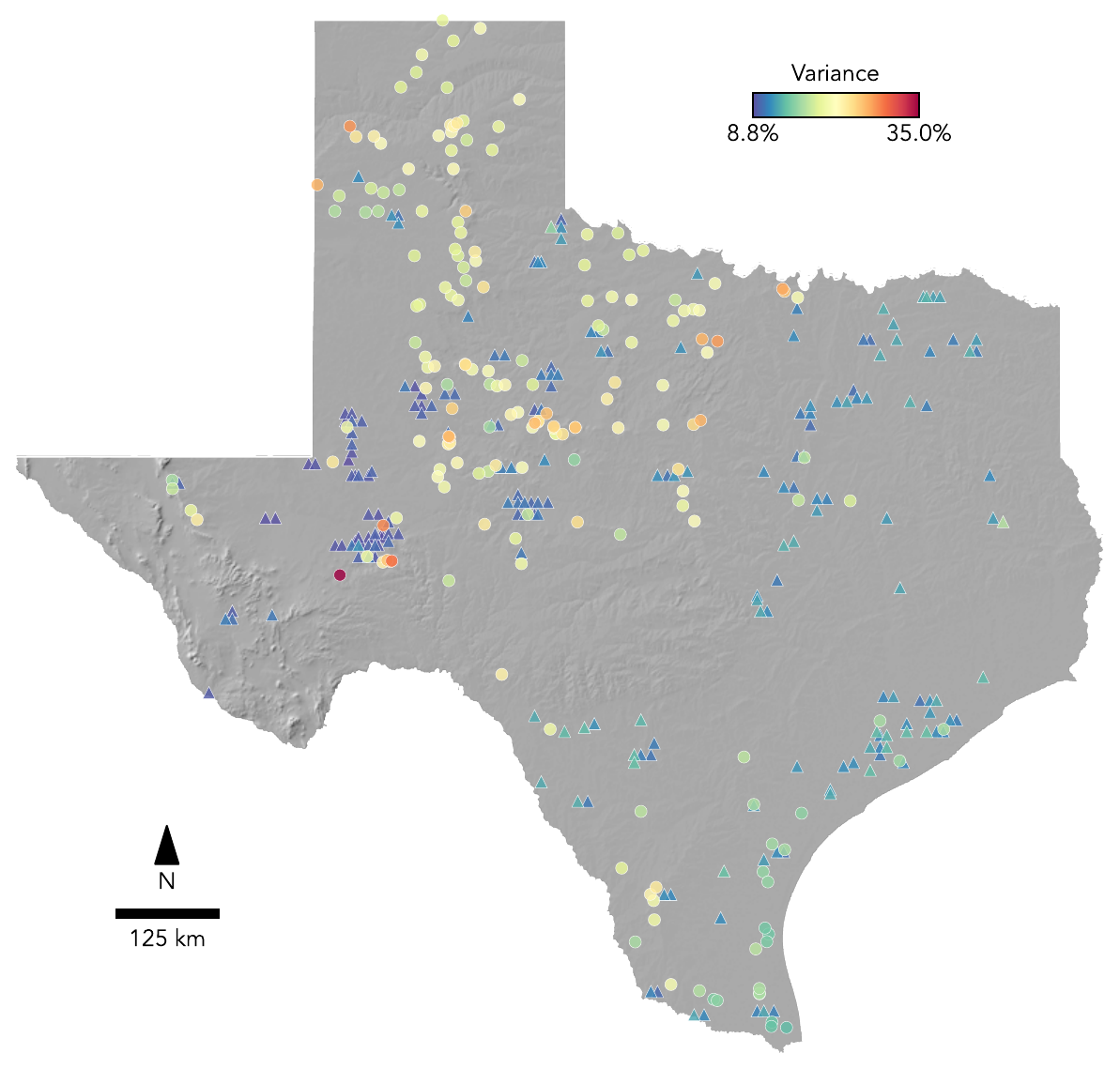}
            \put(0, 90){\textbf{e}}
        \end{overpic}
    \end{minipage}
    \begin{minipage}{.45\textwidth} 
        \centering
        \begin{overpic}[width=.925\textwidth, trim = {0cm, 0cm, 0cm, 0cm}, clip]{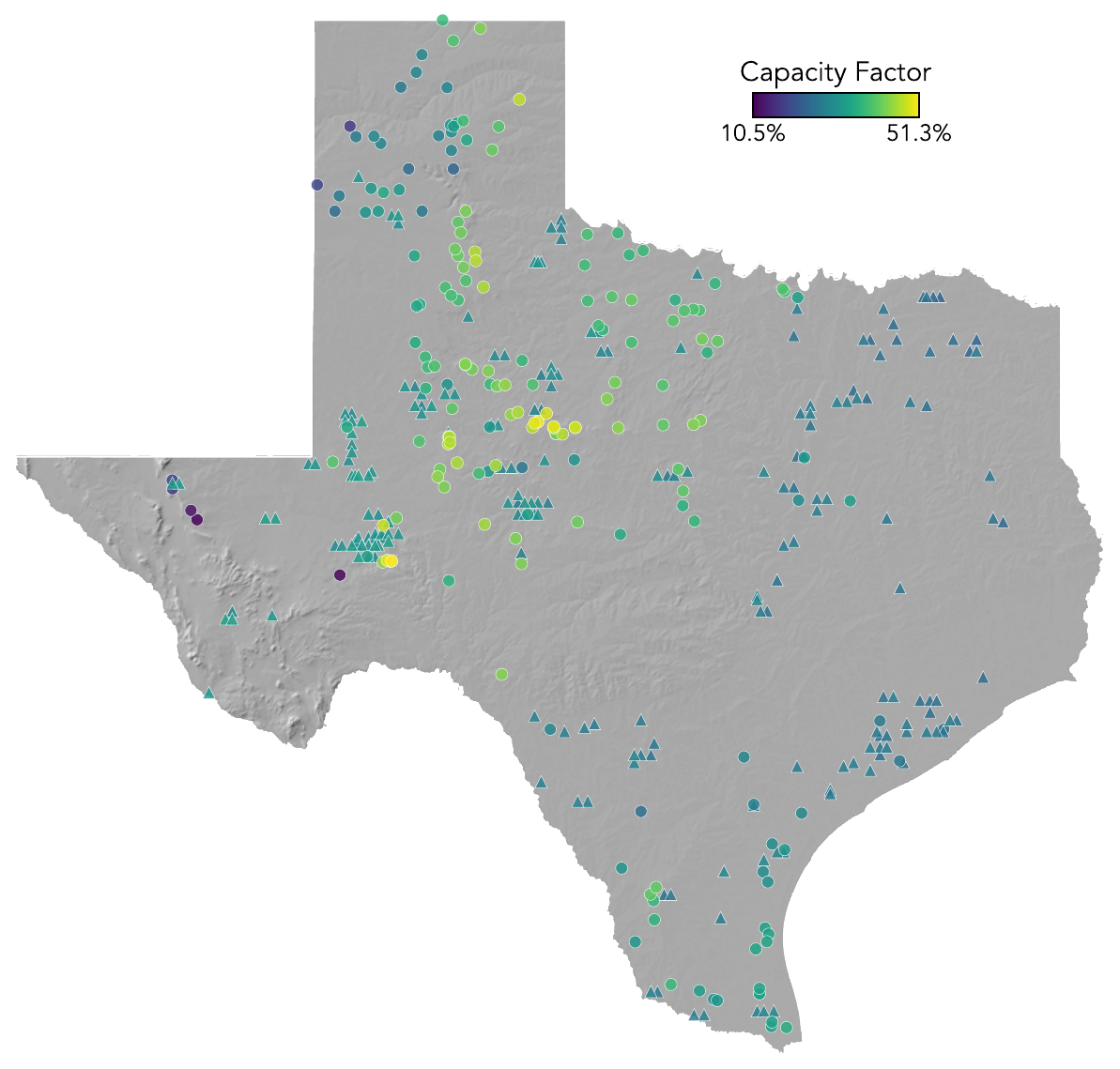}
            \put(0, 90){\textbf{f}}
        \end{overpic}
        \begin{overpic}[width=1\textwidth, trim = {0cm, -3.75cm, 0cm, -2.5cm}, clip]{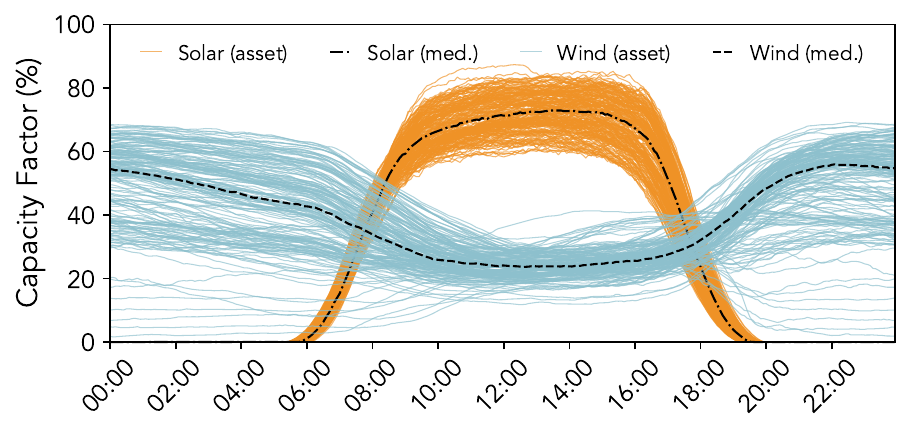}
          \put(-.5, 70){\textbf{g}}
        \end{overpic}
    \end{minipage}
    \caption{\textbf{Average capacity factor (CF), forecasting error, and power capacity for the wind and solar assets in the Texas-7k grid.} Relations between: (a) power capacity and RMSE shaded by average CF, (b) average CF and RMSE shaded by power capacity, and (c) average CF and power capacity shaded by RMSE. (d, g) Average hourly CF of all generation curves in each wind farm, (brighter indicates a higher average CF). (f) DA forecast hourly root mean squared error (RMSE) for each wind farm (brighter color, greater RMSE). (e) AC Installed power capacity (i.e.,~nameplate capacity) in each wind farm, darker represents larger power capacity.}
    \label{fig:dataset}
\end{figure}

\begin{figure}[htb!]
    \centering
       \begin{overpic}[width=.8\textwidth, trim = {0cm, 0cm, 0cm, 0cm}, clip]{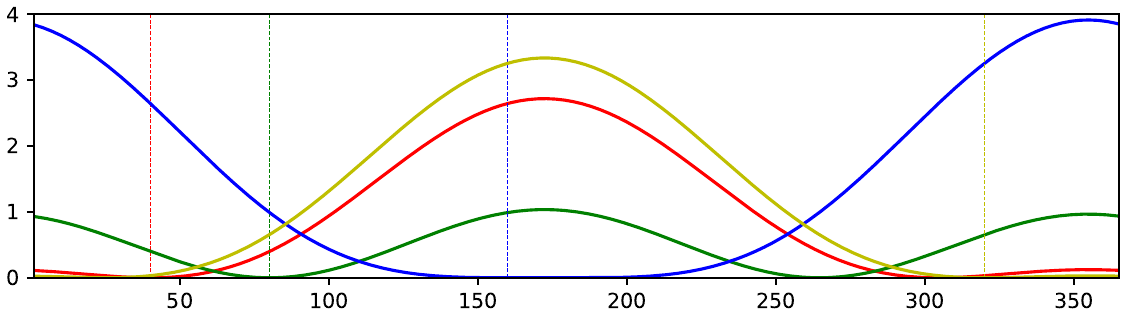}
       \end{overpic}
    \caption{ \textbf{Sun elevation distance} $\|d_\star -d \|_S$ in Eq.~\eqref{eq:elevation_dist} as a function of $d$. We illustrate with $d_\star=40$ (Feb 9) in red, $d_\star=80$ (March 21) in green, $d_\star=160$ (June 9) in blue, and $d_\star=320$ (Nov 16) in yellow.}
    \label{fig:SED}
\end{figure}

\begin{figure}[htb!]
    \centering
       \begin{overpic}[width=.8\textwidth, trim = {0cm, 0cm, 0cm, 0cm}, clip]{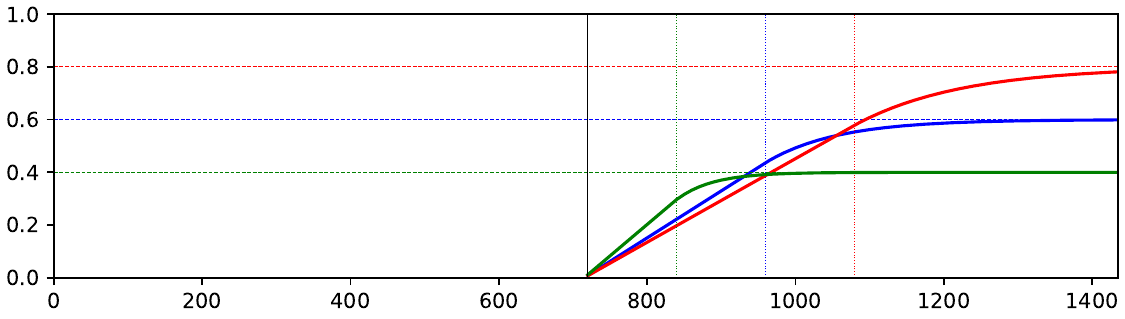}
       \end{overpic}
    \caption{Examples of the \textbf{Linear inverse exponential (LIE) function} at 12pm ($\tau=720$) $\delta_{\vartheta,\nu, \eta}(s)$ for $s>720$. We illustrate with 3 parameter configurations $\vartheta= 2,\nu = 0.4, \eta = 2.5$ in green, $\vartheta = 4,\nu = 0.6, \eta = 2.5$ in blue, and $\vartheta = 6,\nu = 0.8, \eta = 2.5$ in red. The dashed horizontal lines indicate the horizontal asymptotes at respective $\nu$ levels.}
    \label{fig:LIE}
\end{figure}

\clearpage

\begin{figure}[htb!]
    \centering
    \includegraphics[width=.45\textwidth, trim = {0cm, 0cm, 0cm, 0cm}, clip]{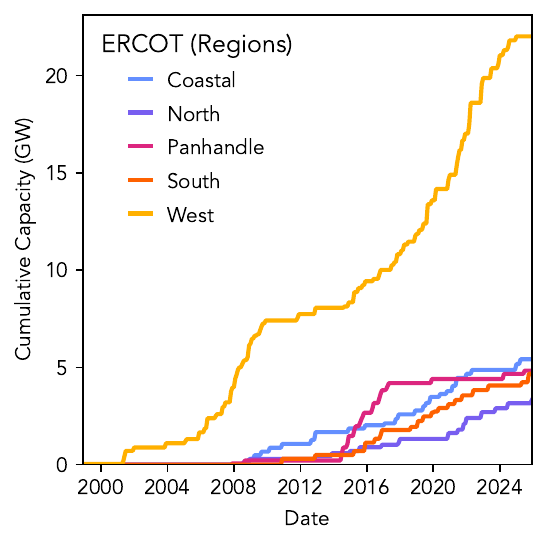}
    \caption{\textbf{Wind generation capacity in ERCOT.} Monthly operating wind capacity aggregated by commissioning vintage from Sep 1999 to May 2025 in the five regions that comprise ERCOT's grid. \label{fig:ercot-monthly-capacity}}
\end{figure}

\clearpage



\section{Supplementary Tables}

\begin{table}[htb!]
    \centering
    \scriptsize
    \caption{\textbf{$D_{KS}$ scores for probabilistic revisions of wind asset generation with de-biased DA forecasts.} Kolmogorov--Smirnov (KS) statistics of the PIT evaluated across each two-hour lead-time interval $s \in [T_0, T_1]$ over the forecast horizon $\mathcal{I}_{>\tau}$ at update times $\tau=$~6am, 12pm, 6pm. We compare our algorithm with two non-functional baselines: ARIMA, and Bootstrapped Re-anchored DA (BRDA) forecast only. Lower is better.}
    \begin{adjustbox}{max width=\linewidth}
        \renewcommand{\arraystretch}{1.25}

\begin{tabular}{rr ccc ccc ccc}
    \toprule
    \multicolumn{2}{c}{\textbf{Lead Interval}}
      & \multicolumn{3}{c}{\textbf{Our Algorithm}}
      & \multicolumn{3}{c}{\textbf{ARIMA}}
      & \multicolumn{3}{c}{\textbf{BRDA}} \\
    \cmidrule(lr){1-2} \cmidrule(lr){3-5} \cmidrule(lr){6-8} \cmidrule(lr){9-11}
    $T_{0}$ & $T_{1}$
      & \multicolumn{3}{c}{$\boldsymbol{\tau}$}
      & \multicolumn{3}{c}{$\boldsymbol{\tau}$}
      & \multicolumn{3}{c}{$\boldsymbol{\tau}$} \\
    \cmidrule(lr){3-5} \cmidrule(lr){6-8} \cmidrule(lr){9-11}
      &      & 6am & 12pm & 6pm & 6am & 12pm & 6pm & 6am & 12pm & 6pm \\
    \midrule
    6am  & 8am  & 0.049 & ---   & ---   & 0.193 & ---   & ---   & 0.034 &   --- &   --- \\
    8am  & 10am & 0.028 & ---   & ---   & 0.147 & ---   & ---   & 0.056 &   --- &   --- \\
    10am & 12pm & 0.026 & ---   & ---   & 0.132 & ---   & ---   & 0.081 &   --- &   --- \\
    12pm & 2pm  & 0.041 & 0.018 & ---   & 0.122 & 0.180 & ---   & 0.073 & 0.092 &   --- \\
    2pm  & 4pm  & 0.026 & 0.037 & ---   & 0.103 & 0.087 & ---   & 0.087 & 0.098 &   --- \\
    4pm  & 6pm  & 0.025 & 0.020 & ---   & 0.069 & 0.082 & ---   & 0.068 & 0.072 &   --- \\
    6pm  & 8pm  & 0.027 & 0.014 & 0.013 & 0.032 & 0.068 & 0.108 & 0.040 & 0.041 & 0.039 \\
    8pm  & 10pm & 0.031 & 0.023 & 0.018 & 0.102 & 0.185 & 0.214 & 0.050 & 0.049 & 0.045 \\
    10pm & 12am & 0.030 & 0.042 & 0.031 & 0.115 & 0.172 & 0.188 & 0.054 & 0.051 & 0.052 \\
    \bottomrule
\end{tabular}

    \end{adjustbox}
    \label{tab:wind_asset-ks-unbiased}
\end{table}

\clearpage

\begin{table}[htb!]
    \centering
    \caption{\textbf{Asset-level probabilistic revision tuning hyperparameters.} Estimated optimal hyperparameters for the wind assets under de-biased and raw DA forecasts at update times $\tau=$~6am, 12pm, and 6pm, and solar assets under de-biased DA forecasts (at $\tau=$~10am, 11am, 12pm, and 2pm). The upper (resp.~lower) block lists the hyperparameters fine-tuned via BO (resp.~held fixed across $\tau$).}
    \begin{adjustbox}{max width=\linewidth}
        \setlength{\tabcolsep}{2.5pt} 
        \renewcommand{\arraystretch}{1.25}
        \begin{tabular}{rc ccc ccc cccc}
    \toprule
    & & \multicolumn{3}{c}{\textbf{Wind (de-biased)}}
      & \multicolumn{3}{c}{\textbf{Wind (as-is)}}
      & \multicolumn{4}{c}{\textbf{Solar (de-biased)}} \\
    \cmidrule(lr){3-5} \cmidrule(lr){6-8} \cmidrule(lr){9-12}
    \textbf{Tuning Hyperparameter} & &
    \textbf{6am} & \textbf{12pm} & \textbf{6pm} &
    \textbf{6am} & \textbf{12pm} & \textbf{6pm} &
    \textbf{10am} & \textbf{11pm} & \textbf{12pm} & \textbf{2pm} \\
    \midrule
    Trust level                          & $\delta$
        & 9.7\% & 20.2\% & 0\%
        & 10\% & 10.9\% & 19.7\%
        & 15\% & 7.1\% & 7.7\%  & 0\% \\
    Trust horizon                        & $\vartheta$
        & 6~h & 10~h & 12~h
        & 1~h & 2~h  & 4~h
        & 7~h & 12~h & 7~h & 10~h \\
    DA forecast similarity weight        & $\lambda_e$
        & 21.8 & 11.4  & 13.5
        & 1.36 & 0.21  & 0.36
        & 2.47 & 1.56 & 0.003 & 0.009 \\
    Seasonal similarity weight           & $\lambda_d$
        & 4.38  & 0.   & 0.98
        & 0.15  & 0.83 & 0.17
        & 0.853 & 1.17 & 43.4 & 90.6 \\
    kNN step I neighborhood size         & $\kappa_0$
        & 150   & 1,815 & 150
        & 150   & 150   & 2,494
        & 2,315 & 1,139 & 298 & 2,042 \\
    \midrule
    Partially observed curve similarity weight & $\lambda_f$
        & \multicolumn{6}{c}{40} & \multicolumn{4}{c}{100} \\
    \cmidrule(lr){3-8} \cmidrule(lr){9-12}
    Probability of fusion                      & $p_{\mathrm{fusion}}$
        & \multicolumn{6}{c}{0.75} & \multicolumn{4}{c}{0.9} \\
    \cmidrule(lr){3-8} \cmidrule(lr){9-12}
    Partially observed curve forgetting rate & $\varepsilon_f$
        & \multicolumn{10}{c}{0.125~h} \\
    DA forecast past decay rate              & $\varepsilon_{e^-}$
        & \multicolumn{10}{c}{1~h} \\
    DA forecast future decay rate                     & $\varepsilon_{e^+}$
        & \multicolumn{10}{c}{4~h} \\
    kNN step II neighborhood size              & $\kappa$
        & \multicolumn{10}{c}{150} \\
    \bottomrule
\end{tabular}
    \end{adjustbox}
    \label{tab:asset_hyperparameters}
\end{table}

\clearpage

\begin{table}[htb!]
    \centering
    \scriptsize
    \caption{\textbf{Solar asset  intra-day point forecast metrics.} Root mean squared error (RMSE), mean absolute error (MAE), and mean bias error (MBE) of the point forecasts at forecast update times $\tau \in \{120, 132, 144, 168\}$ (i.e.,~$\tau=$~10am, 11am, 12pm, and 2pm). The estimators are the weighted median curve $\hat{\mu}_{\text{med}}(s)$, the deepest curve $\hat{\mu}_{\text{deep}}(s)$ ranked by Modified Band Depth, and the focal curve $\hat{\mu}_{\text{focal}}(s)$. Errors are computed over the forecast horizon $\mathcal{I}_{>\tau}$ and expressed in capacity factor units. Updating is based on fusing with de-biased DA forecasts.}
    \setlength{\tabcolsep}{6pt} 
    \renewcommand{\arraystretch}{1.25}
    \begin{tabular}{ll rrr}
    \toprule
    $\boldsymbol{\tau}$ & \textbf{Estimator} & \textbf{RMSE} & \textbf{MAE} & \textbf{MBE} \\
    \midrule    
    \multirow[c]{3}{*}{10am}
      & $\hat{\mu}_{\mathrm{mar}}$   & 0.143          & \textbf{0.087} & \textbf{\phantom{$-$}0.002} \\
      & $\hat{\mu}_{\mathrm{deep}}$  & 0.146          & 0.089          & \phantom{$-$}0.003 \\
      & $\hat{\mu}_{\mathrm{focal}}$ & \textbf{0.142} & 0.092          & \phantom{$-$}0.004 \\
    \midrule
    \multirow[c]{3}{*}{11am}
      & $\hat{\mu}_{\mathrm{mar}}$   & 0.131          & \textbf{0.078} & $-$0.002 \\
      & $\hat{\mu}_{\mathrm{deep}}$  & 0.134          & 0.079          & \textbf{\phantom{$-$}0.000} \\
      & $\hat{\mu}_{\mathrm{focal}}$ & \textbf{0.130} & 0.082          & \phantom{$-$}0.001 \\
    \midrule
    \multirow[c]{3}{*}{12pm}
      & $\hat{\mu}_{\mathrm{mar}}$   & \textbf{0.115} & \textbf{0.065} & $-$0.008 \\
      & $\hat{\mu}_{\mathrm{deep}}$  & 0.117          & 0.067          & $-$0.007 \\
      & $\hat{\mu}_{\mathrm{focal}}$ & 0.117          & 0.070          & \textbf{\phantom{$-$}0.002} \\
    \midrule
    \multirow[c]{3}{*}{2pm}
      & $\hat{\mu}_{\mathrm{mar}}$   & 0.101          & 0.053          & $-$0.006 \\
      & $\hat{\mu}_{\mathrm{deep}}$  & \textbf{0.099} & \textbf{0.052} & $-$0.005 \\
      & $\hat{\mu}_{\mathrm{focal}}$ & 0.101          & 0.056          & \textbf{\phantom{$-$}0.001} \\
    \bottomrule
\end{tabular}
    \label{tab:solar_asset-error-unbiased}
\end{table}

\clearpage

\begin{table}[htb!]
    \centering
    \caption{\textbf{Solar asset unbiased DA forecast probabilistic revision confidence band scores.} Functional interval score (FIS), functional coverage score (FCS), and simultaneous coverage probability (SCP) at significance levels $\alpha \in \{0.1, 0.2\}$ and forecast update times $\tau \in \{120, 132, 144, 168\}$ (i.e., $\tau=$~10am, 11am, 12pm, and 2pm). The confidence regions are constructed from the adjusted functional envelope $\mathcal{R}^{\mathrm{adj}}_\alpha$, the projection envelope $\mathcal{R}^{\mathrm{prj}}_\alpha$, the unadjusted functional envelope  $\mathcal{R}^{\mathrm{fun}}_\alpha$, and the weighted marginal quantiles $\mathcal{R}^{\mathrm{mar}}_\alpha$. For each construction, we report the metric ``score'' and the fraction $k_\alpha$ of fused forecasts contained in the band.}
    \begin{adjustbox}{max width=\linewidth}
        \setlength{\tabcolsep}{2.5pt} 
        \renewcommand{\arraystretch}{1.25}
        \begin{tabular}{ll cccccc cccccc cccccc}
    \toprule
    & & \multicolumn{6}{c}{$\boldsymbol{\overline{\mathrm{FIS}}({\alpha})}$}
      & \multicolumn{6}{c}{$\boldsymbol{\overline{\mathrm{FCS}}({\alpha})}$}
      & \multicolumn{6}{c}{$\boldsymbol{\overline{\mathrm{SCP}}({\alpha})}$} \\
    \cmidrule(lr){3-8} \cmidrule(lr){9-14} \cmidrule(lr){15-20}
    & & \multicolumn{2}{c}{$\boldsymbol{\mathcal{R}^{\mathrm{adj}}_{\alpha}}$}
      & \multicolumn{2}{c}{$\boldsymbol{\mathcal{R}^{\mathrm{prj}}_{\alpha}}$}
      & $\boldsymbol{\mathcal{R}^{\mathrm{fun}}_{\alpha}}$ & $\boldsymbol{\mathcal{R}^{\mathrm{mar}}_{\alpha}}$ 
      & \multicolumn{2}{c}{$\boldsymbol{\mathcal{R}^{\mathrm{adj}}_{\alpha}}$}
      & \multicolumn{2}{c}{$\boldsymbol{\mathcal{R}^{\mathrm{prj}}_{\alpha}}$}
      & $\boldsymbol{\mathcal{R}^{\mathrm{fun}}_{\alpha}}$ & $\boldsymbol{\mathcal{R}^{\mathrm{mar}}_{\alpha}}$ 
      & \multicolumn{2}{c}{$\boldsymbol{\mathcal{R}^{\mathrm{adj}}_{\alpha}}$}
      & \multicolumn{2}{c}{$\boldsymbol{\mathcal{R}^{\mathrm{prj}}_{\alpha}}$}
      & $\boldsymbol{\mathcal{R}^{\mathrm{fun}}_{\alpha}}$ & $\boldsymbol{\mathcal{R}^{\mathrm{mar}}_{\alpha}}$ \\
    \cmidrule(lr){3-4} \cmidrule(lr){5-6} \cmidrule(lr){7-7} \cmidrule(lr){8-8}
    \cmidrule(lr){9-10} \cmidrule(lr){11-12} \cmidrule(lr){13-13} \cmidrule(lr){14-14}
    \cmidrule(lr){15-16} \cmidrule(lr){17-18} \cmidrule(lr){19-19} \cmidrule(lr){20-20}
    $\boldsymbol{\tau}$ & $\boldsymbol{\alpha}$
    & score & $k_\alpha$ & score & $k_\alpha$ & score & score
    & score & $k_\alpha$ & score & $k_\alpha$ & score & score
    & score & $k_\alpha$ & score & $k_\alpha$ & score & score \\
    \midrule
    \multirow[c]{2}{*}{10am} & 0.1 & 0.493 & 0.80 & 0.573 & 1.00 & 0.495 & \textbf{0.490} & \textbf{0.902} & 0.60 & 0.891 & 0.95 & 0.970 & 0.912 & \textbf{0.651} & 1.00 & 0.312 & 1.00 & 0.514 & 0.243 \\
     & 0.2 & 0.419 & 0.55 & 0.455 & 0.90 & 0.446 & \textbf{0.408} & \textbf{0.796} & 0.32 & 0.795 & 0.50 & 0.951 & 0.849 & \textbf{0.651} & 1.00 & 0.312 & 1.00 & 0.420 & 0.129 \\
    \midrule
    \multirow[c]{2}{*}{11am} & 0.1 & 0.447 & 0.85 & 0.545 & 1.00 & 0.450 & \textbf{0.446} & 0.887 & 0.55 & \textbf{0.899} & 1.00 & 0.969 & 0.915 & \textbf{0.624} & 1.00 & 0.270 & 1.00 & 0.498 & 0.241 \\
     & 0.2 & 0.379 & 0.60 & 0.418 & 0.95 & 0.400 & \textbf{0.369} & 0.781 & 0.28 & \textbf{0.799} & 0.60 & 0.950 & 0.852 & \textbf{0.624} & 1.00 & 0.270 & 1.00 & 0.391 & 0.116 \\
    \midrule
    \multirow[c]{2}{*}{12pm} & 0.1 & \textbf{0.380} & 0.80 & 0.471 & 1.00 & 0.386 & 0.384 & 0.891 & 0.50 & \textbf{0.892} & 0.95 & 0.971 & 0.921 & \textbf{0.615} & 1.00 & 0.342 & 1.00 & 0.474 & 0.236 \\
     & 0.2 & 0.318 & 0.50 & 0.371 & 0.75 & 0.343 & \textbf{0.315} & \textbf{0.776} & 0.22 & \textbf{0.776} & 0.38 & 0.956 & 0.866 & \textbf{0.615} & 1.00 & 0.342 & 1.00 & 0.379 & 0.118 \\
    \midrule
    \multirow[c]{2}{*}{2pm} & 0.1 & \textbf{0.313} & 0.85 & 0.361 & 1.00 & 0.315 & 0.336 & \textbf{0.903} & 0.50 & 0.885 & 0.65 & 0.976 & 0.914 & \textbf{0.722} & 1.00 & 0.463 & 1.00 & 0.557 & 0.229 \\
     & 0.2 & \textbf{0.263} & 0.55 & 0.287 & 0.75 & 0.281 & 0.265 & \textbf{0.800} & 0.22 & 0.782 & 0.25 & 0.963 & 0.866 & \textbf{0.722} & 1.00 & 0.463 & 1.00 & 0.456 & 0.109 \\
    \bottomrule
\end{tabular}
    \end{adjustbox}
    \label{tab:solar_unbiased_asset_envelope}
\end{table}

\clearpage

\begin{table}[htb!]
    \centering
    \scriptsize
    \caption{\textbf{$D_{KS}$ scores for probabilistic revisions of solar asset generation  with de-biased DA forecasts.} Kolmogorov--Smirnov (KS) statistic of the PIT evaluated across each 1-hour interval of lead times $s \in [T_0, T_1]$ increments over the forecast horizon $\mathcal{I}_{>\tau}$ at forecast update times $\tau=$~10am, 12pm, 2pm. Lower is better.}
    \setlength{\tabcolsep}{6pt} 
    \renewcommand{\arraystretch}{1.25}
    \begin{tabular}{rrccc}
    \toprule
    \multicolumn{2}{c}{\textbf{Lead Interval}} & \multicolumn{3}{c}{\textbf{Update Time} $\boldsymbol{\tau}$} \\
    \cmidrule(lr){1-2} \cmidrule(lr){3-5} 

    $T_{0}$ & $T_{1}$ &    10am &   12pm &   2pm \\
    \midrule
      10am & 11pm & 0.022 &   --- &   --- \\
      11pm & 12pm & 0.033 &   --- &   --- \\
      12pm &  1pm & 0.028 & 0.035 &   --- \\
      1pm  &  2pm & 0.029 & 0.027 &   --- \\
      2pm  &  3pm & 0.036 & 0.039 & 0.042 \\
      3pm  &  4pm & 0.095 & 0.086 & 0.041 \\
    \bottomrule
\end{tabular}

    \label{tab:solar_asset-ks-unbiased}
\end{table}

\clearpage

\begin{table}[htb!]
    \centering
    \caption{\textbf{Tuned hyperparameters for wind zonal generation with 2DA de-biased forecasts.} Estimated optimal hyperparameters for update times $\tau=12$am, 12pm+1D, and 12pm+1D, across zonal neighborhoods $\cN_k, k=0,1,2$. The upper block list the hyperparameters fine-tuned across forecast update times $\tau$ via BO. The lower block lists the hyperparameters held fixed.}
    \begin{adjustbox}{max width=\linewidth}
        \setlength{\tabcolsep}{2.5pt} 
        \renewcommand{\arraystretch}{1.25}
\begin{tabular}{rc ccc ccc ccc}
    \toprule
    & & \multicolumn{3}{c}{$\boldsymbol{\cN^{(0)}}$}
      & \multicolumn{3}{c}{$\boldsymbol{\cN^{(1)}}$}
      & \multicolumn{3}{c}{$\boldsymbol{\cN^{(2)}}$} \\
    \cmidrule(lr){3-5} \cmidrule(lr){6-8} \cmidrule(lr){9-11}
    \textbf{Tuning Hyperparameter} & &
    \textbf{12pm} & \textbf{12am+1D} & \textbf{12pm+1D} &
    \textbf{12pm} & \textbf{12am+1D} & \textbf{12pm+1D} &
    \textbf{12pm} & \textbf{12am+1D} & \textbf{12pm+1D} \\
    \midrule
    Trust level                          & $\delta$
        & 58.2\% & 43.7\% & 0.0\%
        & 62.3\% & 45.5\% & 40.6\%
        & 44.8\% & 39.9\% & 30.8\% \\
    Trust horizon                        & $\vartheta$
        & 2~h & 2~h & 13~h
        & 2~h & 2~h & 3~h
        & 2~h & 2~h & 2~h \\
    2DA forecast similarity weight        & $\lambda_e$
        & 10   & 10 & 1.87
        & 6.78 & 10 & 10
        & 10   & 10 & 10 \\
    Seasonal similarity weight           & $\lambda_d$
        & 0.0855 & 0.0533 & 0.0004
        & 0.6193 & 0.0009 & 0.0007
        & 0.1292 & 0.0152 & 0.0001 \\
    \midrule
    Partially observed curve forgetting rate            & $\varepsilon_f$
        & \multicolumn{9}{c}{0.5~h} \\
    2DA forecast past decay rate                     & $\varepsilon_{e^-}$
        & \multicolumn{9}{c}{1~h} \\
    2DA forecast future decay rate                     & $\varepsilon_{e^+}$
        & \multicolumn{9}{c}{5~h} \\
    Partially observed curve similarity weight & $\lambda_f$
        & \multicolumn{9}{c}{10} \\
    kNN step II neighborhood size              & $\kappa$
        & \multicolumn{9}{c}{150} \\
    Probability of fusion                      & $p_{\mathrm{fusion}}$
        & \multicolumn{9}{c}{1} \\
    \bottomrule
\end{tabular}

    \end{adjustbox}
    \label{tab:wind_zone-hyperparameters-unbiased}
\end{table}

\clearpage

\begin{table}[htb!]
    \centering
    \scriptsize
    \caption{\textbf{Wind zonal generation point forecast metrics.} Root mean squared error (RMSE), mean absolute error (MAE), and mean bias error (MBE) at forecast update times $\tau \in \{12, 24, 36\}$ (i.e.,~12pm, 12am+1D, and 12pm+1D), averaged over the forecast horizon $\mathcal{I}_{>\tau}$ and expressed in capacity factor units. The point estimators are the weighted median curve $\hat{\mu}_{\text{med}}$, the deepest curve $\hat{\mu}_{\text{deep}}$ ranked by Modified Band Depth, and the focal curve $\hat{\mu}_{\text{focal}}$.  }
    \setlength{\tabcolsep}{4pt} 
    \renewcommand{\arraystretch}{1.25}

\begin{tabular}{ll rrr}
    \toprule
    $\boldsymbol{\tau}$ & \textbf{Estimator} & \textbf{RMSE} & \textbf{MAE} & \textbf{MBE} \\
    \midrule
    \multirow[c]{3}{*}{12pm} & $\hat{\mu}_{mar}(s)$ & 0.131 & 0.106 & 0.014 \\
     & $\hat{\mu}_{deep}(s)$ & 0.142 & 0.115 & 0.012 \\
     & $\hat{\mu}_{focal}(s)$ & \textbf{0.128} & \textbf{0.105} & \textbf{0.004} \\
     \midrule
    \multirow[c]{3}{*}{12am+1D} & $\hat{\mu}_{mar}(s)$ & 0.106 & \textbf{0.088} & 0.007 \\
     & $\hat{\mu}_{deep}(s)$ & 0.116 & 0.096 & 0.008 \\
     & $\hat{\mu}_{focal}(s)$ & \textbf{0.105} & \textbf{0.088} & \textbf{0.001} \\
     \midrule
    \multirow[c]{3}{*}{12pm+1D} & $\hat{\mu}_{mar}(s)$ & 0.088 & 0.074 & 0.005 \\
     & $\hat{\mu}_{deep}(s)$ & 0.094 & 0.079 & 0.007 \\
     & $\hat{\mu}_{focal}(s)$ & \textbf{0.086} & \textbf{0.073} & \textbf{0.0} \\
    \bottomrule
\end{tabular}
    \label{tab:wind_zone-error_c1-unbiased}
\end{table}

\clearpage

\begin{table}[htb!]
    \centering
    \caption{\textbf{Wind zonal generation confidence band scores with $1^{st}$ order neighborhood $\cN^{(1)}$.} Functional interval score (FIS), functional coverage score (FCS), and simultaneous coverage probability (SCP) at significance levels $\alpha \in \{0.1, 0.2\}$ and forecast update times $\tau \in \{12, 24, 36\}$ (i.e.,~12pm, 12am+1D, and 12pm+1D). The confidence regions are constructed from the adjusted functional envelope $\mathcal{R}^{\mathrm{adj}}_\alpha$, the projection envelope $\mathcal{R}^{\mathrm{prj}}_\alpha$, the unadjusted functional envelope  $\mathcal{R}^{\mathrm{fun}}_\alpha$, and the weighted marginal quantiles $\mathcal{R}^{\mathrm{mar}}_\alpha$. For each construction, we report the metric ``score'' and the fraction $k_\alpha$ of fused forecasts contained in the band.}
    \begin{adjustbox}{max width=\linewidth}
        \setlength{\tabcolsep}{2.5pt} 
        \renewcommand{\arraystretch}{1.25}

\begin{tabular}{ll cccccc cccccc cccccc}
    \toprule
    & & \multicolumn{6}{c}{$\boldsymbol{\overline{\mathrm{FIS}}({\alpha})}$}
      & \multicolumn{6}{c}{$\boldsymbol{\overline{\mathrm{FCS}}({\alpha})}$}
      & \multicolumn{6}{c}{$\boldsymbol{\overline{\mathrm{SCP}}({\alpha})}$} \\
    \cmidrule(lr){3-8} \cmidrule(lr){9-14} \cmidrule(lr){15-20}
    & & \multicolumn{2}{c}{$\boldsymbol{\mathcal{R}^{\mathrm{adj}}_{\alpha}}$}
      & \multicolumn{2}{c}{$\boldsymbol{\mathcal{R}^{\mathrm{prj}}_{\alpha}}$}
      & $\boldsymbol{\mathcal{R}^{\mathrm{fun}}_{\alpha}}$ & $\boldsymbol{\mathcal{R}^{\mathrm{mar}}_{\alpha}}$ 
      & \multicolumn{2}{c}{$\boldsymbol{\mathcal{R}^{\mathrm{adj}}_{\alpha}}$}
      & \multicolumn{2}{c}{$\boldsymbol{\mathcal{R}^{\mathrm{prj}}_{\alpha}}$}
      & $\boldsymbol{\mathcal{R}^{\mathrm{fun}}_{\alpha}}$ & $\boldsymbol{\mathcal{R}^{\mathrm{mar}}_{\alpha}}$ 
      & \multicolumn{2}{c}{$\boldsymbol{\mathcal{R}^{\mathrm{adj}}_{\alpha}}$}
      & \multicolumn{2}{c}{$\boldsymbol{\mathcal{R}^{\mathrm{prj}}_{\alpha}}$}
      & $\boldsymbol{\mathcal{R}^{\mathrm{fun}}_{\alpha}}$ & $\boldsymbol{\mathcal{R}^{\mathrm{mar}}_{\alpha}}$ \\
    \cmidrule(lr){3-4} \cmidrule(lr){5-6} \cmidrule(lr){7-7} \cmidrule(lr){8-8}
    \cmidrule(lr){9-10} \cmidrule(lr){11-12} \cmidrule(lr){13-13} \cmidrule(lr){14-14}
    \cmidrule(lr){15-16} \cmidrule(lr){17-18} \cmidrule(lr){19-19} \cmidrule(lr){20-20}
    $\boldsymbol{\tau}$ & $\boldsymbol{\alpha}$
    & score & $k_\alpha$ & score & $k_\alpha$ & score & score
    & score & $k_\alpha$ & score & $k_\alpha$ & score & score
    & score & $k_\alpha$ & score & $k_\alpha$ & score & score \\
    \midrule
    \multirow[c]{2}{*}{12pm} & 0.1 & 0.546 & 0.55 & 0.548 & 0.70 & 0.571 & \textbf{0.497} & 0.891 & 0.55 & 0.894 & 0.65 & 0.965 & \textbf{0.898} & \textbf{0.760} & 1.00 & 0.691 & 1.00 & 0.649 & 0.322 \\
    & 0.2 & 0.478 & 0.32 & 0.474 & 0.38 & 0.541 & \textbf{0.439} & 0.787 & 0.30 & \textbf{0.793} & 0.35 & 0.950 & \textbf{0.807} & \textbf{0.760} & 1.00 & 0.691 & 1.00 & 0.554 & 0.137 \\
    \midrule
    \multirow[c]{2}{*}{12am+1D} & 0.1 & 0.479 & 0.55 & 0.467 & 0.65 & 0.510 & \textbf{0.432} & 0.902 & 0.60 & \textbf{0.901} & 0.65 & 0.966 & 0.898 & \textbf{0.835} & 1.00 & 0.778 & 1.00 & 0.750 & 0.430 \\
    & 0.2 & 0.410 & 0.30 & 0.399 & 0.32 & 0.478 & \textbf{0.374} & \textbf{0.799} & 0.35 & 0.793 & 0.35 & 0.950 & 0.811 & \textbf{0.789} & 0.95 & 0.778 & 1.00 & 0.671 & 0.219 \\
    \midrule
    \multirow[c]{2}{*}{12pm+1D} & 0.1 & 0.414 & 0.70 & 0.410 & 0.90 & 0.418 & \textbf{0.389} & \textbf{0.897} & 0.70 & 0.892 & 0.75 & 0.950 & 0.875 & \textbf{0.862} & 1.00 & 0.829 & 1.00 & 0.760 & 0.541 \\
    & 0.2 & 0.358 & 0.40 & 0.349 & 0.50 & 0.384 & \textbf{0.329} & \textbf{0.789} & 0.45 & 0.783 & 0.45 & 0.925 & 0.783 & \textbf{0.760} & 0.90 & 0.769 & 0.95 & 0.668 & 0.331 \\
    \bottomrule
\end{tabular}
    \end{adjustbox}
    \label{tab:wind_zone-envelope_c1-unbiased}
\end{table}

\end{document}